\theoremstyle{plain}
\theoremstyle{thmstyleone}%
\newtheorem{theorem}{Theorem}
\newtheorem{proposition}[theorem]{Proposition}
\theoremstyle{thmstyletwo}%
\newtheorem{lemma}{Lemma}
\theoremstyle{thmstylethree}%
\begin{document}

\title{Hyperbolic Gaussian Blurring Mean Shift: A Statistical Mode-Seeking Framework for Clustering in Curved Spaces}

\author[1]{\fnm{Arghya} \sur{Pratihar}}\email{arghyapratihar24@gmail.com}

\author[2]{\fnm{Arnab} \sur{Seal}}\email{arnabseal37@gmail.com}
\author*[1]{\fnm{Swagatam} \sur{Das}}\email{swagatam.das@isical.ac.in}

\author[2]{\fnm{Inesh} \sur{Chattopadhyay}}\email{ineshchattopadhyay@gmail.com}

\affil*[1]{\orgdiv{Electronics and Communication Sciences Unit}, \orgname{Indian Statistical Institute}, \orgaddress{\city{Kolkata}, \country{India}}}

\affil[2]{\orgname{Indian Statistical Institute}, \orgaddress{\city{Kolkata}, \country{India}}}

\abstract{Clustering is a fundamental unsupervised learning task for uncovering patterns in data. While Gaussian Blurring Mean Shift (GBMS) has proven effective for identifying arbitrarily shaped clusters in Euclidean space, it struggles with datasets exhibiting hierarchical or tree-like structures. In this work, we introduce HypeGBMS, a novel extension of GBMS to hyperbolic space. Our method replaces Euclidean computations with hyperbolic distances and employs Möbius-weighted means to ensure that all updates remain consistent with the geometry of the space. HypeGBMS effectively captures latent hierarchies while retaining the density-seeking behavior of GBMS. We provide theoretical insights into convergence and computational complexity, along with empirical results that demonstrate improved clustering quality in hierarchical datasets. This work bridges classical mean-shift clustering and hyperbolic representation learning, offering a principled approach to density-based clustering in curved spaces. Extensive experimental evaluations on $11$ real-world datasets demonstrate that HypeGBMS significantly outperforms conventional mean-shift clustering methods in non-Euclidean settings, underscoring its robustness and effectiveness.}

\keywords{Clustering, Hyperbolic Geometry, Poincaré Ball, Blurring Mean-Shift.}

\maketitle

\section{Introduction} \label{introduction}

Clustering is a fundamental unsupervised learning paradigm that organizes data into groups of similar entities, thereby revealing latent patterns and structural regularities within complex datasets. It enables the discovery of meaningful associations without requiring predefined class labels. Over the years, a broad spectrum of clustering algorithms has been developed, broadly categorized into \textit{hard} and \textit{soft} clustering methods. In hard clustering, each data point is assigned to a single cluster, whereas soft clustering allows points to belong to multiple clusters to varying degrees.  

Hard clustering techniques encompass several major families: \textit{center-based}, \textit{hierarchical}, \textit{distribution-based}, and \textit{density-based} methods. Center-based approaches—such as $k$-means \citep{kmeans}, spectral clustering \citep{spectral}, and kernel $k$-means \citep{kernel} measure similarity through proximity to cluster centroids. Hierarchical approaches, including hierarchical clustering \citep{hierarchical} and agglomerative clustering \citep{agglomerative}, assume that data points exhibiting closer pairwise similarities belong to the same cluster. Distribution-based methods, such as Expectation–Maximization (EM) for Gaussian Mixture Models \citep{gaussian} and its robust variants \citep{robustgmm}, model the data through underlying probabilistic distributions. Density-based approaches like DBSCAN \citep{dbscan}, HDBSCAN \citep{hdbscan}, and Mean Shift \citep{mean} instead infer cluster structure by identifying regions of high data density in the feature space.

Most of the aforementioned algorithms require the number of clusters to be specified \textit{a priori}. Among them, the \textbf{Mean Shift} algorithm has received significant attention because of its non-parametric, \textbf{mode-seeking} nature; it adaptively locates regions of high density without assuming a fixed number of clusters or a particular cluster shape. In particular, the \textbf{Gaussian Blurring Mean Shift (GBMS)} algorithm \citep{gbms} provides a smooth iterative framework for finding the modes of an underlying density in Euclidean space by progressively moving data points toward Gaussian-weighted means. This elegant formulation has found widespread success in computer vision, image segmentation, and manifold learning, owing to its simplicity, non-parametric flexibility, and theoretical grounding in kernel density estimation.  

Despite its strengths, traditional Euclidean mean-shift variants encounter difficulties when applied to datasets that exhibit \textit{hierarchical or tree-like structures}, such as taxonomies, linguistic ontologies, or social and biological networks. Such data are more naturally represented in \textbf{hyperbolic space}, where the geometry grows exponentially and inherently accommodates hierarchical expansion. Recent advances in hyperbolic representation learning have shown that embeddings in hyperbolic space can capture hierarchical relationships far more compactly than Euclidean embeddings, often requiring fewer dimensions while maintaining minimal distortion of the underlying structure.

To overcome these limitations, we introduce the \textbf{Hype}rbolic \textbf{G}aussian $\mathbf{B}$lurring \textbf{M}ean \textbf{S}hift (HypeGBMS) algorithm, which generalizes the GBMS procedure from Euclidean space to the \textbf{Poincaré ball model} of hyperbolic geometry. The key idea is to replace Euclidean distance computations with hyperbolic distances and to adapt the mean-shift update rule through \textbf{Möbius-weighted means}, thereby ensuring consistency with the non-Euclidean geometry of the underlying space. By adjusting the curvature parameter of the Poincaré ball, HypeGBMS provides a principled and flexible mechanism for \textbf{mode-seeking clustering} in hyperbolic space while preserving the desirable density-seeking behavior of the original GBMS. A major strength of the proposed framework lies in ensuring that all iterative updates remain confined to the Poincaré ball, thus maintaining the integrity of the hyperbolic manifold throughout the clustering process.

The Poincaré ball metric naturally preserves hierarchical relationships through its \textit{exponential distance scaling} and \textit{negative curvature}, which contrast sharply with the linear geometry of Euclidean space. As distances increase exponentially with displacement from the origin, hyperbolic space efficiently represents hierarchical data; levels of a hierarchy are automatically well-separated, enhancing cluster discernibility. Furthermore, the \textit{conformal property} of the Poincaré model preserves local angles and relative positioning, thereby retaining geometric fidelity when clustering complex, high-dimensional datasets. These properties make hyperbolic geometry especially well-suited for discovering latent hierarchies and non-Euclidean relationships where conventional Euclidean methods often fail. \\
\noindent
\textbf{Contributions.} The main contributions of this work are as follows:
\begin{enumerate}
    \item We propose \textbf{HypeGBMS}, a hyperbolic extension of Gaussian Blurring Mean Shift that embeds data within the Poincaré ball and ensures that all mode updates remain consistent with its geometric constraints.  
    \item We demonstrate that this formulation effectively captures hierarchical data structures and improves clustering performance in domains where Euclidean methods are inadequate.  
    \item We provide a detailed convergence and consistency analysis, along with a computational complexity study, and include supporting proofs and derivations in the Appendix.  
\end{enumerate}
In summary, this study bridges the conceptual divide between classical \textbf{mode-seeking} mean-shift clustering and modern \textbf{hyperbolic representation learning}, presenting a unified statistical framework for clustering in both flat and curved geometries.

\section{Related Works} \label{related}

Clustering methods have evolved across diverse algorithmic paradigms, with mean-shift and its extensions forming an important class of non-parametric density-based techniques. The \textit{mean-shift algorithm} estimates modes of an underlying probability density function by iteratively shifting data points toward regions of higher density. Carreira-Perpiñán \citep{carreira2015review} provided a comprehensive review of mean-shift algorithms, including Gaussian Blurring Mean-Shift (GBMS), hierarchical variants, and scale-space clustering approaches that extend its applicability across multiple domains. To improve computational efficiency in high-dimensional settings, Chakraborty et al. \citep{wbms} proposed feature-weighted GBMS for fast nonparametric clustering, demonstrating scalability without sacrificing clustering quality.

On a different note, hyperbolic geometry has gained traction for representing data with hierarchical or tree-like structures. Unlike Euclidean embeddings, which suffer from distortion when encoding exponentially expanding hierarchies, hyperbolic models provide a natural geometric prior. Earlier work, such as the Hierarchically Growing Hyperbolic SOM \citep{ontrup2006large}, explored hyperbolic topology for large-scale data exploration. More recently, hyperbolic graph embeddings have been applied to biological and neurological networks, where hierarchical structure is intrinsic \citep{baker2024hyperbolic}. Hyperbolic kernels and Poincaré embeddings have also been introduced to improve representation learning by aligning learning objectives with hyperbolic curvature \citep{fang2023poincare}.

Building upon this foundation, several methods have bridged \textit{clustering} with \textit{hyperbolic embeddings}. Chami et al. \citep{chami2020trees} developed hyperbolic hierarchical clustering techniques that recover tree-like structures directly from embeddings, enabling robust inference of latent hierarchies. Similarly, Zhao et al. \citep{zhao2020unsupervised} explored unsupervised embedding of hierarchical structures with variational autoencoders, highlighting how hyperbolic geometry enhances separability of nested clusters. Applications of hyperbolic contrastive learning further demonstrate the efficiency of clustering in negatively curved spaces \citep{wei2022hyperbolic}.

These advances motivate our proposed \textbf{HypeGBMS}, which integrates the density-seeking behavior of GBMS with the geometric priors of hyperbolic space. While GBMS provides smooth convergence to cluster modes in Euclidean settings, it fails to capture hierarchical structures. By embedding data into the Poincaré ball and redefining the update step through M\"obius weighted means, HypeGBMS ensures that cluster assignments remain consistent with the geometry of the Poincaré ball. In doing so, our approach unifies density-based clustering with modern hyperbolic representation learning, addressing the limitations of classical mean-shift algorithms in hierarchical domains.

\section{Preliminaries} \label{preliminaries}
We outline here the essential mathematical foundations of hyperbolic geometry relevant to this study. For a more comprehensive and rigorous exposition, we refer to the classical treatments in \citep{docarmo} and \citep{spivak}.

\subsection{Hyperbolic space} A Hyperbolic space, denoted by $\mathbb{H}^{n}$, is a non-Euclidean space of dimension $n$, defined as a simply connected Riemannian manifold with constant sectional curvature $-1$. The Killing–Hopf theorem \citep{lee2006riemannian} establishes that any two such spaces are isometric, implying that all realizations of $\mathbb{H}^{n}$ are equivalent up to isometry.

\begin{enumerate}

\item \textbf{Poincaré Disc Model:}  
The Poincaré disc provides a representation of hyperbolic space in which all points lie within the open unit ball in $\mathbb{R}^{n}$. In this model, geodesics correspond either to Euclidean diameters of the disc or to circular arcs orthogonal to the boundary. For any two points $\mathbf{X}$ and $\mathbf{Y}$ satisfying $\|\mathbf{X}\|, \|\mathbf{Y}\| < 1$, the hyperbolic distance is given by:

\begin{equation}
\label{eq:poincare_metric}
d(\mathbf{X},\mathbf{Y}) :=
\cosh^{-1}\left(1 + \frac{2\|\mathbf{X}-\mathbf{Y}\|^{2}}{(1 - \|\mathbf{X}\|^{2})(1 - \|\mathbf{Y}\|^{2})}\right).
\end{equation}

\item \textbf{Hyperboloid Model:}  
The Hyperboloid model is also referred to as the Minkowski or Lorentz model. This formulation represents hyperbolic space as the forward sheet $S^{+}$ of a two-sheeted hyperboloid embedded in $(n+1)$--dimensional Minkowski space. For points $\mathbf{X} = (x_{0}, x_{1}, \ldots, x_{n})$ and $\mathbf{Y} = (y_{0}, y_{1}, \ldots, y_{n})$ lying on $S^{+}$, the hyperbolic distance is defined as:

\begin{equation}
\label{eq:hyperboloid_metric}
d(\mathbf{X},\mathbf{Y}) := \cosh^{-1}\big(-M(\mathbf{X},\mathbf{Y})\big),
\end{equation}
where $M$ denotes the Minkowski inner product, defined by:
$$
    M \left(\left(x_{0}, x_{1}, \ldots, x_{n}\right),\left(y_{0}, y_{1}, \ldots, y_{n}\right)\right) 
$$
$$
    \qquad \qquad =-x_{0} y_{0}+\sum_{i=1}^{n} x_{i} y_{i} .
$$
\end{enumerate}
\subsection{Gyrovector Spaces}

The Gyrovector space framework offers a sophisticated algebraic structure suited for hyperbolic geometry, playing a role comparable to that of vector spaces in Euclidean geometry \citep{ungargyrovector}. We define
\[
\mathbb{D}_{c}^{p} := \left\{ \mathbf{v} \in \mathbb{R}^{p} \mid -c\|\mathbf{v}\|^{2} < 1 \right\},
\]
for curvature parameter \(c < 0\). In the special case \(c = 0\), this reduces to the Euclidean space \(\mathbb{R}^{p}\). When \(c < 0\), \(\mathbb{D}_{c}^{p}\) corresponds to an open ball with radius \(1/\sqrt{-c}\), and choosing \(c = -1\) recovers the unit ball \(\mathbb{D}^{p}\).

\begin{figure*}[h]
    \centering
    \includegraphics[width=0.85\linewidth]{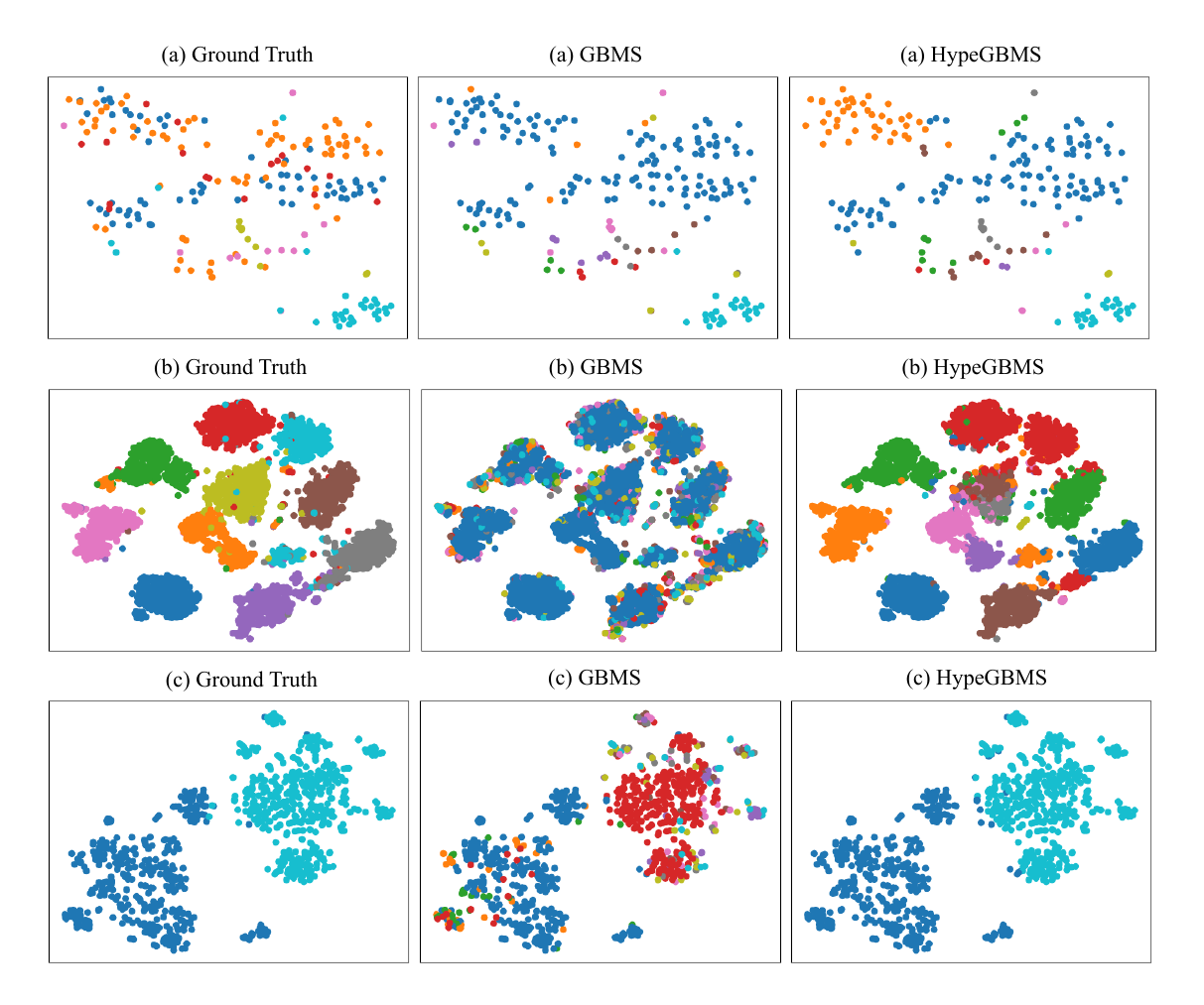}
    \caption{The t-SNE Visualisation of real datasets (a) Glass, (b) ORHD, (c) Phishing URL, respectively for GBMS and HypeGBMS (ours) clustering methods.}
    \label{fig:placeholder}
\end{figure*}

The Poincaré Disc Model is especially advantageous computationally, as most of its geometric operations admit closed-form expressions. In this setting, we briefly introduce Möbius gyrovector addition and Möbius scalar multiplication. Since isometries preserve the intrinsic geometric structure of hyperbolic spaces, the underlying additive and multiplicative algebraic frameworks remain invariant under model transitions and may therefore be transported isomorphically across distinct representations of hyperbolic geometry without altering their essential nature \citep{ratcliffehyperbolic}. To demonstrate this, we establish an explicit isometric correspondence between the Poincaré Disc Model and the hyperboloid model.

Let \(\mathbb{D}^n = \{ x \in \mathbb{R}^n : \|x\| < 1 \}\) denote the Poincaré Disc model. The corresponding hyperboloid model is given by
\[
\mathbb{H}^n = \{ x \in \mathbb{R}^{n+1} : -x_0^2 + x_1^2 + \cdots + x_n^2 = -1,\ x_0 > 0 \}.
\]
An isometric map between these two models is defined by
\[
\phi : \mathbb{D}^n \to \mathbb{H}^n,\qquad
x \mapsto \left( \frac{1 + \|x\|^2}{1 - \|x\|^2},\ \frac{2x}{1 - \|x\|^2} \right).
\]
This mapping preserves hyperbolic distance and induces the same Riemannian metric structure. Consequently, for the remainder of this work, we adopt the Poincaré Disc model as our primary representation \citep{hypenn,numericalstab}.\\

\noindent
\textbf{M\"obius addition.} The M\"obius addition of $\mathbf{v}$ and $\mathbf{w}$ in $\mathbb{D}_{c}^{p}$ is defined as :
{\small
\begin{equation}\label{eq: addition}
    \mathbf{v} \oplus_{c} \mathbf{w}:=\frac{\left(1-2 c\langle \mathbf{v}, \mathbf{w}\rangle-c\|\mathbf{w}\|^{2}\right) \mathbf{v}+\left(1+c\|\mathbf{v}\|^{2}\right) \mathbf{w}}{1-2 c\langle \mathbf{v}, \mathbf{w}\rangle+c^{2}\|\mathbf{v}\|^{2}\|\mathbf{w}\|^{2}}.
\end{equation}
}
In particular, when $c=0$, this conforms with the Euclidean addition of two vectors in $\mathbb{R}^{p}$.
However, it satisfies $\mathbf{v} \oplus_{c} \mathbf{0}=\mathbf{0} \oplus_{c} \mathbf{v}= \mathbf{v}$. Moreover, for any $\mathbf{v}, \mathbf{w} \in \mathbb{D}_{c}^{p}$, we have $(-\mathbf{v}) \oplus_{c} \mathbf{v}= \mathbf{v} \oplus_{c}(-\mathbf{v})=\mathbf{0}$ and $(-\mathbf{v}) \oplus_{c}\left(\mathbf{v} \oplus_{c} \mathbf{w}\right)= \mathbf{w}$.

\noindent
\textbf{M\"obius scalar multiplication.} For $c<0$, the M\"obius scalar multiplication of $\mathbf{v} \in \mathbb{D}_{c}^{p} \backslash\{\boldsymbol{0}\}$ by a real number $\lambda \in \mathbb{R}$ is defined as,
\begin{equation} \label{eq: mult}
    \lambda \otimes_{c} \mathbf{v}:=(1 / \sqrt{-c}) \tanh \left(\lambda \tanh ^{-1}(\sqrt{-c}\|\mathbf{v}\|)\right) \frac{\mathbf{v}}{\|\mathbf{v}\|}.
\end{equation}
and $\lambda \otimes_{c} \mathbf{0}:=\mathbf{0}$. As the parameter $c$ approaches zero, the expression reverts to conventional Euclidean scalar multiplication: $\lim_{c \rightarrow 0} \lambda \otimes_{c} \mathbf{v}=\lambda \mathbf{v}$. \\
The Hyperbolic Distance function on $\left(\mathbb{D}_{c}^{p}, g^{c}\right)$ is given by,
$$
    d_{\mathbb{H}}(\mathbf{v}, \mathbf{w})=(2 / \sqrt{-c}) \tanh ^{-1}\left(\sqrt{-c}\left\|-\mathbf{v} \oplus_{c} \mathbf{w}\right\|\right) .
$$
\textbf{Riemannian Log-Exp Map.} Given two points $x,y \in \mathbb{D}_{c}^{p}$, the logarithmic map at $x$ applied to $y$ is denoted as:
{\small
\begin{equation} \label{eq:logmap}
\log_{\mathbf{x}}(\mathbf{y}) = \frac{2}{\lambda_{\mathbf{x}}} \cdot \frac{\tanh^{-1}\left( \sqrt{-c} \cdot \| -\mathbf{x} \oplus_c \mathbf{y} \| \right)}{\| -\mathbf{x} \oplus_c \mathbf{y} \|} \cdot \left( -\mathbf{x} \oplus_c \mathbf{y} \right),
\end{equation}}
where ${\lambda_{\mathbf{x}}}= \frac{2}{1- \lambda{\|x\|}^2}$ and $\bigoplus_c$ represents the M\"obius gyrovector addition as Equation \eqref{eq: addition}.\\
Given $x\in \mathbb{D}_{c}^{p}$, $v \in T_x(\mathbb{D}_{c}^{p})$, the exponential map is defined as,

{\small
\begin{equation} \label{eq:expmap}
\exp_{\mathbf{x}}(\mathbf{v}) = \mathbf{x} \oplus_c \left( \tanh\left( \frac{\lambda_{\mathbf{x}} \cdot \sqrt{-c} \cdot \|\mathbf{v}\|}{2} \right) \cdot \frac{\mathbf{v}}{\sqrt{-c} \cdot \|\mathbf{v}\|} \right).
\end{equation}
As the parameter $c$ approaches $0$, 
\begin{equation} \label{eq:eucexp}
    \lim_{c \rightarrow 0} \exp_{\mathbf{x}}(\mathbf{v})=\mathbf{(x+v)},
\end{equation}}
\begin{equation} \label{eq:euclog}
\lim_{c \rightarrow 0} \log_{\mathbf{x}}(\mathbf{y})=\mathbf{(y-x)},
\end{equation}
which are the exponential and logarithmic maps in Euclidean space.

\section{HypeGBMS: our proposed method} \label{hypegbms}
We discuss our proposed method HypeGBMS in detail in Algorithm \ref{algo_hypegbms}. The updated steps in our algorithm are marked in blue.

\noindent
\begin{algorithm}[!ht]
\caption{\footnotesize Gaussian Blurring Mean-Shift (GBMS) Algorithm}
\label{alg:gbms}
\scriptsize
    \begin{algorithmic}[1]
        \STATE \textbf{Input:} Dataset $\mathbf{X} \in \mathbb{R}^{N \times p}$, bandwidth parameter $\sigma > 0$, convergence threshold $\tau > 0$, maximum iteration $T$.

        \vspace{0.5em}

        \STATE \textbf{Output:} Cluster assignments ${C_1, C_2, C_3, \cdots C_K.}$

        \vspace{0.5em}
        \let \oldnoalign \noalign
        \let \noalign \relax
        \let \noalign \oldnoalign
        \vspace{0.5em}
        \STATE \textbf{1. Initialize:} 
        \STATE \hspace{1em} Set iteration counter $t \gets 0$. 
        \STATE \hspace{1em} Initialise points $\mathbf{x}_i^{(0)} = \mathbf{x}_i$ for $i = 1, \dots, N$.
        \vspace{0.5em}
        \STATE \textbf{2. For} $t = 1$ to $T$:
        \STATE \hspace{1em} Compute pairwise squared Euclidean distances:
        \STATE \hspace{2em} $(d_{ij}^{(t)})^2 = \|\mathbf{x}_i^{(t)} - \mathbf{x}_j^{(t)}\|^2$
        \vspace{0.5em}
        \STATE \hspace{1em} Compute the Gaussian kernel weights:
        \STATE \hspace{2em} $w_{ij}^{(t)} \leftarrow \exp\left(-\dfrac{(d_{ij}^{(t)})^2}{2\sigma^2}\right)$
        
        \vspace{0.5em}
        \STATE \hspace{1em} Normalise the weights row-wise:
        \STATE \hspace{2em} $\bar{w}_{ij}^{(t)} = \dfrac{w_{ij}^{(t)}}{\sum_k w_{ik}^{(t)}}$
        \vspace{0.5em}
        \STATE \hspace{1em} Update each data point using weighted mean:
        \STATE \hspace{2em} $\mathbf{x}_i^{(t+1)} = \sum_{j=1}^N \bar{w}_{ij}^{(t)} \mathbf{x}_j^{(t)}$
        \vspace{0.5em}
        \STATE \hspace{1em} Compute average movement across all points:
        \STATE \hspace{2em} $\Delta^{(t)} = \dfrac{1}{N} \sum_{i=1}^N \|\mathbf{x}_i^{(t+1)} - \mathbf{x}_i^{(t)}\|$
        \vspace{0.5em}
        \STATE \hspace{1em} Check Convergence:
        \STATE \hspace{2em} If $\Delta^{(t)} < \tau$, \textbf{break}. \\
        \vspace{0.5em}
        \STATE \textbf{3. Cluster Assignment:}
        \STATE \hspace{1em} Construct adjacency matrix from final $\mathbf{X}^{(T)}$.
        \STATE \hspace{1em} Identify connected components as clusters.
    \end{algorithmic}
\end{algorithm}

\noindent \textbf{Step 1: Projecting onto the Poincaré Ball.}
We consider the dataset $X \in \mathbb{R}^{N \times p}$. We embed the data set into the hyperbolic space, here the Poincaré Ball model of radius $\frac{1}{\sqrt{-c}}, c <0 $ using the Riemannian Exponential map mentioned in Equation \eqref{eq:expmap}. We obtain $\tilde{\mathbf{X}}= \{\tilde{\mathbf{x}_1}, \tilde{\mathbf{x}_2}, \cdots, \tilde{\mathbf{x}_N}\} \in \mathbb{D}_{c}^{p}$, where $\mathbb{D}_{c}^{p}:=\left\{\mathbf{x} \in \mathbb{R}^{p} \mid -c\|\mathbf{x}\|^{2}<1\right\}$.

\begin{algorithm}[!ht]
\caption{\footnotesize Hyperbolic Gaussian Blurring Mean-Shift (HypeGBMS) Algorithm}
\label{algo_hypegbms}
\scriptsize
    \begin{algorithmic}[1]
        \STATE \textbf{Input:} Dataset $\mathbf{X} \in \mathbb{R}^{N \times p}$, bandwidth parameter $\sigma > 0$, convergence threshold $\varepsilon > 0$, minimum cluster separation $\delta > 0$, maximum iteration $T$, curvature parameter $c < 0$.
        \vspace{0.5em}
        \STATE \textbf{Output:} Cluster assignments ${C_1, C_2, C_3, \cdots C_K.}$
        \let \oldnoalign \noalign
        \let \noalign \relax
        \let \noalign \oldnoalign
        \vspace{0.5em}
         
        \STATE \textbf{1. Initialize:} 
        \STATE \hspace{1em} Set iteration counter $t \gets 0$.
        \textcolor{blue}{
        \STATE \hspace{1em} $\tilde{\mathbf{X}} \gets \text{project onto Poincaré ball}  (\mathbf{X}, c)$}
        \vspace{0.5em}
        \STATE \textbf{2. For} $t = 1$ to $T$:
        \STATE \hspace{1em} \textcolor{blue}{Compute pairwise squared hyperbolic distances:
        \STATE \hspace{2em} $(d_{ij}^{(t)})^{2} = d_{\mathbb{H}}^2(\tilde{\mathbf{x}}_i^{(t)}, \tilde{\mathbf{x}}_j^{(t)})$}
        \vspace{0.5em}
        \STATE \hspace{1em}Compute the Gaussian kernel weights
        \STATE \hspace{2em} $w_{ij}^{(t)} \leftarrow\exp\left(-\dfrac{(d_{ij}^{(t)})^{2}}{2\sigma^2}\right)$
        \vspace{0.5em}
        \STATE \hspace{1em}Normalise the weights row-wise:
        \STATE \hspace{2em} $\bar{w_{ij}}^{(t)} = \dfrac{{w_{ij}}^{(t)}} {\sum_k {w_{ik}}^{(t)}}$ \\
        \vspace{0.5em}\textcolor{blue}{
        \STATE \hspace{1em}Compute M\"obius Weighted Mean: \\
        \STATE \hspace{2em} $\tilde{\mathbf{x}_i}^{(t+1)}=\bigoplus_{j=1}^N \bar{w}_{ij}^{(t)}\otimes_c\tilde{\mathbf{x}_j}^{(t)}$} \\ 
        \vspace{0.5em}\textcolor{blue}{
        \STATE \hspace{1em}Compute average movement:
        \STATE \hspace{2em} $\Delta^{(t)} = \frac{1}{N} \sum_{i=1}^N \left\| \log_{\tilde{\mathbf{x}}_i^{(t)}} \left( \tilde{\mathbf{x}}_i^{(t+1)} \right) \right\|$}
        \vspace{0.5em}
        \STATE \hspace{1em} \textcolor{blue}{Check Convergence
        \STATE \hspace{2em} If $\Delta^{(t)} < \varepsilon$},
        \textbf{break}.\\
        \vspace{0.5em}
        \STATE \textbf{3. Cluster Assignment:} \textcolor{blue}{
        \STATE \hspace{1em} Construct adjacency matrix from final $\tilde{\mathbf{X}}^{(T)}$.}
        \STATE \hspace{1em} Identify connected components as clusters.
    \end{algorithmic}
\end{algorithm}

\noindent
\textbf{Step 2: Iterative Update Loop (for $t = 1$ to $T$)}

\noindent
\textbf{Computing Pairwise Squared Hyperbolic Distances:}
We compute the pairwise squared hyperbolic distances as, 
\begin{equation} \label{eq:distance}
    (d_{ij}^{(t)})^{2} = d_{\mathbb{H}}^2(\tilde{\mathbf{x}}_i^{(t)}, \tilde{\mathbf{x}}_j^{(t)})
\end{equation}
where the poincar\'e ball distance is given by:
\begin{equation} \label{eq:hyperbolic_dist}
d_{\mathbb{H}}(\mathbf{x}, \mathbf{y}) = \frac{2}{\sqrt{-c}} \tanh^{-1}\left( \sqrt{-c} \| -\mathbf{x} \oplus_c \mathbf{y} \| \right)
\end{equation}

\noindent
\textbf{Computing Gaussian Kernel Weights:} We construct the weight matrix, $\textbf{W}= (w_{ij})_{N\times N}$ where the entries are defined as,
\begin{equation} \label{eq:weight}
    w_{ij}^{(t)} = \exp\left(-\frac{(d_{ij}^{(t)})^{2}}{2\sigma^2}\right),
\end{equation}
where, $d_{ij}$ are defined as Equation \eqref{eq:distance}, $\sigma$ is the bandwidth parameter.

\noindent
\textbf{Normalize the weights Row-wise:} Now, we normalize the weight matrix, $W$ as
\begin{equation} \label{eq:normalise}
    \bar{w}_{ij}^{(t)} = \frac{{w_{ij}}^{(t)}}{\sum_{k=1}^n {w_{ik}}^{(t)}}
\end{equation}

\noindent
\textbf{Computing Möbius Weighted Mean for each point:} We define the m\"obius weighted mean as, $\bigoplus_{j=1}^N \bar{w}_{ij}^{(t)}\otimes_c\tilde{\mathbf{x}_j}^{(t)}$, where $\bigoplus_j$ is M\"obius addition defined as \ref{eq: addition} and $\otimes_c$ is the M\"obius scalar multiplication as \ref{eq: mult}. 

\begin{equation} \label{eq:mob_mean}
\tilde{\mathbf{x}}_i^{(t+1)} \leftarrow \text{M\"obiusWeightedMean}(\{ \tilde{\mathbf{x}}_j^{(t)} \}, \{ {\bar{w}_{ij}}^{(t)} \}, c)
\end{equation}


\noindent
\textbf{Computing Average Movement:} We define the average movement as
\begin{equation} \label{eq:logstop}
\Delta^{(t)} = \frac{1}{N} \sum_{i=1}^N \left\| \log_{\tilde{\mathbf{x}}_i^{(t)}} \left( \tilde{\mathbf{x}}_i^{(t+1)} \right) \right\|,
\end{equation}
where the logarithm map is defined as \ref{eq:logmap}.

\noindent
\textbf{Checking Convergence:}\\
If $\Delta^{(t)} < \varepsilon$ or $t = T$, then we stop the iteration.

\noindent
\textbf{Step 3: Cluster Assignment.}
Let $\tilde{\mathbf{X}}^{(T)}=\{\tilde{\mathbf{x}}_1^{(T)},\dots,\tilde{\mathbf{x}}_N^{(T)}\}$. We construct an adjacency matrix $A \in \{0,1\}^{N \times N}$ defined by
\[
A_{ij} =
\begin{cases}
1, & \text{if } d_{\mathbb{H}}\left(\tilde{\mathbf{x}}_i^{(T)}, \tilde{\mathbf{x}}_j^{(T)}\right) \le \delta, \\[6pt]
0, & \text{otherwise},
\end{cases}
\]
where $d_{\mathbb{H}}(\cdot,\cdot)$ is the hyperbolic distance and $\delta > 0$ is a  threshold defined as minimum cluster separation. Interpreting $A$ as the adjacency matrix of an undirected graph $G=(V,E)$ with vertex set $V=\{1,\dots,N\}$ and edges $(i,j)\in E$ whenever $A_{ij}=1$, the final clustering is obtained by extracting its connected components. Thus, the resulting cluster set is $\mathbf{C}=\{C_1,\dots, C_K\}$, where each $C_k$ corresponds to one connected component and $K$ denotes the number of such components.

\section{Stopping Criteria for HypeGBMS} \label{stopping criteria}
The HypeGBMS algorithm terminates when the average movement drops below a certain tolerance limit, $\varepsilon = 10^{-5}$, that is:
$$
  \Delta^{(t)}< \varepsilon,
$$ 
where $\Delta^{(t)}$ is defined as Equation \eqref{eq:logstop}.
In iteration $t$, the update $\{\Delta_i^{(t)}= \left\|\log_{\tilde{{x}}_i^{(t)}}\tilde{{x}}_i^{(t+1)}\right\|\}$ for $i= 1,2, \cdots N$ takes at most $K$ different values (for $K$ clusters) and a histogram of $\{\Delta_i^{(t)}\}_{i=1}^N$ has $K$ or fewer nonempty bins. 
\noindent
In summary, our stopping criterion is:
\begin{equation}
  {\Delta^{(t)}} < \varepsilon
  \quad \text{OR} \quad
  |H^{(t+1)} - H^{(t)}| < \gamma,
\end{equation}
where the Shannon entropy is given by:
\[
  H^{(t)} = -\sum_{k=1}^S p_k \log p_k,
\]
where $p_k$ is the normalized bin counts such that $\sum_{k=1}^S p_k = 1$. The number of bins $S$ should be larger than the number of clusters but smaller than $N$; we use $S = 0.9N$. Checking this criterion requires the time complexity of $\mathcal{O}(N)$. 

\section{Computational Complexity Analysis of HypeGBMS} \label{complexity}

In this section, we will present the step-by-step complexity analysis of our proposed method, HypeGBMS.

\noindent
\textbf{Step 1: Projecting onto the Poincaré Ball.} We consider the dataset $X \in \mathbb{R}^{N \times p}$ and project onto the Poincar\'e Ball. This step requires the time complexity of 
$\mathcal{O}(N \times p)$.

\noindent
\textbf{Step 2: Iteration Loop ($t = 1$ to $T$)}

\noindent
\textbf{Computing Pairwise Squared Hyperbolic Distances.}
We compute the hyperbolic distance between every pair of points by Equation \eqref{eq:distance}. Each hyperbolic distance computation involves dot products in the $p$-dimensional space that requires the time complexity of $\mathcal{O}(p)$. So, for a total of $N^2$ pairs, the time complexity is $\mathcal{\mathcal{O}}(N^2 \times p)$.

\noindent
\textbf{Computing Gaussian Kernel Weights.}
We apply the exponential function to $N^2$ distance values. The time complexity for this step is
$\mathcal{\mathcal{O}}(N^2)$.
For each of $N$ rows: sum $N$ weights, then dividing $N$ weights to normalise requires the time complexity of $\mathcal{O}(N^2)$.

\noindent
\textbf{Computing Möbius Weighted Mean for each point.}
Each point is updated using the weighted mean of $N$ points. Each Möbius weighted mean computation in the Poincaré disc model involves Möbius addition and Möbius scalar multiplication, which require the time complexity of $\mathcal{\mathcal{O}}(N \times p)$ for each point. The total complexity for this step becomes $\mathcal{\mathcal{O}}(N^2 \times p)$.

\begin{figure*}
    \centering
    \includegraphics[width= \linewidth]{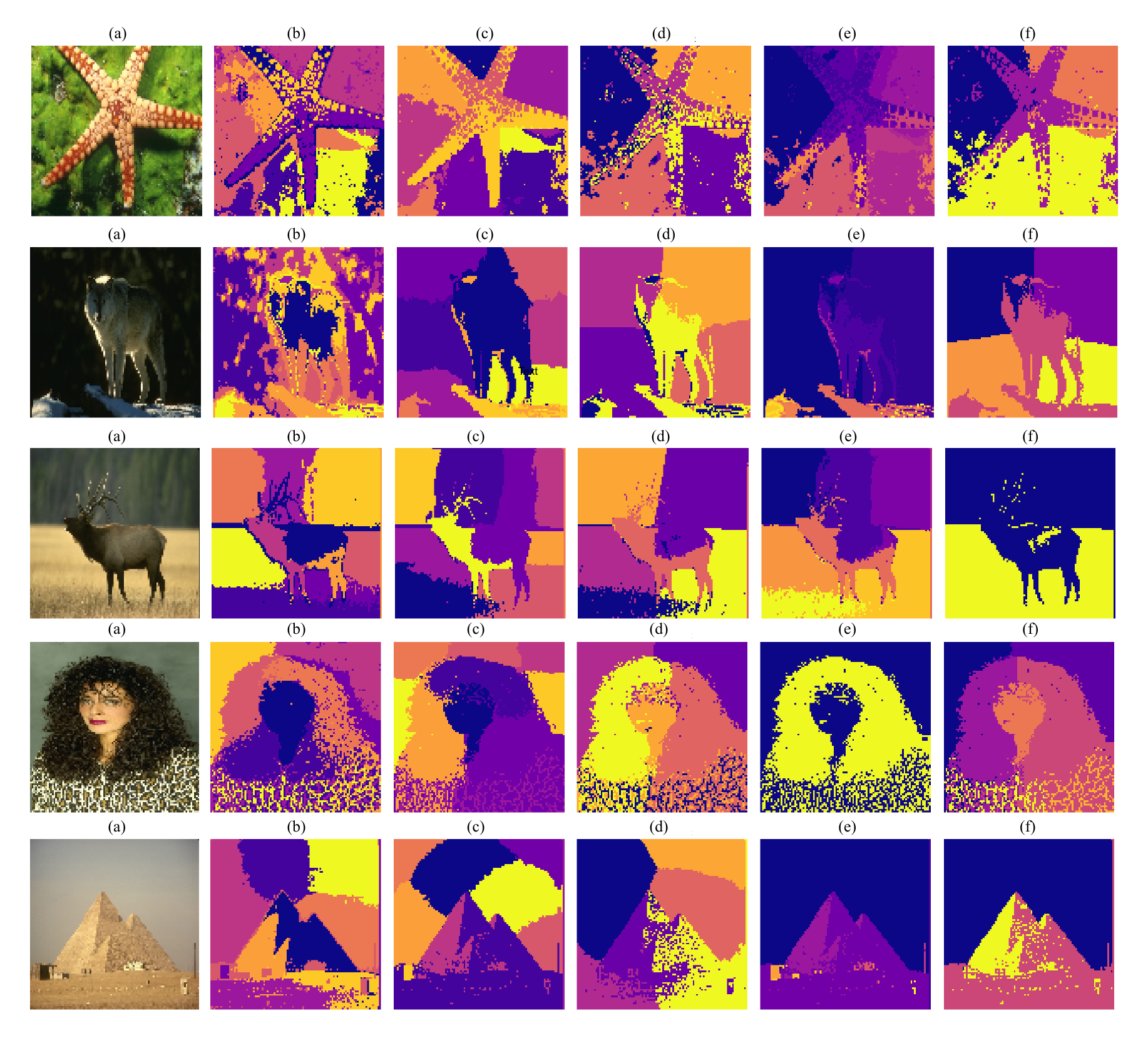}
    \caption{Qualitative results on the BSDS500 dataset (a) Image, (b) GMS, (c) Gridshift, (d) QuickMeanshift, (e) GBMS, (f) HypeGBMS(ours).}
    \label{fig:bsds500}
\end{figure*}

\noindent
\textbf{Computing the Average Movement.}
We compute the logarithmic map for $N$ points, which requires $N \times \mathcal{\mathcal{O}}(p)$. So, the time complexity for this step in each iteration becomes $\mathcal{O}(N \times p)$. \\
\noindent
\textbf{Step 3: Cluster Assignment.} After convergence, from the final $\tilde{\mathbf{X}}^{(T)}$, computing pairwise distances again and thresholding yields $\mathcal{\mathcal{O}}(N^2 \times p)$.
Interpreting the adjacency matrix as a dense graph with $N$ nodes and $\mathcal{O}(N^2)$ edges costs $\mathcal{O}(N^2)$. Thus, the cost in Step~3 is $\mathcal{\mathcal{O}}(N^2 \times p)$. \\
Therefore, the overall time complexity for our proposed algorithm is $\mathcal{\mathcal{O}}(T \times N^2 \times p).$ \\

\noindent
\textbf{Remark on Scalable Approximations.}
Although the worst-case computational cost of HypeGBMS is 
$\mathcal{\mathcal{O}}(T N^2 p)$ due to the pairwise hyperbolic distance evaluations, this quadratic scaling becomes prohibitive for large-scale datasets. Several well-established approximation techniques from large-scale kernel methods and nearest-neighbor search can be introduced to reduce the effective cost. For example, tree-based search structures such as KD-trees and ball trees \citep{friedman1977kd,omohundro1989balltree} 
and their modern descendants (e.g., FLANN \citep{muja2009flann}) provide sublinear approximate nearest-neighbor queries in general metric spaces, including negatively curved ones. Similarly, kernel 
approximation schemes such as the Nystr{\"o}m method 
\citep{williams2001nystrom} and random feature expansions 
\citep{rahimi2007randomfeatures} reduce the effective kernel complexity from $\mathcal{O}(N^{2})$ to near-linear in $N$. These ideas parallel acceleration strategies used in fast density-based clustering methods such as QuickShift++ \citep{jiang2018quickshiftpp}. Adapting 
such approximation frameworks to the hyperbolic setting offers a promising direction for developing scalable variants of HypeGBMS.

\section{Convergence Analysis of HypeGBMS} \label{convergence}

This section provides a rigorous convergence analysis of the proposed HypeGBMS algorithm on the Poincaré ball model $(\mathbb{D}_c^p,g_c)$ of constant curvature $c<0$. 
Our analysis explicitly distinguishes the Möbius weighted mean used in Algorithm \ref{algo_hypegbms} from the Riemannian Fréchet mean required for standard convergence proofs and establishes curvature–radius dependent error bounds ensuring asymptotic convergence.

Let $(\mathbb{D}_c^p,g_c)$ be the $p$-dimensional Poincaré ball with sectional curvature $c=-\kappa<0$ and hyperbolic distance $d_{\mathbb{H}}$. The Möbius weighted mean ($\bar{x}_M$) used in Algorithm \ref{algo_hypegbms} is defined as Equation \eqref{eq:mob_mean}. The weighted Riemannian Fréchet mean is defined as: 
\begin{equation}
\bar{x}_F = \arg\min_{z\in\mathbb{D}_c^p}
F(z) := \arg\min_{z\in\mathbb{D}_c^p} \sum_{j=1}^N w_{ij}\, d_{\mathbb{H}}^2(z,x_j).
\end{equation}
\begin{proposition}
\label{prop:mobius-frechet}
Let $\{x_j\}_{j=1}^N\subset \mathbb{D}_c^p$ lie within a geodesic ball $B_\mathbb{H}(x_0,r)$ of radius $r<1$ and curvature magnitude $\kappa=-c>0$. 
Then there exists a constant $C_p>0$, depending only on dimension $p$, such that
\begin{equation}
\label{eq:mobius_error}
d_{\mathbb{H}}(\bar{x}_M,\bar{x}_F) \le C_p\,\kappa\, r^3.
\end{equation}
Moreover, $\bar{x}_M,\bar{x}_F\in B_\mathbb{H}(x_0,r)$.
\end{proposition}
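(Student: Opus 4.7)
The plan is to reduce everything to a tangent-space computation at the Karcher-like base point $x_0$ and then compare both notions of mean against a common Euclidean reference, the linear weighted average of the lifted tangent vectors. Throughout I will use normal coordinates at $x_0$ and set $v_j := \log_{x_0}(x_j)$, so $\|v_j\| \le r$ by the hypothesis $x_j \in B_\mathbb{H}(x_0,r)$. The reference point will be the Euclidean weighted average $\bar v_E := \sum_j \bar w_{ij}\, v_j$, and the strategy is to show that both $\log_{x_0}(\bar x_M)$ and $\log_{x_0}(\bar x_F)$ differ from $\bar v_E$ by vectors of norm at most $C_p\,\kappa\,r^3$; the triangle inequality then yields \eqref{eq:mobius_error}.

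First I would handle the Fréchet side. Standard small-ball expansions of the squared hyperbolic distance in normal coordinates give
\begin{equation*}
d_{\mathbb{H}}^2(z, x_j) \;=\; \|u - v_j\|^2 \;+\; \tfrac{\kappa}{3}\bigl\langle R(u,v_j)v_j,\,u\bigr\rangle \;+\; O(\kappa^2 r^4 \|u\|^2) \;+\; O(\kappa \|u\|\|v_j\|^3),
\end{equation*}
where $u=\log_{x_0}(z)$ and $R$ is the (constant) curvature tensor of $\mathbb{D}_c^p$. Differentiating $F$ in $u$, the leading gradient vanishes at $u = \bar v_E$ by the definition of the Euclidean weighted average, while the curvature term contributes a residual gradient of size $O(\kappa r^3)$. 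Convexity of $d_{\mathbb{H}}^2(\cdot,x_j)$ on a geodesic ball of radius $r<1$ (which follows from the negative curvature bound and $r$ being below the relevant Hessian-positivity radius) gives uniform strong convexity of $F$, so the distance from the critical point $\bar v_E$ of the Euclidean problem to the true minimiser $\log_{x_0}(\bar x_F)$ is bounded by $C_p'\,\kappa\, r^3$.

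Next I would handle the Möbius side by Taylor-expanding $\oplus_c$ and $\otimes_c$ in the curvature magnitude $\kappa = -c$. A direct expansion of \eqref{eq: addition} gives $v \oplus_c w = v + w + \kappa\,Q(v,w) + O(\kappa^2)$ with $Q$ bilinear of order $\|v\|\|w\|(\|v\|+\|w\|)$, and from \eqref{eq: mult} one has $\lambda \otimes_c v = \lambda v + O(\kappa \lambda\|v\|^3)$. Left-folding the iterated Möbius sum $\bigoplus_{j} \bar w_{ij}\otimes_c \tilde x_j^{(t)}$ produces $\sum_j \bar w_{ij} v_j$ at zeroth order, and the accumulated $\kappa$-correction is a sum of terms each of which is cubic in the $\|v_j\|$, hence of total size $O(\kappa r^3)$ after using $\sum_j \bar w_{ij}=1$ and $\|v_j\|\le r$. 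Pulling this back through $\exp_{x_0}$, which is bi-Lipschitz on $B_\mathbb{H}(x_0,r)$ with constants depending only on $r$, gives $d_{\mathbb{H}}(\bar x_M, \exp_{x_0}(\bar v_E)) \le C_p''\,\kappa\, r^3$. Combining with the Fréchet estimate yields \eqref{eq:mobius_error}. The containment $\bar x_M,\bar x_F \in B_\mathbb{H}(x_0,r)$ follows because $\|\bar v_E\|\le \max_j\|v_j\|\le r$ by convexity of the Euclidean norm, and the $O(\kappa r^3)$ perturbation keeps us inside the ball for $r<1$.

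The main obstacle I anticipate is the Möbius step, not the Fréchet step. Möbius addition is non-associative and non-commutative, so the iterated sum depends on the bracketing used in Algorithm \ref{algo_hypegbms}; I would need to verify that the cumulative associator/commutator defects contribute only cubic-in-$r$ terms, not quadratic ones. The cleanest way to do this is to show by induction on the number of summands that any bracketing of $\bigoplus_j \bar w_{ij} \otimes_c v_j$ agrees with $\sum_j \bar w_{ij} v_j$ modulo a term whose norm is bounded by $C_p \kappa r^3$, relying on the explicit form of $Q(v,w)$ to check that the symmetric $O(\kappa r^2)$ contributions cancel after normalisation $\sum_j \bar w_{ij}=1$. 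This cancellation is what drives the exponent $3$ rather than $2$ in \eqref{eq:mobius_error}, and is the technical heart of the proof.
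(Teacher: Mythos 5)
Your proposal is correct and follows essentially the same route as the paper's proof: both lift everything to normal coordinates at $x_0$, compare $\bar{x}_M$ and $\bar{x}_F$ to the Euclidean weighted average $\sum_j w_j \log_{x_0}(x_j)$ of the tangent lifts, bound the Fréchet mean's deviation via a Hessian-comparison and gradient-perturbation argument with strong convexity, and bound the Möbius mean's deviation via a Taylor expansion of $\oplus_c$ and $\otimes_c$ in $\kappa$. The only substantive difference is that you treat the iterated, non-associative Möbius sum explicitly --- where your anticipated difficulty is in fact moot, since the first-order correction $Q(v,w)$ to $v \oplus_c w$ is already homogeneous of degree three in the norms (so no $O(\kappa r^2)$ cancellation is needed), and the normalisation $\sum_j \bar{w}_{ij} = 1$ keeps the accumulated per-addition errors at $O(\kappa r^3)$ uniformly in $N$ --- whereas the paper's retraction lemma simply asserts the corresponding $O(\max_j\|v_j\|^3)$ remainder without confronting the bracketing issue.
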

\noindent
The proof is shown in the Appendix. \\

\noindent
\textbf{Remark.} This proposition suggests that when all points lie in a sufficiently small ball,
the Möbius weighted mean ($\bar{x}_M$) is a first-order (in the radius $r$) approximation [$\because r = \frac{1}{\sqrt{\kappa}}$] of the Fréchet mean ($\bar{x}_F$). As in $r \to 0$, the two notions of mean converge, and both reduce to the Euclidean barycenter in the tangent space.

\subsection{Boundedness of Iterates}

\begin{lemma}[Iterates remain in a geodesically convex subset]
\label{lem:bounded}
Suppose the initial points $\{\tilde{x}_i^{(0)}\}$ lie in a compact, geodesically convex subset $\mathcal{U}\subset\mathbb{D}_c^p$ of radius $r<1$. 
Then for all $t\ge0$,
\[
\tilde{x}_i^{(t)}\in \mathcal{U},\qquad
d_{\mathbb{H}}(\tilde{x}_i^{(t+1)},\tilde{x}_k^{(t+1)}) 
\le (1+\alpha)\, d_{\mathbb{H}}(\tilde{x}_i^{(t)},\tilde{x}_k^{(t)}),
\]
where $\alpha = C_p \kappa r^2$.
\end{lemma}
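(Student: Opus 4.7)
The plan is to establish both assertions by simultaneous induction on $t$. The base case $t=0$ is immediate: $\tilde{x}_i^{(0)}\in\mathcal{U}$ by hypothesis and the distance inequality is vacuous.

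For the inductive step, I would first close the membership claim. Assuming $\{\tilde{x}_j^{(t)}\}_{j=1}^N\subset\mathcal{U}\subset B_\mathbb{H}(x_0,r)$, Proposition~\ref{prop:mobius-frechet} applied to the cloud $\{\tilde{x}_j^{(t)}\}_j$ with weights $(\bar{w}_{ij}^{(t)})_j$ directly yields $\tilde{x}_i^{(t+1)}=\bar{x}_M\in B_\mathbb{H}(x_0,r)$, and by the geodesic convexity of $\mathcal{U}$ (identified with the radius-$r$ ball), this gives $\tilde{x}_i^{(t+1)}\in\mathcal{U}$, propagating the invariant.

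For the pairwise distance bound, I would compare the Möbius iterates to the weighted Fréchet barycenters $\bar{x}_F^{(i,t)}$ and $\bar{x}_F^{(k,t)}$ formed from the same cloud $\{\tilde{x}_j^{(t)}\}$ under rows $i$ and $k$ of the normalised weight matrix. By the triangle inequality,
\[
d_{\mathbb{H}}\bigl(\tilde{x}_i^{(t+1)},\tilde{x}_k^{(t+1)}\bigr) \le d_{\mathbb{H}}\bigl(\bar{x}_F^{(i,t)},\bar{x}_F^{(k,t)}\bigr) + d_{\mathbb{H}}\bigl(\tilde{x}_i^{(t+1)},\bar{x}_F^{(i,t)}\bigr) + d_{\mathbb{H}}\bigl(\tilde{x}_k^{(t+1)},\bar{x}_F^{(k,t)}\bigr),
\]
where Proposition~\ref{prop:mobius-frechet} controls the last two terms by $C_p\kappa r^3$ each. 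For the central term I would invoke the non-expansiveness of the weighted Fréchet map on the CAT$(0)$ (indeed CAT$(-1)$) space $\mathbb{D}_c^p$ via Sturm's variance inequality: since the two weight vectors arise from Gaussian kernels centred at $\tilde{x}_i^{(t)}$ and $\tilde{x}_k^{(t)}$ respectively, a standard barycentre-comparison argument yields $d_{\mathbb{H}}(\bar{x}_F^{(i,t)},\bar{x}_F^{(k,t)})\le d_{\mathbb{H}}(\tilde{x}_i^{(t)},\tilde{x}_k^{(t)})$.

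The main obstacle is upgrading the additive $O(\kappa r^3)$ Möbius--Fréchet gap to the multiplicative factor $(1+C_p\kappa r^2)$ appearing in the lemma. The cleanest route I foresee is to refine Proposition~\ref{prop:mobius-frechet} to a Lipschitz-in-weights form, in which the Möbius--Fréchet discrepancy is bounded by $C_p\kappa r^2 \cdot d_{\mathbb{H}}(\tilde{x}_i^{(t)},\tilde{x}_k^{(t)})$ whenever the two weight vectors differ by an amount controlled by $d_{\mathbb{H}}(\tilde{x}_i^{(t)},\tilde{x}_k^{(t)})$. This upgrade, obtained via a second-order Taylor expansion in exponential coordinates at the common Karcher mean of the configuration together with the Lipschitz continuity of the Gaussian kernel in $d_{\mathbb{H}}$, is where the curvature factor $\kappa$ and the radius factor $r^2$ enter naturally. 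Absorbing the resulting dimensional constant into $C_p$ then completes the induction.
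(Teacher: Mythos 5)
Your proposal follows essentially the same route as the paper's proof: compare the Möbius update to the exact weighted Fréchet-mean update via the triangle inequality, invoke non-expansiveness of Riemannian barycentric maps on the $\mathrm{CAT}(0)$ Poincaré ball for the central term, and control the two flanking terms by the $C_p\kappa r^3$ bound of Proposition~\ref{prop:mobius-frechet}, with Proposition~\ref{prop:mobius-frechet} also supplying the membership claim. The ``main obstacle'' you flag — that the triangle inequality only yields the additive bound $d_{\mathbb{H}}(x,y)+2C_p\kappa r^3$, which does not imply the multiplicative factor $(1+C_p\kappa r^2)\,d_{\mathbb{H}}(x,y)$ when $d_{\mathbb{H}}(x,y)$ is small — is genuine, and the paper's own proof passes over it silently by writing $\le(1+\mathcal{O}(\kappa r^2))\,d_{\mathbb{H}}(x,y)$ without justification; your suggested remedy (a Lipschitz-in-weights refinement of the Möbius--Fréchet comparison) is a reasonable way to close it, though it remains a sketch rather than a completed argument.
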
 
\begin{proof}
Let $T_i^{(t)}:\mathbb{D}_c^p\to\mathbb{D}_c^p$ be the update map such that
$$
    T_i^{(t)}(x)=\text{MöbiusWeightedMean}(\{\tilde{x_j}^{(t)},\{\bar{w}_{ij}^{(t)}\},c).
$$
Let $\Phi_i^{(t)}:\mathbb{D}_c^p\to\mathbb{D}_c^p$ be the update map such that
$$
 \Phi_i^{(t)}(x) = \arg\min_{x\in\mathbb{D}_c^p} \sum_{j=1}^N w_{ij}\, d_{\mathbb{H}}^2(x,\tilde{x_j}^{(t)})  
$$
From Proposition \ref{prop:mobius-frechet}, $T_i^{(t)}$ is an $\mathcal{O}(\kappa r^3)$-approximation of the exact Fréchet mean map $\Phi_i^{(t)}$. Weighted Riemannian barycentric maps are non-expansive in any $\mathrm{CAT}(0)$ space (including the Poincaré ball), thus:
$d_{\mathbb{H}}(\Phi_i^{(t)}(x),\Phi_k^{(t)}(y))\le d_{\mathbb{H}}(x,y)$ \citep{afsari2011riemannian}. 
By the triangle inequality,
\begin{align*}
d_{\mathbb{H}}(T_i^{(t)}(x),T_k^{(t)}(y))
& \le 
d_{\mathbb{H}}(T_i^{(t)}(x),\Phi_i^{(t)}(x)) + d_{\mathbb{H}}(\Phi_i^{(t)}(x),\Phi_k^{(t)}(y)) + d_{\mathbb{H}}(T_k^{(t)}(y),\Phi_k^{(t)}(y)) \\
& \le (1+ \mathcal{O}(\kappa r^2)) d_{\mathbb{H}}(x,y), \end{align*} giving the required $(1+\alpha)$ factor, where $\alpha = C_p \kappa r^2$. Compactness of $\mathcal{U}$ ensures iterates remain in $\mathcal{U}$.
\end{proof}

\subsection{Convergence of HypeGBMS}

\begin{theorem}[Convergence of HypeGBMS]
\label{thm:convergence}
Let $\{\tilde{x}_i^{(t)}\}_{t\ge0}$ denote the sequence generated by Algorithm \ref{algo_hypegbms}. 
Assume all iterates lie in a convex ball $\mathcal{U}\subset\mathbb{D}_c^p$ of radius $r<1$ and curvature magnitude $\kappa=-c$. 
Then:
\begin{enumerate}
    \item The sequence $\hat{f}(\tilde{x}_i^{(t)})$ with
    \[
    \hat{f}(x)=\frac{1}{N}\sum_{j=1}^N \exp\!\left(-\frac{d_{\mathbb{H}}^2(x,\tilde{x}_j)}{2\sigma^2}\right)
    \]
    is monotone non-decreasing up to additive errors $\mathcal{O}(\kappa r^3)$ and convergent.
    \item Each $\tilde{x}_i^{(t)}$ converges to a point $\tilde{x}_i^\ast$ satisfying 
    $\|\nabla_\mathbb{H} \hat{f}(\tilde{x}_i^\ast)\|\le C_p \kappa r^3$.
\end{enumerate}
In the limit $\kappa r^3\to0$, $\tilde{x}_i^\ast$ is an exact stationary point of $\hat{f}$.
\end{theorem}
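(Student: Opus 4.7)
The plan is to lift the classical majorization--minimization (MM) argument for Euclidean GBMS to the Poincaré ball, treating the Möbius weighted mean as a controlled perturbation of the exact Fréchet mean via Proposition \ref{prop:mobius-frechet}. First I would set up a surrogate function for $\hat{f}$: writing the Gaussian kernel as $k(u)=\exp(-u/2\sigma^2)$ and applying Jensen's inequality to $\log\hat{f}$ together with the identity $\bar{w}_{ij}^{(t)}\propto k(d_{\mathbb{H}}^2(\tilde{x}_i^{(t)},\tilde{x}_j^{(t)}))$, one obtains a concave lower bound that (up to terms constant in $x$) equals $-\tfrac{1}{2\sigma^2}\sum_j \bar{w}_{ij}^{(t)}\, d_{\mathbb{H}}^2(x,\tilde{x}_j^{(t)})$. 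Maximizing this surrogate is exactly the weighted Fréchet mean problem solved by $\bar{x}_F$, so the standard MM inequality gives $\hat{f}(\bar{x}_F)\ge \hat{f}(\tilde{x}_i^{(t)})$.

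Second, I would introduce the Möbius perturbation. The algorithm updates $\tilde{x}_i^{(t+1)}=\bar{x}_M$ rather than $\bar{x}_F$, but Proposition \ref{prop:mobius-frechet} guarantees $d_{\mathbb{H}}(\bar{x}_M,\bar{x}_F)\le C_p\kappa r^3$. Because Lemma \ref{lem:bounded} confines every iterate to the compact set $\mathcal{U}$, the Lipschitz constant $L$ of $\hat{f}$ on $\mathcal{U}$ is finite, so a first-order estimate yields
$$
\hat{f}(\tilde{x}_i^{(t+1)}) \ge \hat{f}(\bar{x}_F) - L\,C_p\kappa r^3 \ge \hat{f}(\tilde{x}_i^{(t)}) - L\,C_p\kappa r^3,
$$
which is the asserted ``monotone non-decreasing up to additive error $\mathcal{O}(\kappa r^3)$''. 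Since $\hat{f}$ is uniformly bounded above on $\mathcal{U}$, a bounded almost-monotone sequence converges (its $\limsup$ and $\liminf$ differ by at most the cumulative slack), yielding assertion (1).

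Third, for the approximate-stationarity claim (2), I would use the closed-form Riemannian gradient
$$
\nabla_{\mathbb{H}} \hat{f}(x) = \frac{1}{N\sigma^2}\sum_{j} k\bigl(d_{\mathbb{H}}^2(x,\tilde{x}_j)\bigr)\,\log_x(\tilde{x}_j).
$$
At any fixed point of the exact Fréchet-mean iteration, the first-order optimality condition reads $\sum_j \bar{w}_{ij}\log_x(\tilde{x}_j)=0$, which coincides precisely with $\nabla_{\mathbb{H}} \hat{f}(x)=0$ after the normalization. Let $\tilde{x}_i^\ast$ be a cluster point of the iterates (which exists by compactness of $\mathcal{U}$). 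Combining Proposition \ref{prop:mobius-frechet} with boundedness of the Jacobian of $\log_x$ on $\mathcal{U}$ (a CAT(0) ball of bounded radius) transports the $\mathcal{O}(\kappa r^3)$ discrepancy from the tangent space to the gradient, yielding $\|\nabla_{\mathbb{H}}\hat{f}(\tilde{x}_i^\ast)\|\le C_p\kappa r^3$. Taking $\kappa r^3\to 0$ recovers exact stationarity.

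The main obstacle I anticipate is upgrading convergence of objective values to convergence of the iterates themselves, since the per-step error $\mathcal{O}(\kappa r^3)$ is not a priori summable. Two routes seem feasible: (i) a localization argument showing that once the iterates enter a small neighborhood of a mode, the effective radius $r_t$ contracts fast enough that the ascent gain dominates the perturbation term; or (ii) a Riemannian Kurdyka--Łojasiewicz inequality for the real-analytic kernel density $\hat{f}$, which converts approximate monotonicity into summable hyperbolic step-lengths $d_{\mathbb{H}}(\tilde{x}_i^{(t+1)},\tilde{x}_i^{(t)})$. Making the $\mathcal{O}(\kappa r^3)$ slack in part (1) consistent with genuine pointwise convergence in part (2) will be the most delicate step and will likely require one of these refinements.
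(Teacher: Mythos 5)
Your proposal follows the same overall architecture as the paper's proof: bound the iterates via Lemma \ref{lem:bounded}, show the exact Fr\'echet-mean update increases $\hat f$, invoke Proposition \ref{prop:mobius-frechet} plus Lipschitz continuity of $\hat f$ on the compact set $\mathcal{U}$ to transfer the $\mathcal{O}(\kappa r^3)$ discrepancy to the objective values, conclude approximate monotone convergence of $\hat f(\tilde{x}_i^{(t)})$, and extract approximate stationarity at cluster points. Where you differ is in how the key ascent inequality is obtained: the paper simply asserts, citing \citep{afsari2011riemannian}, that the Fr\'echet update is a Riemannian gradient-ascent step satisfying $\hat f(\Phi_i^{(t)}(x))-\hat f(x)\ge c_1\|\nabla_\mathbb{H}\hat f(x)\|^2$, whereas you derive the ascent property from first principles via a majorization--minimization surrogate built from Jensen's inequality, which is more elementary and self-contained but, as written, only yields $\hat f(\bar{x}_F)\ge\hat f(\tilde{x}_i^{(t)})$ without the quantified sufficient-increase term $c_1\|\nabla_\mathbb{H}\hat f\|^2$ that the paper needs for its summation step $\sum_t\|\nabla_\mathbb{H}\hat f(\tilde{x}_i^{(t)})\|^2<\infty$; if you pursue the MM route you would need to add a strong-concavity or curvature bound on the surrogate to recover that term. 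Finally, the obstacle you flag at the end --- that a per-step slack of $C_p\kappa r^3$ is not summable, so neither convergence of the objective values by monotone convergence nor convergence of the iterates genuinely follows without a localization or Kurdyka--\L{}ojasiewicz refinement --- is a real gap, and it is present in the paper's own proof as well (the paper sums the inequality and claims a finite sum while silently dropping the accumulated $T\cdot C_p\kappa r^3$ term); your explicit acknowledgment of this, together with the two proposed repairs, is the more careful treatment.
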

\begin{proof}
By Lemma \ref{lem:bounded}, iterates remain in a compact convex set $\mathcal{U}$ where curvature, metric, and derivatives of $\hat{f}$ are bounded.  
Let $\Phi_i^{(t)}$ be the exact Fréchet mean update. 
Then $\Phi_i^{(t)}$ performs a Riemannian gradient ascent on $\hat{f}$ with step size $\eta=1/\sigma^2$ \citep{afsari2011riemannian}, implying
$\hat{f}(\Phi_i^{(t)}(\tilde{x}_i^{(t)}))-\hat{f}(\tilde{x}_i^{(t)})\ge c_1\|\nabla_\mathbb{H}\hat{f}(\tilde{x}_i^{(t)})\|^2$ for some $c_1>0$, where the Riemannian gradient of $\hat{f}$ is
\begin{equation}
    \nabla_\mathbb{H} \hat{f}(\mathbf{x})
    = \frac{1}{N\sigma^2} \sum_{j=1}^N K_\sigma(\mathbf{x}, \tilde{\mathbf{x}}_j) \, \log_{\mathbf{x}}(\tilde{\mathbf{x}}_j),
\end{equation}
where $K_\sigma(\mathbf{x}, \tilde{\mathbf{x}}_j) = \exp\!\big(-d_\mathbb{H}^2(\mathbf{x}, \tilde{\mathbf{x}}_j)/(2\sigma^2)\big)$ and $\log_{\mathbf{x}}(\cdot)$ is the Riemannian logarithmic map as \ref{eq:logmap}. 
Using Proposition \ref{prop:mobius-frechet}, the Möbius update $T_i^{(t)}$ satisfies
$d_{\mathbb{H}}(T_i^{(t)}(\tilde{x}_i^{(t)}),\Phi_i^{(t)}(\tilde{x}_i^{(t)}))\le C_p\kappa r^3$, 
so by Lipschitz continuity of $\hat{f}$ we get
\begin{equation} \label{ineq:mob-mean}
    \hat{f}(T_i^{(t)}(\tilde{x}_i^{(t)}))
\ge \hat{f}(\Phi_i^{(t)}(\tilde{x}_i^{(t)})) - C_p\kappa r^3
\ge 
\hat{f}(\tilde{x}_i^{(t)}) + c_1\|\nabla_\mathbb{H}\hat{f}(\tilde{x}_i^{(t)})\|^2 - C_p\kappa r^3.
\end{equation}
Hence $\{\hat{f}(\tilde{\mathbf{x}}_i^{(t)})\}$ is a monotone non-decreasing sequence bounded above (since $0 < \hat{f}(\mathbf{x}) \leq 1$). Hence, it converges by the Monotone convergence theorem \citep{rudin}. 
Summing the inequality (\ref{ineq:mob-mean}) over $t$ yields
$\sum_t\|\nabla_\mathbb{H}\hat{f}(\tilde{x}_i^{(t)})\|^2<\infty$, implying $\|\nabla_\mathbb{H}\hat{f}(\tilde{x}_i^{(t)})\|\to0$ up to $\mathcal{O}(\kappa r^3)$, so each limit point $\tilde{x}_i^\ast$ satisfies the claimed condition.
\end{proof}
The above results establish that the Möbius update constitutes an \emph{inexact Riemannian gradient ascent} whose inexactness vanishes with both curvature magnitude and local cluster radius. 
Therefore, HypeGBMS inherits the convergence behavior of classical GBMS in the small-curvature regime and converges to approximate Fréchet stationary points under bounded curvature.

\section{Statistical Consistency of HypeGBMS} \label{consistency}

We now establish a consistency result for the cluster centroids obtained by the proposed HypeGBMS method. Let $(\mathbb{D}_c^p, g_c)$ denote the Poincar\'e ball model of hyperbolic space with curvature $c<0$ and metric tensor $g$. Let $d_{\mathbb{H}}(\cdot,\cdot)$ be the corresponding hyperbolic distance as mentioned in Equation \eqref{eq:hyperbolic_dist}. Let $X_1,\dots,X_N \overset{iid}{\sim} P$ with density $f$ supported on a compact subset of $\mathbb{D}_c^p$. 
\noindent
For bandwidth $\sigma>0$, the kernel is
\begin{equation}
  K_\sigma(x,y) = \exp\!\Big(-\tfrac{d_{\mathbb{H}}^2(x,y)}{2\sigma^2}\Big),  
\end{equation}
and the empirical kernel density estimator (KDE) is :
\[
\hat f_\sigma(x) = \frac{1}{N}\sum_{i=1}^N K_\sigma(x,X_i).
\]
The population smoothed density is
\[
f_\sigma(x) = \int_{\mathbb{D}_c^p} K_\sigma(x,y) f(y)\, d\mathrm{vol}_{\mathbb{H}}(y).
\]

\noindent 
Modes of $f_\sigma$ are :
\[
\mathcal{M}_\sigma = \{m_1,\dots,m_K\}, \quad m_j = \arg\max_{x\in U_j} f_\sigma(x),
\]
where $U_j$ are disjoint neighborhoods of each mode. The empirical analogues $\hat{\mathcal{M}}_\sigma$ are the stationary points obtained by HypeGBMS.

\begin{figure*}[h]
    \centering
    \includegraphics[width= \linewidth]{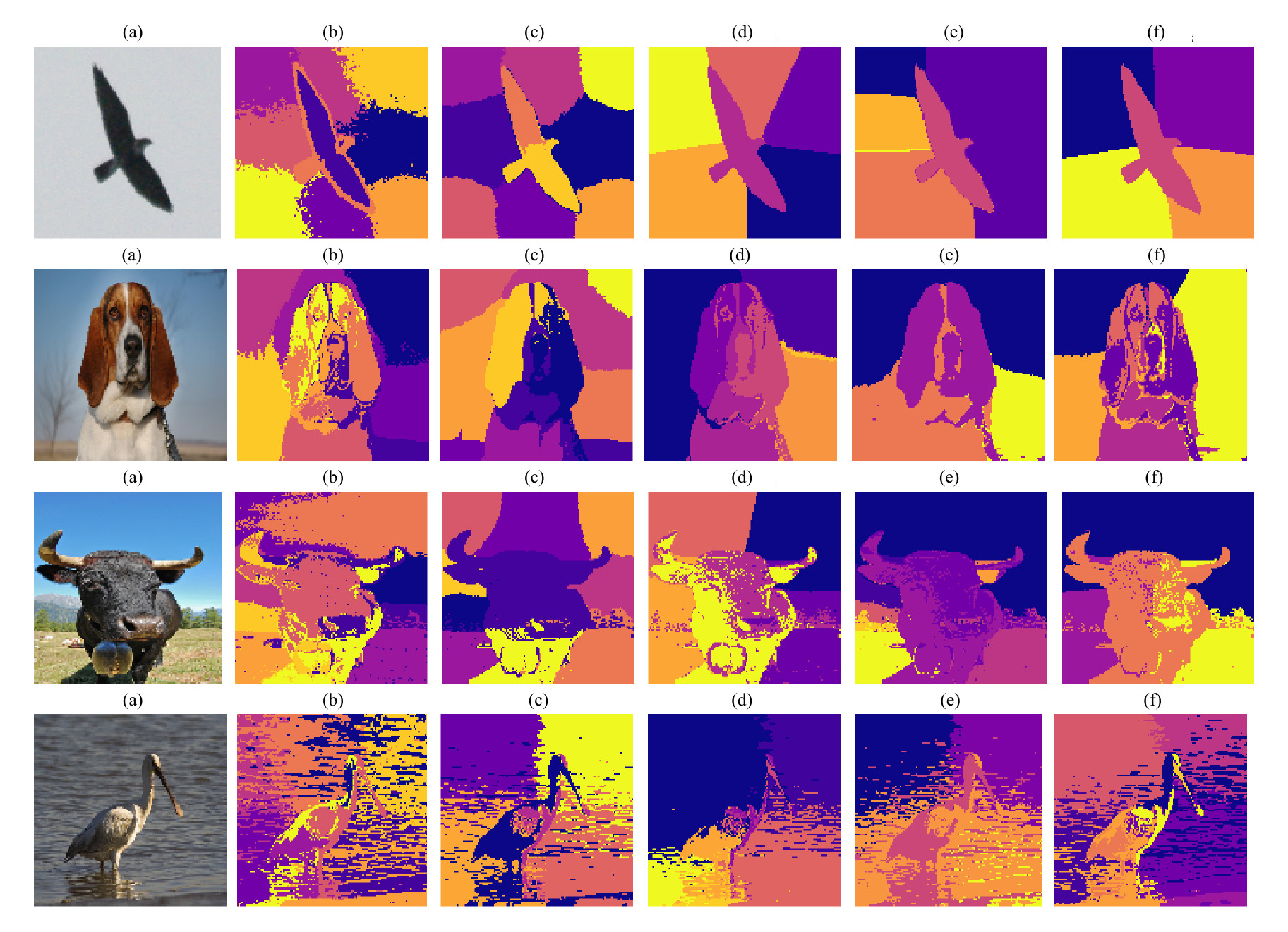}
    \caption{Qualitative results on the PASCAL VOC 2012 dataset (a) Image, (b) GMS, (c) Gridshift, (d) QuickMeanshift, (e) GBMS, (f) HypeGBMS(ours).}
    \label{fig:pascal-voc}
\end{figure*}
\begin{lemma} \label{lem:uniform}
Suppose $f$ is continuous and bounded, and the kernel $K_\sigma$ is smooth in $x$. Then as $N \to \infty$,
\[
\sup_{x \in \mathbb{D}_c^{p}} \big|\hat f_\sigma(x) - f_\sigma(x)\big| \xrightarrow{a.s.} 0,
\]
and
\[
\sup_{x \in \mathbb{D}_c^{p}} \|\nabla_{\mathbb{H}} \hat f_\sigma(x) - \nabla_{\mathbb{H}} f_\sigma(x)\| \xrightarrow{a.s.} 0.
\]
\end{lemma}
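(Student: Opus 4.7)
\medskip
\noindent
\textbf{Proof proposal for Lemma \ref{lem:uniform}.}
The plan is to treat both statements as uniform strong laws of large numbers for a parametric family of bounded, smooth functions on the (intrinsically) compact region carrying the support of $f$, and then to apply the standard covering-plus-Hoeffding-plus-Borel--Cantelli machinery, with the Poincar\'e metric entering only through bounds on derivatives of $K_\sigma$ and its Riemannian gradient. First I would fix a compact set $\mathcal{K}\subset\mathbb{D}_c^p$ containing $\operatorname{supp}(f)$ together with a small hyperbolic neighbourhood of it, so that for $x$ outside $\mathcal{K}$ both $\hat f_\sigma(x)$ and $f_\sigma(x)$ decay in a controlled fashion governed by the Gaussian tail of $K_\sigma$; a routine tail estimate then reduces the suprema in the statement to suprema over $\mathcal{K}$.

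For the density statement I would exploit that for every $x,y\in\mathcal{K}$ we have $0\le K_\sigma(x,y)\le 1$ and that $x\mapsto K_\sigma(x,y)$ is Lipschitz on $\mathcal{K}$ with constant $L_0$ depending on $\sigma,c$ and the diameter of $\mathcal{K}$ (this follows from the closed-form $d_\mathbb{H}$ in Equation \eqref{eq:hyperbolic_dist} and smoothness of $\tanh^{-1}$ on compacts bounded away from the boundary $\|x\|=1/\sqrt{-c}$). For $\eta>0$, cover $\mathcal{K}$ by $M_\eta=\mathcal{O}(\eta^{-p})$ hyperbolic balls of radius $\eta$ with centres $\{x_m\}$; Hoeffding's inequality bounds $\mathbb{P}(|\hat f_\sigma(x_m)-f_\sigma(x_m)|>\varepsilon/2)\le 2e^{-N\varepsilon^2/2}$ for each $m$, and a union bound gives
\[
\mathbb{P}\!\left(\max_m |\hat f_\sigma(x_m)-f_\sigma(x_m)|>\varepsilon/2\right) \le 2 M_\eta\, e^{-N\varepsilon^2/2}.
\]
Choosing $\eta=\varepsilon/(4L_0)$ controls the oscillation of both $\hat f_\sigma$ and $f_\sigma$ between grid points, and summability of the resulting bound over $N$ together with the Borel--Cantelli lemma yields the desired almost sure uniform convergence.

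For the gradient statement I would differentiate under the sum/integral using the explicit formula
\[
\nabla_\mathbb{H} K_\sigma(x,y) = \frac{K_\sigma(x,y)}{\sigma^2}\, \log_x(y),
\]
which follows from $\tfrac{1}{2}\nabla_\mathbb{H} d_\mathbb{H}^2(x,y) = -\log_x(y)$ (Equation \eqref{eq:logmap}). Uniform boundedness of $\|\log_x(y)\|_{g_c}$ on $\mathcal{K}\times\mathcal{K}$, together with smoothness of $\log_x(y)$ in $x$, gives a Lipschitz constant $L_1$ for the map $x\mapsto \nabla_\mathbb{H} K_\sigma(x,y)$ (say in a fixed local trivialisation of $T\mathbb{D}_c^p$, or equivalently after identification via the conformal factor $\lambda_x^c=2/(1-c\|x\|^2)$, which is bounded on $\mathcal{K}$). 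Then exactly the same covering-Hoeffding-Borel--Cantelli argument, applied componentwise to the tangent-vector-valued kernel, produces the almost sure uniform convergence of $\nabla_\mathbb{H} \hat f_\sigma$ to $\nabla_\mathbb{H} f_\sigma$.

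The main obstacle is handling the Riemannian gradient cleanly without getting lost in the conformal rescaling: one must verify that, inside the compact $\mathcal{K}$, the hyperbolic metric, the logarithmic map, and the conformal factor are all uniformly smooth and bounded, so that the gradient of $K_\sigma$ satisfies a Lipschitz bound comparable to the Euclidean case. Once this geometric bookkeeping is done, the probabilistic half of the argument is standard; the only subtlety is choosing the covering radius $\eta$ as a polynomial function of $\varepsilon$ so that $M_\eta e^{-N\varepsilon^2/2}$ remains summable in $N$, which is what promotes convergence in probability to the almost sure statement claimed.
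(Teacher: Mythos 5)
Your proposal follows essentially the same route as the paper's proof: reduce to a compact set containing $\operatorname{supp}(f)$, cover it with a finite net, apply a law of large numbers at each net point, and use the Lipschitz continuity of $x\mapsto K_\sigma(x,y)$ (uniform in $y$) to control the oscillation between net points, with the gradient statement handled by the same scheme applied to $\nabla_{\mathbb{H}}K_\sigma$. The only differences are refinements rather than a change of method --- you use Hoeffding plus Borel--Cantelli where the paper invokes the plain strong law at finitely many net points, and you make explicit both the formula $\nabla_{\mathbb{H}}K_\sigma(x,y)=\sigma^{-2}K_\sigma(x,y)\log_x(y)$ and the Gaussian-tail reduction of the supremum from $\mathbb{D}_c^p$ to the compact set, two points the paper leaves implicit.
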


\begin{proof}
Since $f$ is supported on a compact subset of $\mathbb{D}_c^p$, there exists a
compact geodesically convex set $\mathcal{U}\subset\mathbb{D}_c^p$ such that
$\mathrm{supp}(f)\subset\mathcal{U}$. It suffices to prove the uniform
convergence on $\mathcal{U}$:
\begin{align*}
  & \sup_{x\in\mathcal{U}} \big|\hat f_\sigma(x) - f_\sigma(x)\big| \xrightarrow{a.s.} 0,
\\
& \sup_{x\in\mathcal{U}} \|\nabla_{\mathbb{H}} \hat f_\sigma(x) - \nabla_{\mathbb{H}} f_\sigma(x)\| \xrightarrow{a.s.} 0,  
\end{align*}
since outside $\mathcal{U}$ the density $f$ vanishes and both $f_\sigma$ and
$\hat f_\sigma$ are determined by their values on $\mathcal{U}$. For each fixed $x\in\mathcal{U}$, define the function
\[
\varphi_x(y) := K_\sigma(x,y)
= \exp\!\Big(-\tfrac{d_{\mathbb{H}}^2(x,y)}{2\sigma^2}\Big),
\qquad y\in\mathbb{D}_c^p.
\]
Then
\[
\hat f_\sigma(x) = \frac{1}{N} \sum_{i=1}^N \varphi_x(X_i),
\qquad
f_\sigma(x) = \mathbb{E}[\varphi_x(X_1)].
\]
Because $K_\sigma$ is continuous and $d_{\mathbb{H}}(x,y)\ge 0$, there exists
a finite constant $M>0$ such that $|\varphi_x(y)|\le M$ for all
$x\in\mathcal{U}$ and all $y\in\mathbb{D}_c^p$. For each fixed $x$, the
random variables $\{\varphi_x(X_i)\}_{i\ge 1}$ are i.i.d.\ with
$\mathbb{E}|\varphi_x(X_1)|<\infty$, so by the strong law of large numbers,
\[
\frac{1}{N} \sum_{i=1}^N \varphi_x(X_i) \xrightarrow{a.s.} \mathbb{E}[\varphi_x(X_1)]
= f_\sigma(x),
\]
Thus, for every fixed $x$, pointwise almost sure convergence holds. We now
upgrade this to uniform convergence over $x\in\mathcal{U}$ using compactness
and equicontinuity.

Since $K_\sigma$ is smooth in $x$ by assumption, and
($\mathcal{U}\times\mathcal{U}$) is compact, the Riemannian gradient of
$K_\sigma$ with respect to the first argument is uniformly bounded:
there exists $L>0$ such that,
\[
\sup_{x,y\in\mathcal{U}} \big\|\nabla_{\mathbb{H}} K_\sigma(x,y)\big\|
\le L < \infty.
\]
By the mean value inequality on Riemannian manifold, this implies that
for all $x,x'\in\mathcal{U}$ and all $y\in\mathcal{U}$,
\begin{equation}\label{eq:kernel-Lip}
\big|K_\sigma(x,y) - K_\sigma(x',y)\big|
\le L\, d_{\mathbb{H}}(x,x').
\end{equation}
In particular, the family $\{\varphi_x : x\in\mathcal{U}\}$ is
equicontinuous in $x$ (uniformly in $y$). Since $\mathcal{U}$ is compact, for every $\delta>0$ there exists a finite
$\delta$-net $\{x_1,\dots,x_m\}\subset\mathcal{U}$ such that for every
$x\in\mathcal{U}$ there is $k\in\{1,\dots,m\}$ with
$d_{\mathbb{H}}(x,x_k)\le\delta$.\\
Fix $\varepsilon>0$. Choose $\delta>0$ such that
\[
L\,\delta < \varepsilon/4.
\]
Let $\{x_1,\dots,x_m\}$ be a finite $\delta$-net of $\mathcal{U}$. For each
$k\in\{1,\dots,m\}$, the strong law of large numbers implies
\[
\hat f_\sigma(x_k) = \frac{1}{N}\sum_{i=1}^N K_\sigma(x_k,X_i)
\xrightarrow{a.s.} f_\sigma(x_k),
\]
as $N\to\infty$. Since $m<\infty$, by union bound \citep{wainwright2019high} and the fact that a finite
union of almost sure events is almost sure, we have
\[
\max_{1\le k\le m} \big|\hat f_\sigma(x_k) - f_\sigma(x_k)\big|
\xrightarrow{a.s.} 0.
\]
Thus, with probability $1$, there exists $N_0$ such that for all
$N\ge N_0$,
\[
\max_{1\le k\le m} \big|\hat f_\sigma(x_k) - f_\sigma(x_k)\big| < \varepsilon/2.
\]
Now fix such an $N$ and consider any $x\in\mathcal{U}$. Choose $k$ such that
$d_{\mathbb{H}}(x,x_k)\le\delta$. Then
$$
\big|\hat f_\sigma(x) - f_\sigma(x)\big|
\le \big|\hat f_\sigma(x) - \hat f_\sigma(x_k)\big|
   + \big|\hat f_\sigma(x_k) - f_\sigma(x_k)\big| 
 + \big|f_\sigma(x_k) - f_\sigma(x)\big|.
$$
We bound each term:

\smallskip\noindent
\emph{(i)} Using \eqref{eq:kernel-Lip},
\begin{align*}
\big|\hat f_\sigma(x) - \hat f_\sigma(x_k)\big|
&= \left|\frac{1}{N}\sum_{i=1}^N
    \big(K_\sigma(x,X_i) - K_\sigma(x_k,X_i)\big)\right| \\
&\le \frac{1}{N}\sum_{i=1}^N
    \big|K_\sigma(x,X_i) - K_\sigma(x_k,X_i)\big| \\
&\le \frac{1}{N}\sum_{i=1}^N L\, d_{\mathbb{H}}(x,x_k)
 \le L\,\delta < \varepsilon/4.
\end{align*}

\smallskip\noindent
\emph{(ii)} By the choice of $N\ge N_0$,
\[
\big|\hat f_\sigma(x_k) - f_\sigma(x_k)\big| < \varepsilon/2.
\]

\smallskip\noindent
\emph{(iii)} Since $K_\sigma$ and $f$ are bounded
and $K_\sigma$ is Lipschitz in $x$ as in \eqref{eq:kernel-Lip}, we obtain
by the Dominated Convergence theorem \citep{rudin},
\begin{align*}
\big|f_\sigma(x) - f_\sigma(x_k)\big| 
& = \left|\int (K_\sigma(x,y)-K_\sigma(x_k,y)) f(y)\,d\mathrm{vol}_{\mathbb{H}}(y)\right| \\
&\le L\,\delta < \varepsilon/4. 
\end{align*}
Combining \textit{(i)}--\textit{(iii)}, we obtain
\[
\big|\hat f_\sigma(x) - f_\sigma(x)\big|
< \varepsilon/4 + \varepsilon/2 + \varepsilon/4 = \varepsilon
\]
for all $x\in\mathcal{U}$ and all $N\ge N_0$, almost surely. Hence
\[
\sup_{x\in\mathcal{U}} \big|\hat f_\sigma(x) - f_\sigma(x)\big|
\xrightarrow{a.s.} 0,
\]
which proves the first claim. \\ \\
The argument for the gradient is analogous. By a similar kind of argument, we can show that :
\[
\sup_{x\in\mathcal{U}} \|\nabla_{\mathbb{H}} \hat f_\sigma(x) - \nabla_{\mathbb{H}} f_\sigma(x)\|
\xrightarrow{a.s.} 0.
\]
Since $\mathcal{U}$ contains the support of $f$ and is compact, the same
suprema over $\mathbb{D}_c^p$ coincide with the suprema over $\mathcal{U}$.
This proves the lemma.
\end{proof}

\begin{lemma}\label{lem:stability}
Suppose $m$ is a non-degenerate critical point of $f_\sigma$, i.e.
\[
\nabla_{\mathbb{H}} f_\sigma(m) = 0, \quad \text{Hess}_{\mathbb{H}} f_\sigma(m) \ \text{is nonsingular}.
\]
Then for large $N$, there exists a unique critical point $\hat m_N$ of $\hat f_\sigma$ such that
\[
d_{\mathbb{H}}(\hat m_N, m) \to 0 \quad a.s.
\]
\end{lemma}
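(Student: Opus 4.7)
The strategy is to reformulate the existence of $\hat{m}_N$ as finding a zero of a smooth vector field in normal coordinates centered at $m$, then apply a quantitative inverse function argument based on the nondegeneracy of $\operatorname{Hess}_\mathbb{H} f_\sigma(m)$ together with the uniform gradient convergence of Lemma \ref{lem:uniform}. Fix a geodesic ball $B_\mathbb{H}(m,\rho)$ inside the injectivity radius of $\mathbb{D}_c^p$ and identify it with a Euclidean ball $B_\rho\subset T_m\mathbb{D}_c^p$ via the exponential map $\exp_m$. Parallel transporting Riemannian gradients along radial geodesics back to $T_m\mathbb{D}_c^p$ (denote this by $\tau_v^{-1}$), define
\[
G(v):=\tau_v^{-1}\,\nabla_\mathbb{H} f_\sigma(\exp_m v),\qquad \hat G_N(v):=\tau_v^{-1}\,\nabla_\mathbb{H}\hat f_\sigma(\exp_m v).
\]
Finding a critical point of $\hat f_\sigma$ near $m$ is then equivalent to finding a zero of $\hat G_N$ near $0$. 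By hypothesis $G(0)=0$ and $DG(0)=\operatorname{Hess}_\mathbb{H} f_\sigma(m)$ is invertible, so shrinking $\rho$ if needed there exist constants $c_0,c_1>0$ with $\|G(v)\|\ge c_0\|v\|$ and $\|DG(v)^{-1}\|\le c_1$ for all $v\in B_\rho$; in particular $m$ is the isolated zero of $G$ in $\overline{B_\rho}$.

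For existence, Lemma \ref{lem:uniform} yields $\eta_N:=\sup_{v\in\overline{B_\rho}}\|\hat G_N(v)-G(v)\|\xrightarrow{a.s.}0$, since the smooth changes of coordinates do not affect uniform convergence on a compact set. Once $\eta_N<c_0\rho/2$, the straight-line homotopy $H_t(v)=(1-t)G(v)+t\hat G_N(v)$ is zero-free on $\partial B_\rho$, so by homotopy invariance of the Brouwer degree, $\mathrm{deg}(\hat G_N,B_\rho,0)=\mathrm{deg}(G,B_\rho,0)=\pm 1$, giving at least one zero $\hat v_N\in B_\rho$. Setting $\hat m_N:=\exp_m\hat v_N$ produces the desired empirical critical point.

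For uniqueness I need $D\hat G_N(v)$ to remain nonsingular throughout $B_\rho$, which in turn requires $\sup_{v\in\overline{B_\rho}}\|D\hat G_N(v)-DG(v)\|\xrightarrow{a.s.}0$. This is an extension of Lemma \ref{lem:uniform} to the Hessian: the same finite $\delta$-net argument carries over once one replaces $K_\sigma$ by $\nabla_\mathbb{H}^{2}K_\sigma$, which is again smooth and uniformly bounded on $\overline{B_\rho}\times\mathrm{supp}(f)$. A standard Neumann-series perturbation bound then shows $D\hat G_N(v)$ is invertible on all of $B_\rho$ for large $N$, so $\hat G_N$ is a diffeomorphism onto its image and $\hat v_N$ is the unique zero.

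Finally, convergence follows from the identity $G(\hat v_N)=G(\hat v_N)-\hat G_N(\hat v_N)$, which combined with the lower bound $\|G(v)\|\ge c_0\|v\|$ yields
\[
d_\mathbb{H}(\hat m_N,m)=\|\hat v_N\|\le c_0^{-1}\eta_N \xrightarrow{a.s.} 0.
\]
\textbf{Main obstacle.} The principal step beyond what Lemma \ref{lem:uniform} directly provides is upgrading the uniform almost sure convergence to the empirical Hessians, which is needed for global invertibility on $B_\rho$ and hence uniqueness of $\hat m_N$. Aside from this extension and the routine derivation of the quantitative bounds inherited from the invertible population Hessian, the remainder is a standard inverse function and topological degree calculation in normal coordinates.
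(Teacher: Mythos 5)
Your proposal is correct, and its skeleton matches the paper's: pass to normal coordinates at $m$, regard the population and empirical gradient fields as maps on a ball in $T_m\mathbb{D}_c^p$, and perturb the nondegenerate zero at the origin using the uniform a.s.\ convergence from Lemma~\ref{lem:uniform}. Where you genuinely diverge is in how existence and uniqueness of the perturbed zero are established. The paper simply invokes the manifold version of the Implicit Function theorem to assert that $\hat F_N$ admits a unique zero depending continuously on the perturbation; it never confronts the fact that $C^0$ closeness of the gradient fields alone cannot exclude spurious additional critical points of $\hat f_\sigma$ near $m$. You make both halves explicit: Brouwer degree with the boundary estimate $\|G(v)\|\ge c_0\rho$ on $\partial B_\rho$ gives existence, and you correctly isolate that uniqueness requires upgrading Lemma~\ref{lem:uniform} to uniform a.s.\ convergence of the second derivatives (so that $D\hat G_N$ stays uniformly close to the invertible $DG$) --- a hypothesis the paper uses implicitly without stating or proving it. Your closing bound $\|\hat v_N\|\le c_0^{-1}\eta_N$ also yields an explicit rate in terms of the gradient sup-error, which the paper's qualitative argument does not provide. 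One small repair: pointwise invertibility of $D\hat G_N$ on $B_\rho$ does not by itself make $\hat G_N$ injective there; instead, integrate $D\hat G_N$ along segments of the convex ball and use that it is uniformly close to the fixed invertible linear map $DG(0)$ to obtain the Lipschitz lower bound $\|\hat G_N(v_1)-\hat G_N(v_2)\|\ge c\,\|v_1-v_2\|$, which gives uniqueness directly. With that adjustment your argument is complete and, on the uniqueness step, more rigorous than the paper's.
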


\begin{proof}
By Lemma~\ref{lem:uniform}, the empirical gradient converges uniformly to the
population gradient on compact subset $\mathcal{U}$:
\[
\sup_{x\in \mathcal{U}}
\big\| \nabla_{\mathbb{H}} \hat f_\sigma(x)
      - \nabla_{\mathbb{H}} f_\sigma(x) \big\|
\xrightarrow[N\to\infty]{a.s.} 0.
\]
In particular, this holds on a geodesic ball $B_{\mathbb{H}}(m,r)$ for $r>0$
sufficiently small.
\noindent
Consider normal coordinates $\xi \in T_m\mathcal{M} \cong \mathbb{R}^p$ defined by
$x = \exp_m(\xi)$, and mappings
\begin{align*}
& F(\xi) := \nabla_{\mathbb{H}} f_\sigma(\exp_m(\xi)), \\
& \hat F_N(\xi) := \nabla_{\mathbb{H}} \hat f_\sigma(\exp_m(\xi)). 
\end{align*}
Since $m$ is a nondegenerate critical point, we have $F(0)=0$ and
$DF(0) = \mathrm{Hess}_{\mathbb{H}}f_\sigma(m)$, which is invertible by
assumption. By continuity of the Hessian, there exists $r>0$ such that
$DF(\xi)$ remains invertible for all $\|\xi\|\le r$.
\noindent
By the uniform convergence established above,
\[
\sup_{\|\xi\|\le r} \|\hat F_N(\xi)-F(\xi)\| \to 0 \quad a.s.,
\]
so for sufficiently large $N$, the perturbation
($\hat F_N - F$) is uniformly small on $B(0,r)$.

Since $DF(0)$ is invertible and $F(0)=0$, there exists a unique zero of $F$
inside $B(0,r)$, namely $\xi=0$. The manifold version of the Implicit Function
theorem \citep{lee2018introduction}; for $N$ sufficiently large,
$\hat F_N$ also admits a unique zero $\xi_N$ inside $B(0,r)$ and that this zero
depends continuously on the perturbation $\hat F_N$. Hence,
\[
\xi_N \longrightarrow 0 \quad \text{almost surely}.
\]

Finally, since the Riemannian exponential map is a diffeomorphism in a normal
neighborhood, the convergence $\xi_N\to 0$ implies
\[
\hat m_N = \exp_m(\xi_N) \to \exp_m(0) = m
\quad \text{almost surely},
\]
which completes the proof.
\end{proof}

\begin{theorem} \label{thm:consistency}
Assume the following conditions hold:  
\begin{enumerate}
    \item $f_\sigma$ is continuous and bounded with compact support in $\mathbb{D}^p_c$.
    \item Each mode of $f_\sigma$ is isolated and nondegenerate.
\end{enumerate}
Then for each $m_j \in \mathcal{M}_\sigma$, there exists $\hat m_j \in \hat{\mathcal{M}}_\sigma$ such that
\[
d_{\mathbb{H}}(\hat m_j, m_j) \xrightarrow{\mathbb{P}} 0.
\] that is convergence in probability.\\
Equivalently, in terms of the distance,
\[
d_{\mathbb{H}}(\hat{\mathcal{M}}_\sigma, \mathcal{M}_\sigma) \xrightarrow{\mathbb{P}} 0.
\]
\end{theorem}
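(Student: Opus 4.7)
The plan is to combine the uniform convergence result of Lemma \ref{lem:uniform} with the local stability result of Lemma \ref{lem:stability}, and then upgrade the pointwise (per-mode) almost-sure statements to the two-sided probability statement for the entire mode set. Since $f_\sigma$ has compact support and each of its modes is isolated and nondegenerate by assumption, the set $\mathcal{M}_\sigma=\{m_1,\dots,m_K\}$ is finite; I would fix such an enumeration at the outset and choose pairwise disjoint geodesic balls $B_{\mathbb{H}}(m_j,r_j)$ around each mode.

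First I would apply Lemma \ref{lem:stability} to each fixed mode $m_j$: since $m_j$ is a nondegenerate critical point of $f_\sigma$, there exists, almost surely for $N$ sufficiently large, a unique critical point $\hat m_{N,j}$ of $\hat f_\sigma$ in $B_{\mathbb{H}}(m_j, r_j)$ with $d_{\mathbb{H}}(\hat m_{N,j}, m_j)\to 0$ a.s. The next step is to verify that $\hat m_{N,j}$ is in fact an element of $\hat{\mathcal{M}}_\sigma$, that is a \emph{local maximum} of $\hat f_\sigma$ and not merely a critical point. For this I would invoke uniform convergence of the Hessian: by an argument parallel to Lemma \ref{lem:uniform} applied to the second covariant derivatives of the smooth kernel $K_\sigma$ on the compact product $\mathcal{U}\times\mathrm{supp}(f)$, the empirical Hessian $\mathrm{Hess}_{\mathbb{H}}\hat f_\sigma$ converges uniformly to $\mathrm{Hess}_{\mathbb{H}} f_\sigma$. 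Combined with continuity of the Hessian and negative definiteness at $m_j$, this ensures $\mathrm{Hess}_{\mathbb{H}}\hat f_\sigma(\hat m_{N,j})$ is also negative definite for all sufficiently large $N$, so $\hat m_{N,j}\in \hat{\mathcal{M}}_\sigma$ eventually. Since almost sure convergence implies convergence in probability, this already yields $\max_{1\le j\le K} d_{\mathbb{H}}(\hat m_{N,j},m_j)\xrightarrow{\mathbb{P}} 0$, which is the first (one-sided) half of the claim.

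To obtain the symmetric (Hausdorff-type) statement $d_{\mathbb{H}}(\hat{\mathcal{M}}_\sigma,\mathcal{M}_\sigma)\xrightarrow{\mathbb{P}} 0$, I would rule out spurious empirical modes outside $\bigcup_j B_{\mathbb{H}}(m_j,r_j)$. Let $\mathcal{U}\subset\mathbb{D}_c^p$ be a compact geodesically convex set containing $\mathrm{supp}(f)$ and hence all modes. On the closed set $\mathcal{U}\setminus \bigcup_j B_{\mathbb{H}}(m_j,r_j)$, the continuous function $x\mapsto \|\nabla_{\mathbb{H}} f_\sigma(x)\|$ has no zeros (by isolatedness of the critical points) and therefore attains a strictly positive minimum $\eta>0$ by compactness. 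The uniform gradient bound of Lemma \ref{lem:uniform} then gives $\|\nabla_{\mathbb{H}} \hat f_\sigma(x)\|\ge \eta/2$ for large $N$ on that set, so no critical point of $\hat f_\sigma$ can lie there. Combined with the local bijection $m_j\leftrightarrow \hat m_{N,j}$ from the previous step, this gives $\hat{\mathcal{M}}_\sigma\subseteq \bigcup_j B_{\mathbb{H}}(m_j,r_j)$ eventually, and letting $\max_j r_j\downarrow 0$ (with $N$ chosen increasingly large) yields the two-sided Hausdorff convergence in probability.

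The main obstacle I anticipate is precisely the upgrade from uniform convergence of $\hat f_\sigma$ and $\nabla_{\mathbb{H}}\hat f_\sigma$ (given directly by Lemma \ref{lem:uniform}) to uniform convergence of $\mathrm{Hess}_{\mathbb{H}} \hat f_\sigma$. This requires an equicontinuity estimate on the second covariant derivatives of $K_\sigma(x,y)$ in the first argument, controlled in terms of the conformal factor $\lambda_x=2/(1-\kappa\|x\|^2)$ which blows up near the boundary of $\mathbb{D}_c^p$; restricting to $\mathcal{U}$ keeps $\lambda_x$ bounded and makes the Hessian Lipschitz, after which the same $\varepsilon$-net plus strong law argument as in Lemma \ref{lem:uniform} closes the gap. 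Apart from this bookkeeping on the Poincaré ball, the remaining ingredients, the Riemannian implicit function theorem, the passage from almost sure to in-probability convergence, and the compactness-based gradient lower bound, are standard and should fit cleanly together.
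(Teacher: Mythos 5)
Your proposal is correct and follows essentially the same route as the paper's proof: Lemma~\ref{lem:stability} supplies the local existence, uniqueness, and convergence of an empirical critical point near each nondegenerate mode, while the compactness-based positive lower bound on $\|\nabla_{\mathbb{H}} f_\sigma\|$ away from the mode neighborhoods, combined with the uniform gradient convergence of Lemma~\ref{lem:uniform}, excludes spurious critical points and yields the two-sided Hausdorff statement as a finite maximum over the $K$ modes. The only addition is your Hessian-convergence step certifying that the empirical critical points are genuine local maxima, which the paper sidesteps by defining $\hat{\mathcal{M}}_\sigma$ as the set of stationary points returned by HypeGBMS.
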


\begin{proof}
By assumption, $f_\sigma$ has compact support in $\mathbb{D}_c^p$, say $\mathcal{U}$ so
let
\[
\mathcal{U} := \operatorname{supp}(f_\sigma),
\]
which is compact by definition.
By assumption, $f_\sigma$ has finitely many isolated, nondegenerate modes
$$
\mathcal{M}_\sigma = \{m_1,\dots,m_K\} \subset \mathcal{U},
\qquad 
\nabla_{\mathbb{H}} f_\sigma(m_j) = 0, \qquad
\mathrm{Hess}_{\mathbb{H}} f_\sigma(m_j) \text{ nonsingular}. 
$$
Since each mode is isolated, for each $j$ we choose $r_j>0$ sufficiently small so that the geodesic balls
\[
B_j := B_{\mathbb{H}}(m_j,r_j)
\]
satisfy:
\begin{enumerate}[label=(\roman*)]
    \item The sets $B_j$ are pairwise disjoint.
    \item $m_j$ is the only critical point of $f_\sigma$ in $B_j$.
\end{enumerate}

Define the partition of the compact support:
\[
\mathcal{B} := \bigcup_{j=1}^K (B_j \cap \mathcal{U}),
\qquad
\mathcal{C} := \mathcal{U} \setminus \mathcal{B}.
\]
Since $\mathcal{U}$ is compact and $\mathcal{B}$ is open in the subspace topology of $\mathcal{U}$, 
$\mathcal{C}$ is closed in $\mathcal{U}$ and therefore compact.
As $f_\sigma$ has no critical points in $\mathcal{C}$ and $\nabla_{\mathbb{H}} f_\sigma$ is continuous, the minimum of its gradient norm on $\mathcal{C}$ is strictly positive:
\begin{equation}\label{eq:C-gradient-bound}
\inf_{x\in\mathcal{C}} \|\nabla_{\mathbb{H}} f_\sigma(x)\|
:= 2\delta > 0.
\end{equation}
By Lemma~\ref{lem:uniform}, we have :
\[
\sup_{x \in \mathcal{U}} \|\nabla_{\mathbb{H}} \hat f_\sigma(x) - \nabla_{\mathbb{H}} f_\sigma(x)\| 
\xrightarrow{a.s.} 0.
\]
Therefore, for the $\delta$ from \eqref{eq:C-gradient-bound}, there exists a random $N_1$ such that for all $N\ge N_1$,
\begin{equation}\label{eq:uniform-small}
\sup_{x\in \mathcal{U}} \|\nabla_{\mathbb{H}} \hat f_\sigma(x) - \nabla_{\mathbb{H}} f(x)\| < \delta,
\quad \text{a.s.}
\end{equation}
For $x \in \mathcal{C}$ and $N\ge N_1$, using \eqref{eq:C-gradient-bound} and \eqref{eq:uniform-small},
\[
\|\nabla_{\mathbb{H}} \hat f_\sigma(x)\|
\ge \|\nabla_{\mathbb{H}} f_\sigma(x)\| - \delta
\ge 2\delta - \delta = \delta > 0.
\]
Thus, for large $N$, $\hat f_\sigma$ has no critical points in $\mathcal{C}$. Hence, all critical points of $\hat f_\sigma$ inside $\mathcal{U}$ lie in $\mathcal{B}$.

\noindent Fix $j\in\{1,\dots,K\}$. Since $m_j$ is a nondegenerate critical point, Lemma~\ref{lem:stability} implies that for sufficiently large $N$, there exists a unique critical point $\hat m_j$ of $\hat f_\sigma$ inside $B_j \cap \mathcal{U}$. Thus,
    \[
    d_{\mathbb{H}}(\hat m_j, m_j) \xrightarrow[N\to\infty]{\mathbb{P}} 0.
    \]
Now,
\begin{align*}
   d_{\mathbb{H}}(\hat{\mathcal{M}}_\sigma, \mathcal{M}_\sigma)
:= \max\Big\{
& \sup_{\hat m \in \hat{\mathcal{M}}_\sigma}\inf_{m \in \mathcal{M}_\sigma} d_{\mathbb{H}}(\hat m, m),
\; \\
& \sup_{m \in \mathcal{M}_\sigma}\inf_{\hat m \in \hat{\mathcal{M}}_\sigma} d_{\mathbb{H}}(\hat m, m)
\Big\}. 
\end{align*}
For sufficiently large $N$, each true mode $m_j$ matches uniquely to $\hat m_j$, so
\[
d_{\mathbb{H}}(\hat{\mathcal{M}}_\sigma, \mathcal{M}_\sigma)
=
\max_{1 \le j \le K} d_{\mathbb{H}}(\hat m_j, m_j).
\]
Since $K < \infty$ and each term in the RHS converges in probability to $0$, the maximum also converges in probability to $0$:
\[
d_{\mathbb{H}}(\hat{\mathcal{M}}_\sigma, \mathcal{M}_\sigma)
\xrightarrow[N\to\infty]{\mathbb{P}} 0.
\]
This completes the proof.
\end{proof}

\begin{theorem}
If the bandwidth sequence satisfies
\[
\sigma_N \to 0, \qquad N\sigma_N^p \to \infty,
\]
then the estimated centroids $\hat{\mathcal{M}}_{\sigma_N}$ converge in probability to the true modes of $f$.
\end{theorem}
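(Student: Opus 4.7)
The plan is to combine the fixed-bandwidth consistency result of Theorem~\ref{thm:consistency} with a standard bias-control argument that lets the smoothed modes $\mathcal{M}_{\sigma_N}$ track the true modes of $f$ as $\sigma_N\to 0$. Writing $\mathcal{M}_f$ for the (assumed isolated, nondegenerate) modes of $f$, I would apply the triangle inequality in the Hausdorff sense on the compact support of $f$:
\[
d_{\mathbb{H}}(\hat{\mathcal{M}}_{\sigma_N},\mathcal{M}_f)
\le d_{\mathbb{H}}(\hat{\mathcal{M}}_{\sigma_N},\mathcal{M}_{\sigma_N})
+ d_{\mathbb{H}}(\mathcal{M}_{\sigma_N},\mathcal{M}_f),
\]
and show that each summand converges to zero in probability.

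For the first (stochastic) summand, I would revisit Lemma~\ref{lem:uniform}: its proof relied on a fixed $\sigma$ so that $\{\varphi_x\}$ is uniformly bounded and uniformly Lipschitz in $x$. When $\sigma=\sigma_N\to 0$, both constants deteriorate, but at a controlled rate. The classical uniform KDE bound
\[
\sup_{x\in\mathcal{U}}\bigl|\hat f_{\sigma_N}(x)-f_{\sigma_N}(x)\bigr|
= O_p\!\Bigl(\sqrt{\tfrac{\log N}{N\sigma_N^p}}\,\Bigr),
\]
together with its gradient analogue, goes to zero precisely when $N\sigma_N^p\to\infty$. Given this varying-bandwidth version of Lemma~\ref{lem:uniform}, the argument of Theorem~\ref{thm:consistency} applies verbatim along the sequence $\sigma_N$, yielding $d_{\mathbb{H}}(\hat{\mathcal{M}}_{\sigma_N},\mathcal{M}_{\sigma_N}) \xrightarrow{\mathbb{P}} 0$.

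For the second (bias) summand, I would treat $\sigma_N\to 0$ as a deterministic perturbation. The kernel $K_{\sigma_N}$ behaves as an approximate identity on the Poincaré ball, so under $C^2$ regularity of $f$, Taylor expansion in normal coordinates (accounting for the hyperbolic volume factor $d\mathrm{vol}_{\mathbb{H}}$) gives $\sup_{x\in\mathcal{U}}|f_{\sigma_N}(x)-f(x)| = O(\sigma_N^2)$, with parallel gradient and Hessian bounds. Reprising the implicit-function argument of Lemma~\ref{lem:stability} with $\hat f_\sigma$ replaced by $f_{\sigma_N}$ and $f_\sigma$ replaced by $f$, each true mode $m_j\in\mathcal{M}_f$ yields a unique nearby critical point $m_j^{\sigma_N}$ of $f_{\sigma_N}$ with $d_{\mathbb{H}}(m_j^{\sigma_N},m_j)\to 0$; since the number of modes is finite, this forces $d_{\mathbb{H}}(\mathcal{M}_{\sigma_N},\mathcal{M}_f)\to 0$. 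Combining with the previous step gives the claim.

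The hard part will be the joint control when $\sigma$ varies with $N$: the bias vanishes only under extra smoothness of $f$, while the variance's uniform rate depends delicately on $\sigma_N$ through the covering numbers of the kernel class on the manifold. Standard VC/empirical-process bounds adapted to the Poincaré ball (exploiting that distance-to-a-point functions form a uniformly bounded, equi-Lipschitz class on the compact support) resolve this, but they implicitly demand hypotheses beyond those stated in Theorem~\ref{thm:consistency}, in particular $C^2$ regularity of $f$ and nondegeneracy of the true modes of $f$ (not merely of $f_\sigma$). These should be recorded explicitly for the implicit-function step to go through in the bias analysis.
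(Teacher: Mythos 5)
Your proposal is correct and rests on the same two ingredients as the paper's proof --- a stochastic term controlled by a varying-bandwidth version of Lemma~\ref{lem:uniform} under $N\sigma_N^p\to\infty$, a bias term controlled by the approximate-identity behaviour of $K_{\sigma_N}$ as $\sigma_N\to 0$, and a nondegeneracy/implicit-function step to convert gradient convergence into mode convergence --- but you organise them differently. You apply the triangle inequality at the level of mode sets, passing through the intermediate smoothed mode set $\mathcal{M}_{\sigma_N}$, whereas the paper applies it at the level of gradient fields, establishing $\sup_{x\in\mathcal{U}}\|\nabla_{\mathbb{H}}\hat f_{\sigma_N}(x)-\nabla_{\mathbb{H}} f(x)\|\xrightarrow{\mathbb{P}}0$ and then running the partition/stability argument of Theorem~\ref{thm:consistency} exactly once with $f$ itself as the population target. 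The paper's route is slightly cleaner because the gradient lower bound $2\delta$ on the no-critical-point region and the radii $r_j$ are fixed once and for all from the nondegeneracy of the modes of $f$; your route additionally requires that the modes of $f_{\sigma_N}$ be \emph{uniformly} nondegenerate along the sequence (so that the constants in the first summand do not degenerate as $N\to\infty$), which does follow from your $C^2$ bias bound but is an extra observation you should make explicit. Your flagging of the hidden hypotheses --- $C^2$ regularity of $f$ and nondegeneracy of the modes of $f$ rather than of $f_\sigma$ --- is well taken: the paper indeed assumes these silently inside its proof without listing them in the theorem statement. Your uniform rate $O_p\bigl(\sqrt{\log N/(N\sigma_N^p)}\bigr)$ is also the more careful statement; strictly speaking it vanishes under $N\sigma_N^p/\log N\to\infty$, which is marginally stronger than the stated $N\sigma_N^p\to\infty$, a gap the paper glosses over as well.
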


\begin{proof}
We now let the bandwidth depend on $N$, writing $\sigma_N \to 0$ with
$N\sigma_N^p \to \infty$, and the KDE
\[
  \hat f_{\sigma_N}(x)
  = \frac{1}{N}\sum_{i=1}^n 
    \exp\!\Big(-\tfrac{d_{\mathbb{H}}^2(x,X_i)}{2\sigma_N^2}\Big).
\]
Let $f$ be the true density with compact support $\mathcal{U} := \operatorname{supp}(f)$,
and for each $\sigma>0$ define the population smoothed density
\[
  f_\sigma(x)
  = \int_{\mathbb{D}_c^p} K_\sigma(x,y) f(y)\, d\mathrm{vol}_{\mathbb{H}}(y).
\]

\noindent We assume that $f$ has finitely many isolated, nondegenerate modes
$\mathcal{M} = \{m_1,\dots,m_K\}$ in $\mathcal{U}$, so that each $m_j$ satisfies
\[
  \nabla_{\mathbb{H}} f(m_j) = 0,
  \qquad
  \mathrm{Hess}_{\mathbb{H}} f(m_j) \ \text{is nonsingular}.
\]
The goal is to show that the empirical centroids
$\hat{\mathcal{M}}_{\sigma_N}$ produced by HypeGBMS converge in probability
to the true modes $\mathcal{M}$. \\
\noindent
Consider the gradient error decomposition
$$
      \nabla_{\mathbb{H}} \hat f_{\sigma_N}(x) - \nabla_{\mathbb{H}} f(x)
  = (\nabla_{\mathbb{H}} \hat f_{\sigma_N}(x)
                    - \nabla_{\mathbb{H}} f_{\sigma_N}(x))
  \;+\; 
 (\nabla_{\mathbb{H}} f_{\sigma_N}(x)
                    - \nabla_{\mathbb{H}} f(x)).
$$
We control each term uniformly over the compact support $\mathcal{U}$. \\
\noindent
For each fixed $N$, Lemma~\ref{lem:uniform} (applied with bandwidth $\sigma_N$)
gives the \emph{conditional} uniform convergence
\[
  \sup_{x\in \mathcal{U}}
    \big\|\nabla_{\mathbb{H}} \hat f_{\sigma_N}(x) 
         - \nabla_{\mathbb{H}} f_{\sigma_N}(x)\big\|
  \xrightarrow[N\to\infty]{\mathbb{P}} 0
\]
under the bandwidth condition $N\sigma_N^p \to \infty$. This follows from
classical kernel density estimation theory on manifolds: the effective sample
size in a ball of radius $O(\sigma_N)$ is $N\sigma_N^p$, and the variance term
is of order $(N\sigma_N^p)^{-1/2}$, which vanishes under the assumed condition.
Thus,
\begin{equation}\label{eq:stochastic-grad}
  \sup_{x\in \mathcal{U}}
    \big\|\nabla_{\mathbb{H}} \hat f_{\sigma_N}(x) 
         - \nabla_{\mathbb{H}} f_{\sigma_N}(x)\big\|
  \xrightarrow[N\to\infty]{\mathbb{P}} 0.
\end{equation} \\
\noindent
Because $f$ is continuous with compact support and the kernel
$K_\sigma(x,y) = \exp(-d_{\mathbb{H}}^2(x,y)/(2\sigma^2))$ is smooth and
acts as an approximate identity as $\sigma\to 0$, standard arguments in
Riemannian kernel smoothing \citep{pelletier2005kernel}(using normal coordinates and Taylor expansion
of $f$) imply that
\begin{align*}
    & \sup_{x\in \mathcal{U}} |f_{\sigma_N}(x) - f(x)| \to 0, \\
   & \sup_{x\in \mathcal{U}} \|\nabla_{\mathbb{H}} f_{\sigma_N}(x) - \nabla_{\mathbb{H}} f(x)\|
  \to 0
\end{align*}
as $N\to\infty$ (since $\sigma_N\to 0$). Thus,
\begin{equation}\label{eq:bias-grad}
  \sup_{x\in \mathcal{U}}
    \big\|\nabla_{\mathbb{H}} f_{\sigma_N}(x)
         - \nabla_{\mathbb{H}} f(x)\big\|
  \xrightarrow[N\to\infty]{} 0.
\end{equation} \\
\noindent
Combining \eqref{eq:stochastic-grad} and \eqref{eq:bias-grad} via the triangle
inequality, we obtain
$$
    \sup_{x\in \mathcal{U}}
    \big\|\nabla_{\mathbb{H}} \hat f_{\sigma_N}(x) - \nabla_{\mathbb{H}} f(x)\big\| 
  \le 
   \sup_{x\in \mathcal{U}} \big\|\nabla_{\mathbb{H}} \hat f_{\sigma_N}(x)
                 - \nabla_{\mathbb{H}} f_{\sigma_N}(x)\big\| 
   + \sup_{x\in \mathcal{U}} \big\|\nabla_{\mathbb{H}} f_{\sigma_N}(x)
                  - \nabla_{\mathbb{H}} f(x)\big\|. 
$$
\noindent
Thus, the empirical gradient field $\nabla_{\mathbb{H}} \hat f_{\sigma_N}$
converges uniformly in probability to the true gradient field
$\nabla_{\mathbb{H}} f$ on the compact set $\mathcal{U}$:
\begin{equation}\label{eq:unif-grad-to-f}
  \sup_{x\in \mathcal{U}}
    \big\|\nabla_{\mathbb{H}} \hat f_{\sigma_N}(x) - \nabla_{\mathbb{H}} f(x)\big\|
  \xrightarrow[N\to\infty]{\mathbb{P}} 0.
\end{equation} \\
\noindent
As in the proof of Theorem~\ref{thm:consistency} (fixed bandwidth case),
we exploit the nondegeneracy of the modes of $f$.
\noindent
For each mode $m_j\in\mathcal{M}$, choose a radius $r_j>0$ such that the
balls
\[
  B_j := B_{\mathbb{H}}(m_j,r_j)
\]
are pairwise disjoint and each $B_j$ contains no other critical point of $f$
than $m_j$. Define
\[
  \mathcal{B} := \bigcup_{j=1}^K (B_j\cap \mathcal{U}),
  \qquad
  \mathcal{C} := \mathcal{U} \setminus \mathcal{B}.
\]
As before, $\mathcal{C}$ is compact, contains no critical points of $f$, and
by continuity of $\nabla_{\mathbb{H}} f$ there exists $\delta>0$ such that
\begin{equation}\label{eq:C-grad-bound-f}
  \inf_{x\in\mathcal{C}} \|\nabla_{\mathbb{H}} f(x)\|
  \ge 2\delta > 0.
\end{equation}
By \eqref{eq:unif-grad-to-f}, for any $\varepsilon>0$,
\[
  \mathbb{P}\!\left(
    \sup_{x\in \mathcal{U}} \|\nabla_{\mathbb{H}} \hat f_{\sigma_N}(x) 
                   - \nabla_{\mathbb{H}} f(x)\|
    > \varepsilon
  \right)
  \to 0.
\]
Take $\varepsilon=\delta$. Then with probability tending to $1$ as $N\to\infty$,
we have
\[
  \sup_{x\in \mathcal{U}} \|\nabla_{\mathbb{H}} \hat f_{\sigma_N}(x) 
                 - \nabla_{\mathbb{H}} f(x)\|
  < \delta.
\]
On this event, for all $x\in\mathcal{C}$,
\[
  \|\nabla_{\mathbb{H}} \hat f_{\sigma_N}(x)\|
  \ge \|\nabla_{\mathbb{H}} f(x)\| - \delta
  \ge 2\delta - \delta = \delta > 0,
\]
using \eqref{eq:C-grad-bound-f}. Thus, with probability tending to $1$, the
empirical density $\hat f_{\sigma_N}$ has no critical point in $\mathcal{C}$,
and all its critical points in $\mathcal{U}$ must lie inside $\mathcal{B}$.
\noindent Near each $m_j$, the nondegenerate critical point property for $f$ and
the uniform convergence \eqref{eq:unif-grad-to-f} allow us to invoke the same
stability argument as in Lemma~\ref{lem:stability} (now with $f$ as the
population target and $\hat f_{\sigma_N}$ as the empirical estimator). This
shows:

\begin{enumerate}[label=(\roman*)]
  \item For each $j$, with probability tending to $1$, there exists a unique
  critical point $\hat m_j^N$ of $\hat f_{\sigma_N}$ inside $B_j$.
  \item Moreover,
  \[
    d_{\mathbb{H}}(\hat m_j^N, m_j) \xrightarrow[N\to\infty]{\mathbb{P}} 0.
  \]
\end{enumerate} 
Let $\hat{\mathcal{M}}_{\sigma_N}$ denote the set of all critical points of
$\hat f_{\sigma_N}$ in $\mathcal{U}$.
By the previous step, with probability tending to $1$ we have,
\[
  \hat{\mathcal{M}}_{\sigma_N}
  = \{\hat m_1^N,\dots,\hat m_K^N\},
\]
with each $\hat m_j^N$ lying in $B_j$ and converging in probability to $m_j$.

\noindent Define the Hausdorff distance between the finite sets
$\hat{\mathcal{M}}_{\sigma_N}$ and $\mathcal{M}$ by,
\begin{align*}
      d_{\mathbb{H}}(\hat{\mathcal{M}}_{\sigma_N},\mathcal{M})
  := \max\Big\{
   & \sup_{\hat m\in\hat{\mathcal{M}}_{\sigma_N}}
      \inf_{m\in\mathcal{M}} d_{\mathbb{H}}(\hat m,m),\; \\
   & \sup_{m\in\mathcal{M}}
      \inf_{\hat m\in\hat{\mathcal{M}}_{\sigma_N}}
      d_{\mathbb{H}}(\hat m,m)
  \Big\}.
\end{align*}

\[
  d_{\mathbb{H}}(\hat{\mathcal{M}}_{\sigma_N},\mathcal{M})
  = \max_{1\le j\le K} d_{\mathbb{H}}(\hat m_j^N, m_j).
\]
Since $K<\infty$ and $d_{\mathbb{H}}(\hat m_j^N, m_j)\to 0$ in probability for
each $j$, it follows by a union bound \citep{wainwright2019high} that
\[
  d_{\mathbb{H}}(\hat{\mathcal{M}}_{\sigma_N},\mathcal{M})
  \xrightarrow[N\to\infty]{\mathbb{P}} 0.
\]

\noindent This shows that, under the bandwidth conditions $\sigma_N\to 0$ and
$N\sigma_N^p\to\infty$, the estimated centroids
$\hat{\mathcal{M}}_{\sigma_N}$ converge in probability to the true set of
modes of $f$. Since HypeGBMS converges algorithmically to empirical modes, the centroids are statistically consistent for the true population modes.
\end{proof}


\section{Experiments} \label{experiments}
\subsection{Details of the Datasets.}
We validate the efficacy of HypeGBMS on $11$ real-world datasets. Iris, Glass, Ecoli, Wine, Wisconsin B.C, Phishing URL, Abalone, Glass, Zoo, ORHD (Optical Recognition of Handwritten Digits) datasets are taken from the UCI machine learning repository \citep{dua2017uci}; Flights, MNIST, Pendigits, ORL face datasets are taken from Kaggle.

\begin{table*}[h]
\centering
\caption{Comparison of Clustering Performance across multiple methods, $k$-means, GMS, GBMS, DBSCAN, Spectral, GMM, QuickMeanShift, WBMS, Grid-shift, HSFC with our proposed HypeGBMS on $11$ Real-world datasets, presented. The best and second-best results are highlighted in boldface and underlined, respectively.}
\label{tab:real-dataset}
\resizebox{\columnwidth}{!}{\begin{Huge}
\scriptsize
\begin{tabular}{llcccccccccccccc}
\toprule
\textbf{Dataset} & \textbf{Metric} & \textbf{$k$-means} & \textbf{GMS} & \textbf{GBMS} & \textbf{GMM} & \textbf{DBSCAN}  & \textbf{Spectral} & \textbf{QuickMeanShift} & \textbf{WBMS} & \textbf{Gridshift} & \textbf{HSFC} & \textbf{HypeGBMS (Ours)} \\
\midrule
\multirow{2}{*}{Iris} 
& ARI & \underline{0.742} & 0.563 & 0.568 & 0.507 & 0.518 & 0.418 & 0.701 & 0.568 & 0.714 & 0.621 & \textbf{0.755} \\
& NMI & \underline{0.767} & 0.717 & 0.734 & 0.614 & 0.626 & 0.509 & 0.712 & 0.733 & 0.754 & 0.663 & \textbf{0.794} \\
\midrule 
\multirow{2}{*}{Glass} 
& ARI & 0.168 & 0.187 & \underline{0.261} & 0.205 & 0.225 & 0.142 & 0.256 & 0.157 & 0.234 & 0.258 & \textbf{0.281} \\
& NMI & 0.306 & 0.361 & \underline{0.438} & 0.361 & 0.358 & 0.265 & 0.427  & 0.238 & 0.314 & 0.428 & \textbf{0.477} \\
\midrule
\multirow{2}{*}{Ecoli} 
& ARI & 0.384 & 0.661 & \underline{0.669} & 0.637 & 0.435 & 0.391 & 0.354 & 0.038 & 0.484 & 0.431 & \textbf{0.673} \\
& NMI & 0.534 & 0.626 & \underline{0.631} & 0.614 & 0.437 & 0.592 & 0.513 & 0.112 & 0.515 & 0.565 & \textbf{0.635} \\
\midrule
\multirow{2}{*}{Wine} 
& ARI & 0.352 & 0.103 & 0.562 & 0.398 & 0.329 & \textbf{0.881} & 0.166 & 0.802 & 0.216 & 0.371 & \underline{0.813} \\
& NMI & 0.423 & 0.346 & 0.588 & 0.585 & 0.419 & \textbf{0.860} & 0.319 & 0.795 & 0.354 & 0.429 & \underline{0.838} \\
\midrule
\multirow{2}{*}{Wisconsin B.C.} 
& ARI & 0.244 & 0.667 & 0.725 & 0.265 & 0.297 & \underline{0.871} & 0.644 & 0.679 & 0.454 & 0.718 & \textbf{0.882} \\
& NMI & 0.402 & 0.469 & 0.661 & 0.362 & 0.456 & \underline{0.783} & 0.546 & 0.584 & 0.473 & 0.742 & \textbf{0.801} \\
\midrule
\multirow{2}{*}{Zoo} 
& ARI & 0.714 & 0.544 & \underline{0.794} & 0.674 & 0.515 & 0.513 & 0.338 & 0.751 & 0.442 & 0.499 & \textbf{0.807} \\
& NMI & 0.771 & 0.614 & \underline{0.821} & 0.789 & 0.678 & 0.746 & 0.526 & 0.784 & 0.578 & 0.717 & \textbf{0.846} \\
\midrule
\multirow{2}{*}{Phishing (5K)} 
& ARI & 0.001 & 0.441 & 0.499 & 0.392 & 0.005 & \underline{0.625} & 0.224 & 0.102 & 0.374 & 0.121 & \textbf{0.921} \\
& NMI & 0.002 & 0.594 & 0.401 & 0.543 & 0.011 & \underline{0.597} & 0.371 & 0.128 & 0.387 & 0.145 & \textbf{0.866} \\
\midrule
\multirow{2}{*}{MNIST (5K)} 
& ARI & 0.235 & 0.361 & 0.142 & 0.269 & 0.141 & \underline{0.392} & 0.314 & 0.001 & 0.218 & 0.361 & \textbf{0.584} \\
& NMI & 0.381 & 0.465 & 0.165 & 0.411 & 0.278 & \underline{0.503} & 0.401 & 0.002 & 0.294 & 0.489 & \textbf{0.701} \\
\midrule
\multirow{2}{*}{ORHD} 
& ARI & 0.281 & 0.342 & 0.517 & 0.358 & 0.109 & \underline{0.535}  & 0.155  & 0.001 & 0.164 & 0.257 & \textbf{0.593} \\
& NMI & 0.334 & 0.421 & 0.661 & 0.561 & 0.259 & \underline{0.632} & 0.358 & 0.004 & 0.315 & 0.551 & \textbf{0.702} \\
\midrule
\multirow{2}{*}{ORL} 
& ARI & 0.341 & 0.413 & 0.517 & 0.358 & 0.406 & 0.535 & 0.427 & 0.361 & 0.315 & \underline{0.561} & \textbf{0.568} &  \\
& NMI & 0.385 & 0.512 & 0.591 & 0.561 & 0.484 & 0.632 & 0.628 & 0.432 & 0.377 & \underline{0.641} & \textbf{0.653} \\
\midrule
\multirow{2}{*}{Pendigits} 
& ARI & 0.538 & 0.499 & 0.577 & \underline{0.639} & 0.158 & 0.580 & 0.257 & 0.121 & 0.253 & 0.523 & \textbf{0.667} \\
& NMI & 0.673 & 0.681 & 0.714 & \underline{0.754} & 0.312 & 0.717 & 0.393 & 0.336 & 0.402 & 0.706 & \textbf{0.776} \\
\bottomrule
\end{tabular}
\end{Huge}}

\end{table*}

\subsection{Experimental Setup \& Baselines.}
The performance of HypeGBMS is evaluated using two well-known performance metrics: the Adjusted Rand Index (ARI) \citep{ari} and the Normalized Mutual Information (NMI) \citep{nmi}. The notable base methods, namely $k$-means \citep{kmeans}, Gaussian Mean Shift (GMS) \citep{mean}, GBMS \citep{fastgbms}, GMM \citep{gmm}, DBSCAN \citep{dbscan}, Spectral \citep{spectral}, along with the State-of-the-Art methods like QuickMeanshift \citep{quickmeanshift}, WBMS \citep{wbms}, GridShift \citep{gridshift}, HSFC \citep{hsfc} clustering methods, are considered for comparison with our proposed HypeGBMS method. A brief description of the data sets used for the experimentation is given below. 

We demonstrate the effectiveness of the proposed framework using the Berkeley Segmentation Data Set (BSDS500) \citep{martinbsds}, and the PASCAL VOC 2012 dataset \citep{everingham}. We utilize the four metrics: Segmentation Covering (SC), Probabilistic Rand Index (PRI) \citep{pri}, Variation of Information (VoI) \citep{voi}, 
and F1-Score \citep{f1score} to compare the results quantitatively. Considering the SC, PRI, and F1-score, higher values represent better results. For VoI, lower values denote better segmentation.

\begin{table*}[ht]
\centering
\caption{Quantitative Results on the BSDS500 \citep{martinbsds} dataset across multiple methods, $k$-means, GMS, GBMS, DBSCAN, Spectral, GMM, QuickMeanShift, WBMS, Grid-shift, HSFC with our proposed HypeGBMS.}
\label{tab:bsds500}
\scriptsize
\begin{tabular}{llcccc}
\toprule
Methods & Venue & SC & Vol & F1-Score \\
\midrule
$k$-means (\citet{kmeans}) & [TPAMI'02]   & 0.1785 & 2.6851 & 0.1611 \\
GMS (\citet{mean})  & [TPAMI'95] & 0.1968 & 3.0024 & 0.1636 \\
GBMS (\citet{gbms}) & [ICML'06] & 0.1984 &  
2.9714 & 0.1825 \\
GMM (\citet{gaussian})  & [Encycl. Biom'09] & 0.2165 & 2.2932 & 0.2106 \\
DBSCAN (\citet{dbscan}) & [ICADIWT'14] & 0.2247 & 2.1643 & 0.2993 \\
Spectral (\citet{spectral}) & [Stat. Comp'07] & 0.2081 & 2.2521 & 0.2611 \\
WBMS (\citet{wbms})& [AAAI'21]  & 0.1833 & 2.6668 & 0.3129 \\
QuickMeanShift (\citet{quickmeanshift}) & [AAAI'23]  & 0.1665 & 3.0456 & 0.2784 \\
Gridshift (\citet{gridshift}) & [CVPR'22] & 0.2016  & 2.9814 & 0.2997 \\
HSFC (\citet{hsfc}) & [PR'10] & 0.2036 & 2.2341 & 0.2774 \\
\midrule
\textbf{HypeGBMS (Ours)} &  & \textbf{0.2305} & \textbf{2.0441} & \textbf{0.3358} \\
\bottomrule
\end{tabular}
\end{table*}

\subsection{Experiment on Datasets.} We carried out our experiments on $11$ Real-world datasets. The performance noted in Table \ref{tab:real-dataset} shows that our proposed method, HypeGBMS, outperforms almost all competitors in terms of ARI and NMI. 

\begin{table*}[h]
\centering
\scriptsize
\caption{Quantitative Results on PASCAL VOC 2012 \citep{everingham} dataset across multiple methods, $k$-means, GMS, GBMS, DBSCAN, Spectral, GMM, QuickMeanShift, WBMS, Grid-shift, HSFC with our proposed HypeGBMS.}
\label{tab:pascalvoc}
\scriptsize
\begin{tabular}{llccc}
\toprule
Methods & Venue & PRI & Vol & F1-Score \\
\midrule
$k$-means (\citet{kmeans}) & [TPAMI'02] & 0.4001 & 0.9285 & 0.5845 \\
GMS (\citet{mean}) & [TPAMI'95] & 0.3755 & 1.1951 & 0.5969 \\
GBMS (\citet{gbms}) & [ICML'06] & 0.3977 & 1.2514 & 0.6124 \\
GMM (\citet{gmm}) & [Encycl. Biom'09] & 0.4019 & 0.9087 & 0.6187 \\
DBSCAN (\citet{dbscan}) & [ICADIWT'14] & 0.4112 & 0.9142 & 0.5832 \\
Spectral (\citet{spectral}) & [Stat Comp'07] & 0.3971 & 0.9566 & 0.6194 \\
WBMS (\citet{wbms}) & [AAAI'21] & 0.3881 & 1.0231 & 0.5158 \\
QuickMeanShift (\citet{quickmeanshift}) & [AAAI'23] & 0.3645 & 1.1456  & 0.5874 \\
Gridshift (\citet{gridshift}) & [CVPR'22] & 0.4015 & 1.1671 & 0.6148 \\
HSFC (\citet{hsfc}) & [PR'10] & 0.3614 & 1.2045 & 0.5236 \\
\midrule
\textbf{HypeGBMS (Ours)} & & \textbf{0.4296} & \textbf{0.8848} & \textbf{0.6388} \\
\bottomrule
\end{tabular}
\end{table*}

We have evaluated our proposed method, HypeGBMS, on the BSDS500 \& PASCAL VOC 2012 datasets and compared its performance with the other contender methods. We use SC, VoI, F1-Score for the BSDS500 dataset and PRI, VoI, and F1-Score for the PASCAL VOC 2012 dataset. Table \ref{tab:bsds500} shows the quantitative results on the BSDS500 dataset. Table \ref{tab:pascalvoc} shows the quantitative performance of different indices on the methods in the PASCAL VOC 2012 dataset. The performances in both tables demonstrate that our proposed method, HypeGBMS, outperforms almost all its contenders in terms of SC, PRI, VoI, and F1-Score.

The experimental results on the Phishing dataset in Table \ref{tab:performance} clearly show that HypeGBMS outperforms all other clustering methods. HypeGBMS consistently achieves the highest ARI and NMI scores across both data scales (1k and 5k), demonstrating its strong ability to capture the underlying cluster structure. 
Traditional mean-shift variants such as GMS, GBMS, QuickMeanShift, and Grid Shift show moderate to low accuracy despite reasonable computation times, underscoring their limitations on this dataset. Hyperbolic and graph-based methods like WBMS, and HSFC tend to be less accurate or slower. In contrast, HypeGBMS provides an excellent balance, achieving the best clustering quality while maintaining computational efficiency, making it the most effective method among all these methods.

\subsection{Experiments with Image Segmentation.}
We have performed experiments on our proposed method, HypeGBMS, with other contenders for image segmentation on some images of the BSDS500\citep{martinbsds} datasets, shown in the Figure \ref{fig:bsds500}. Similarly, we also done the same experiment on the PASCAL VOC 2012\citep{everingham} dataset, shown in the Figure \ref{fig:pascal-voc}. 
Comparative analysis was performed using benchmark datasets, where we carefully measured segmentation accuracy and robustness under varying conditions. Across all evaluation metrics, our method consistently outperformed competing techniques, demonstrating superior boundary preservation, higher segmentation accuracy, and improved adaptability to complex structures. These results clearly highlight the strength of our approach and its potential to advance the state-of-the-art methods in image segmentation tasks.

\begin{figure}[h]
    \centering
    \includegraphics[width= 0.55\linewidth]{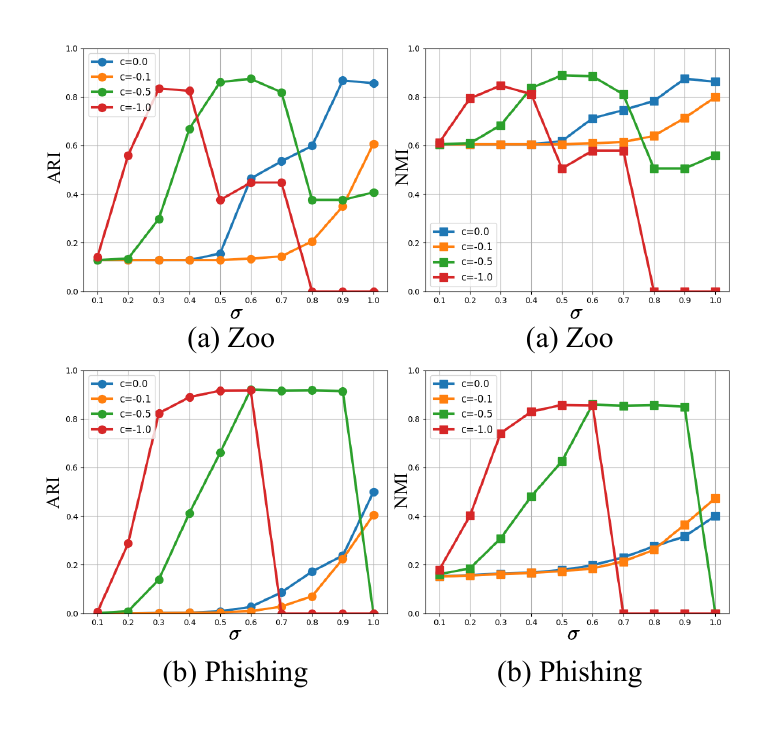}
    \caption{The performance metrics (ARI, NMI) vs bandwidth ($\sigma$) plots for different values of curvature (c) for our proposed HypeGBMS for two real-world datasets: (a) Zoo, (b) Phishing URL, respectively.}
    \label{fig:ablation}
\end{figure}

\begin{table*}[h]
\caption{Clustering performance comparison with runtime on Phishing dataset(for $1k$ and $5k$ samples).}
\label{tab:performance}
\centering
\scriptsize
\resizebox{0.6\columnwidth}{!}{
\begin{tabular}{c|c|c|c|c|c}
\toprule
\textbf{Dataset} & \textbf{Method} & \textbf{N} & \textbf{Time (sec)} & \textbf{ARI} & \textbf{NMI} \\
\midrule
\multirow{14}{*}{\textbf{Phishing}} 
& \multirow{2}{*}{GBMS}       & 1k & 0.216 & 0.514 & 0.464 \\
&            & 5k & 1.011 & 0.499 & 0.401 \\
\cmidrule{2-6}
& \multirow{2}{*}{GMS}        & 1k & 0.326 & 0.492 & 0.622 \\
&            & 5k & 1.263 & 0.441 & 0.594 \\
\cmidrule{2-6}
& \multirow{2}{*}{QuickMeanShift} & 1k & 0.455 & 0.286 & 0.423 \\
&            & 5k & 1.994 & 0.224 & 0.371 \\
\cmidrule{2-6}
& \multirow{2}{*}{Grid Shift} & 1k & 0.664 & 0.406 & 0.414 \\
&            & 5k & 2.624 & 0.374 & 0.387 \\
\cmidrule{2-6}
& \multirow{2}{*}{WBMS}       & 1k & 1.124 & 0.141 & 0.186 \\
&            & 5k & 4.821 & 0.102 & 0.128 \\
\cmidrule{2-6}
& \multirow{2}{*}{HSFC}       & 1k & 0.471 & 0.148 & 0.197 \\
&            & 5k & 2.254 & 0.121 & 0.145 \\
\cmidrule{2-6}
    & \multirow{2}{*}{\textbf{HypeGBMS (Ours)}}   & 1k & 0.464 & \textbf{0.912} & \textbf{0.842} \\
&            & 5k & 2.124 & \textbf{0.921} & \textbf{0.866} \\
\bottomrule
\end{tabular}}
\end{table*}

\subsection{Ablation Study} We conducted an ablation study by varying the two key parameters (i) the bandwidth of the Gaussian kernel ($\sigma$), (ii) the curvature of the Poincaré disc ($c$).\\

\vspace{3pt}
\noindent
\textbf{Choice of bandwidth of Gaussian Kernel.} We have experimented on the performance metrics, ARI, NMI, with the bandwidth parameter ($\sigma$) varying from $0.1$ to $1.0$ for the different curvature values (c) $\{0.0, -0.1, -0.5, -1.0\}$ for two real datasets, Zoo and Phishing URL, respectively in Figure \ref{fig:ablation}. The plots show that for higher curvature values ($c$) such as $c = -0.5, -1.0$; optimal performance is achieved for bandwidth parameters ($\sigma$) lying within the range of $0.4$ to $0.6$.

\begin{figure}[h]
    \centering
    \includegraphics[width= 0.55\linewidth]{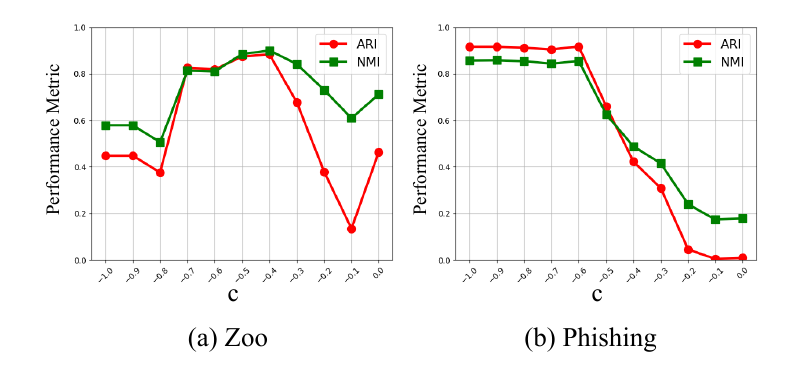}
    \caption{The Performance metrics, ARI, NMI vs curvature ($c$) for our proposed method HypeGBMS for two real-world datasets (a) Zoo, (b) Phishing URL, respectively.}
    \label{fig:curvature ablation}
\end{figure}

\begin{table}[h]
\centering
\caption{ARI for our proposed method, \textit{HypeGBMS} trained on Phishing dataset with different curvature $c$.}
\label{tab:curvature-ablation}
\begin{tabular}{|c|c|c|c|c|c|}
\hline
Curvature (c) & -0.1 & -0.2 & -0.3 & -0.4 & -0.5 \\
\hline
ARI  & 0.084 & 0.109 & 0.358 & 0.426 & 0.751 \\
\hline
Curvature (c) & -0.6 & -0.7 & -0.8 & -0.9 & -1.0 \\
\hline
ARI  & \textbf{0.921} & 0.886  & 0.918  & 0.911 & 0.894 \\
\hline
\end{tabular}
\end{table}

\noindent
\textbf{Choice of Curvature of the Poincaré disc.} We have performed an experiment on the performance metrics (ARI, NMI) with the curvature ($c$) varying from $-1.0$ to $0.0$, fixing the bandwidth ($\sigma$) value $0.6$ on the two real datasets, Zoo and Phishing URL, respectively shown in Figure \ref{fig:curvature ablation}. Table \ref{tab:curvature-ablation} shows the values for the performance metric, ARI, for different curvatures from $-1.0$ to $-0.1$ on the Phishing dataset. The optimal performance is achieved when the curvature value ($c$) is equal to $-0.6$.

\section{Conclusion \& Future Works} \label{conclusion}
Our proposed method, HypeGBMS, a hyperbolic extension of the Gaussian Blurring Mean Shift algorithm, is designed to overcome the limitations of Euclidean clustering methods on hierarchical and tree-structured data. By embedding data in the Poincaré ball and employing Möbius weighted means for mode updates, our method preserves the intrinsic geometry of hyperbolic space while retaining the density-seeking behavior of mean-shift clustering. Theoretical analysis established convergence properties and computational feasibility, while empirical evaluations confirmed that HypeGBMS provides superior performance on datasets where Euclidean approaches struggle. Beyond clustering accuracy, the ability of hyperbolic geometry to efficiently represent hierarchical relationships highlights the broader potential of our framework for tasks in complex network analysis and representation learning. 

Extending HypeGBMS to large-scale datasets through efficient approximation schemes and parallelization strategies would further broaden its applicability. Another promising direction is the development of adaptive mechanisms for selecting curvature parameters, integrating the method with modern hyperbolic deep learning architectures.

\appendix
\section{Appendix}
This Appendix provides the necessary proofs for the Convergence Analysis in
Section~\ref{convergence} of our proposed
HypeGBMS method. Let $(\mathcal{M},g)=(\mathbb{D}^p_c,g_c)$ be the $p$-dimensional Poincaré ball with sectional curvature $c=-\kappa<0$ and hyperbolic distance $d_{\mathbb{H}}$.

\subsection{Comparison bound}

The key geometric input is a pointwise bound on the Hessian of the
squared-distance function. Define, for $s\ge 0$,
\[
g_c(s) := \begin{cases}
\sqrt{\kappa}\, s\,\coth(\sqrt{\kappa}\, s), & \kappa>0\ (\text{i.e. } c<0),\\[4pt]
1, & \kappa=0.
\end{cases}
\]
Note $g_c(s)\ge 1$ for $s\ge 0$, and $g_c(s)\to 1$ as $\kappa\to 0$.

\begin{lemma}[Distance-square Hessian bound]
\label{lem:dist-hessian}
For any fixed $y\in\mathcal{U}$ the function $\phi_y(x):=\tfrac12 d_\mathbb{H}^2(x,y)$
is $\mathbb{C}^2$ on $\mathcal{U}\setminus\{y\}$ and for every $x\in\mathcal{U}$
and tangent vector $v\in T_x\mathcal{M}$,
\[
\mathrm{Hess}_x\big(\tfrac12 d_\mathbb{H}^2(\cdot,y)\big)[v,v] \le
   g_c\big(d_\mathbb{H}(x,y)\big)\,\|v\|_g^2.
\]
Consequently, for any $u\in T_x\mathcal{M}$ small enough that
$\exp_x(u)\in\mathcal{U}$,
\begin{equation}\label{eq:dist-expansion}
d_\mathbb{H}^2\!\big(\exp_x(u),y\big)
\le d_\mathbb{H}^2(x,y) - 2\langle u, \log_x y\rangle + G \,\|u\|_g^2,
\end{equation}
where $G := \sup_{s\in[0,D]} g_c(s) < \infty$.
\end{lemma}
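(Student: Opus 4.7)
The plan is to derive the pointwise Hessian bound from the explicit Jacobi field structure available in constant negative curvature, and then integrate that bound along a single geodesic to obtain the Taylor-style inequality \eqref{eq:dist-expansion}. Since $(\mathbb{D}_c^p, g_c)$ is a Cartan--Hadamard manifold, $\exp_y$ is a global diffeomorphism at every base point, so $d_\mathbb{H}(\cdot, y)$ is smooth on $\mathcal{M}\setminus\{y\}$ and hence $\phi_y := \tfrac12 d_\mathbb{H}^2(\cdot,y)$ is $C^\infty$ (a fortiori $C^2$) on $\mathcal{U}\setminus\{y\}$; this disposes of the regularity assertion immediately.

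For the Hessian bound, I would fix $x\in\mathcal{U}\setminus\{y\}$, write $s=d_\mathbb{H}(x,y)$, and let $\eta\in T_x\mathcal{M}$ denote the unit tangent at $x$ to the minimizing geodesic from $y$ to $x$. Decomposing an arbitrary $v\in T_x\mathcal{M}$ orthogonally as $v=v^{\parallel}+v^{\perp}$ with $v^{\parallel}=\langle v,\eta\rangle\eta$, the standard Jacobi-field computation on a space of constant sectional curvature $-\kappa$ yields the closed-form identities
\begin{align*}
\mathrm{Hess}_x(\phi_y)[v^{\parallel},v^{\parallel}] &= \|v^{\parallel}\|_g^{2},\\
\mathrm{Hess}_x(\phi_y)[v^{\perp},v^{\perp}] &= \sqrt{\kappa}\,s\coth(\sqrt{\kappa}\,s)\,\|v^{\perp}\|_g^{2},
\end{align*}
with no cross term by orthogonality. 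Summing the two contributions and using $g_c(s)=\sqrt{\kappa}\,s\coth(\sqrt{\kappa}\,s)\ge 1$ gives the uniform upper bound $\mathrm{Hess}_x(\phi_y)[v,v]\le g_c(s)\|v\|_g^{2}$; the Euclidean limit $g_c(s)\to 1$ as $\kappa\to 0$ is automatic.

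For \eqref{eq:dist-expansion}, I would apply the one-dimensional Taylor theorem to $h(t):=\phi_y(\gamma(t))$ along $\gamma(t)=\exp_x(tu)$ on $[0,1]$. The first derivative at $t=0$ equals $\langle u,\nabla\phi_y(x)\rangle = -\langle u,\log_x y\rangle$, using the identity $\nabla\phi_y(x) = -\log_x y$ valid on Cartan--Hadamard manifolds. The second derivative is $h''(t)=\mathrm{Hess}_{\gamma(t)}(\phi_y)[\gamma'(t),\gamma'(t)]$, and since $\gamma'$ is parallel with constant norm $\|u\|_g$ the Hessian bound gives $h''(t)\le g_c(d_\mathbb{H}(\gamma(t),y))\|u\|_g^{2} \le G\|u\|_g^{2}$, the last step because $d_\mathbb{H}(\gamma(t),y)\in[0,D]$ for all $t\in[0,1]$ thanks to geodesic convexity of $\mathcal{U}$. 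Integrating the Taylor remainder $\int_0^{1}(1-t)h''(t)\,dt\le \tfrac12 G\|u\|_g^{2}$ and doubling the resulting bound on $\phi_y$ produces exactly \eqref{eq:dist-expansion}.

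The main obstacle is the closed-form tangential Hessian, which I would obtain by solving the scalar Jacobi equation $J''(t) = \kappa J(t)$ along the minimizing geodesic from $y$ with boundary conditions $J(0)=0$, $J'(0)=1$. This yields $J(t)=\sinh(\sqrt{\kappa}\,t)/\sqrt{\kappa}$, and the logarithmic derivative $J'(s)/J(s) = \sqrt{\kappa}\coth(\sqrt{\kappa}\,s)$ is the eigenvalue of the shape operator of the geodesic sphere of radius $s$; the chain-rule factor $s$ converting the Hessian of $d$ to that of $\tfrac12 d^{2}$ then produces the claimed $g_c(s)$. Finiteness of $G$ is immediate since $g_c$ is continuous on $[0,D]$ and $D<\infty$ by compactness of $\mathcal{U}$, while the radial identity $\mathrm{Hess}_x(\phi_y)[\eta,\eta]=1$ follows from the fact that $\nabla d(\cdot,y)$ is parallel along the radial geodesic.
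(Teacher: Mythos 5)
Your proof is correct and follows essentially the same route as the paper: bound $\mathrm{Hess}_x\phi_y$ via the radial/orthogonal splitting together with $g_c(s)\ge 1$, then integrate the second-order Taylor remainder of $t\mapsto\phi_y(\exp_x(tu))$ using the identity $\nabla\phi_y(x)=-\log_x y$. The only difference is cosmetic: you derive the Hessian eigenvalues exactly by solving the Jacobi equation in constant curvature, whereas the paper invokes the Hessian comparison theorem as an inequality for $\mathrm{Hess}_x r$ and then converts to $\mathrm{Hess}_x\phi_y$ via $r\,\mathrm{Hess}\,r+\nabla r\otimes\nabla r$ --- in constant curvature the two computations coincide.
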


\begin{proof}
Since $(\mathcal{M},g)$ is a Hadamard manifold, the exponential map at any point is a global
diffeomorphism, and there is no cut locus. In particular, the distance
function $x\mapsto d_\mathbb{H}(x,y)$ is smooth on $\mathcal{M}\setminus\{y\}$.
Therefore $\phi_y$ is $\mathbb{C}^2$ on $\mathcal{U}\setminus\{y\}$.

\noindent Let $r(x):=d_\mathbb{H}(x,y)$ denote the hyperbolic distance to $y$. On a manifold
with sectional curvature bounded below by $-\,\kappa$, the Hessian
comparison theorem (see, e.g., \citep[Ch.~5]{boumal2020intromanifold}) states
that for all $x\neq y$ and $v\in T_x\mathcal{M}$,
\begin{equation} \label{eq:hess}
\mathrm{Hess}_x r[v,v]
\;\le\;
\sqrt{\kappa}\,\coth\big(\sqrt{\kappa}\,r(x)\big)
\bigl(\|v\|_g^2 - \langle \nabla r(x), v\rangle^2\bigr),
\end{equation}
with the convention that for $\kappa=0$ the right-hand side is simply
$\|v\|_g^2 - \langle \nabla r(x), v\rangle^2$ (since then $r$ is
Euclidean distance). Here $\nabla r(x)$ denotes the Riemannian gradient
of $r$ at $x$, a unit vector pointing along the geodesic from $x$ to $y$.

\noindent We now express, $\mathrm{Hess}_x(\phi_y)$ in terms of $r$ and $\mathrm{Hess}_x r$.
Recall that
\[
\phi_y(x) = \tfrac12 r(x)^2.
\]
Thus, for any $v\in T_x\mathcal{M}$,
\begin{align*}
\mathrm{Hess}_x \phi_y[v,v]
&= \mathrm{Hess}_x\Big(\tfrac12 r^2\Big)[v,v]\\
&= \tfrac12 \,\mathrm{Hess}_x(r^2)[v,v]\\
&= \tfrac12 \bigl(2\,r(x)\,\mathrm{Hess}_x r[v,v]
       + 2\,\langle \nabla r(x), v\rangle^2\bigr)\\
&= r(x)\,\mathrm{Hess}_x r[v,v] + \langle \nabla r(x), v\rangle^2.
\end{align*}
Using \eqref{eq:hess}, for $\kappa>0$ we obtain
\begin{align*}\mathrm{Hess}_x \phi_y[v,v]
&\le r(x)\,\sqrt{\kappa}\,\coth\big(\sqrt{\kappa}\,r(x)\big)
    \bigl(\|v\|_g^2 - \langle \nabla r(x), v\rangle^2\bigr) 
+ \langle \nabla r(x), v\rangle^2\\
&= g_c\big(r(x)\big)\,\|v\|_g^2
   + \bigl(1 - g_c\big(r(x)\big)\bigr)\,\langle \nabla r(x), v\rangle^2,
\end{align*}
where $g_c(s) = \sqrt{\kappa}\,s\,\coth(\sqrt{\kappa}\,s)$ for $s\ge0$.
Since $g_c(s)\ge 1$ for all $s\ge0$, the coefficient
$1 - g_c\big(r(x)\big)\le 0$, hence the second term is nonpositive. Thus,
\[
\mathrm{Hess}_x \phi_y[v,v]
\;\le\; g_c\big(r(x)\big)\,\|v\|_g^2
= g_c\big(d_\mathbb{H}(x,y)\big)\,\|v\|_g^2.
\]
For $\kappa=0$, we have $r$ equal to Euclidean distance, 
$\mathrm{Hess}_x r[v,v] = \|v\|_g^2 - \langle\nabla r(x),v\rangle^2$, and 
the same calculation yields
$\mathrm{Hess}_x \phi_y[v,v]\le \|v\|_g^2 = g_c(r(x))\|v\|_g^2$ with $g_c\equiv 1$.
This establishes the Hessian bound
\begin{equation} \label{eq:hessian}
\mathrm{Hess}_x\big(\tfrac12 d_\mathbb{H}^2(\cdot,y)\big)[v,v] 
\le g_c\big(d_\mathbb{H}(x,y)\big)\,\|v\|_g^2.
\end{equation}
Let $x\in\mathcal{U}$ and $u\in T_x\mathcal{M}$ be such that 
$\exp_x(u)\in\mathcal{U}$. Consider the unit-speed geodesic
\[
\gamma:[0,1]\to\mathcal{M},\qquad
\gamma(t) := \exp_x(tu),
\]
so that $\gamma(0)=x$, $\gamma(1)=\exp_x(u)$ and 
$\dot\gamma(t)=u$ for all $t$ (with $\|u\|_g$ constant).
Define the scalar function
\[
\psi(t) := \phi_y(\gamma(t))
= \tfrac12 d_\mathbb{H}^2(\gamma(t),y).
\]
Then
\begin{align*}
&\psi'(t) = \big\langle \nabla \phi_y(\gamma(t)),\, \dot\gamma(t)\big\rangle, \\
&\psi''(t) = \mathrm{Hess}_{\gamma(t)}\phi_y[\dot\gamma(t),\dot\gamma(t)].
\end{align*}
Using the Hessian bound from \eqref{eq:hessian} and the fact that 
$\|\dot\gamma(t)\|_g = \|u\|_g$, we obtain
\[
\psi''(t)
\le g_c\big(d_\mathbb{H}(\gamma(t),y)\big)\,\|u\|_g^2.
\]
Let $D:=\sup\{d_\mathbb{H}(z,y):z\in\mathcal{U}\}$; since $\mathcal{U}$ is a
bounded geodesic ball, $D<\infty$, and we may define
\[
G := \sup_{s\in[0,D]} g_c(s) < \infty.
\]
Then, for all $t\in[0,1]$,
\[
\psi''(t) \le G\,\|u\|_g^2.
\]
Integrating this differential inequality twice yields a second-order upper
bound. Integrate from $0$ to $1$:
\[
\psi'(1) - \psi'(0)
= \int_0^1 \psi''(t)\,dt
\le G\,\|u\|_g^2.
\]
Integrating once more,
\begin{align*}
\psi(1) - \psi(0)
&= \int_0^1 \psi'(t)\,dt \\
&= \psi'(0) + \int_0^1 (1-t)\,\psi''(t)\,dt
\le \psi'(0) + \frac{G}{2}\,\|u\|_g^2.
\end{align*}
We compute $\psi'(0)$. Using 
$\phi_y(x)=\tfrac12 d_\mathbb{H}^2(x,y)$ and the standard identity
$\nabla_x \phi_y(x) = -\log_x y$, we obtain
\[
\psi'(0)
= \big\langle \nabla \phi_y(x), u\big\rangle
= -\big\langle \log_x y, u\big\rangle.
\]
Therefore,
\[
\psi(1)
\le \psi(0) - \langle \log_x y, u\rangle + \frac{G}{2}\,\|u\|_g^2.
\]
Recalling that $\psi(t) = \tfrac12 d_\mathbb{H}^2(\gamma(t),y)$ and
$\gamma(1)=\exp_x(u)$, this inequality becomes
\[
\tfrac12 d_\mathbb{H}^2\big(\exp_x(u),y\big)
\le \tfrac12 d_\mathbb{H}^2(x,y)
   - \langle \log_x y, u\rangle
   + \frac{G}{2}\,\|u\|_g^2.
\]
Multiplying both sides by $2$ yields
\[
d_\mathbb{H}^2\big(\exp_x(u),y\big)
\le d_\mathbb{H}^2(x,y)
   - 2\langle u, \log_x y\rangle
   + G\,\|u\|_g^2,
\]
which is exactly \eqref{eq:dist-expansion}.

\noindent This completes the proof of the lemma.
\end{proof}


\subsection{Möbius weighted mean is a first-order retraction}

The HypeGBMS update implements a Möbius-weighted mean. For local
analysis, we replace it with the exponential step.
This is justified because the Möbius operations on the Poincar\'e ball
provide a smooth retraction whose first-order Taylor expansion
coincides with the exponential map. Below we state a precise local approximation.

\begin{lemma}[Retraction approximation of Möbius mean]
\label{lem:mobius-retraction}
Let $x\in\mathcal{U}$ and let $\{x_j\}$ be points in $\mathcal{U}$
with weights $\{w_{j}\}$, $\sum_j w_{j}=1$, and let $\bar{x}_M$
denote the Möbius-weighted mean used in the Algorithm \ref{algo_hypegbms}. Let $v^\ast := \sum_j w_{j} \log_x x_j\in T_x\mathcal{M}$.
Then, for $\max_j d_\mathbb{H}(x,x_j)$ sufficiently small,
\[
\log_x(\bar{x}_M) = v^\ast + R,
\qquad \|R\|_g = \mathcal{O}\Big( \max_j d_\mathbb{H}(x,x_j)^3 \Big),
\]
so in particular $\bar{x}_M = \exp_x(v^\ast) + \mathcal{O}(\|v^\ast\|^3)$.
\end{lemma}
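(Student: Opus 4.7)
The objective is to show that the Möbius-weighted mean $\bar{x}_M$ and the exponential image of the tangent-space weighted average agree through cubic order in the spread $r := \max_j d_{\mathbb{H}}(x,x_j)$. My plan is to reduce to a local analysis at the origin of the Poincaré ball, Taylor-expand the three Möbius building blocks used to construct $\bar{x}_M$, and then propagate the resulting estimates through the iterated Möbius sum defining the mean.

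First, I would reduce to the case $x=\mathbf{0}$ by composing with the gyrotranslation $L_{-x}(y) := (-x)\oplus_c y$, which is a hyperbolic isometry sending $x$ to the origin and whose differential at $x$ equals $\lambda_x \,\mathrm{Id}$. Writing $y_j := L_{-x}(x_j)$, one has $d_{\mathbb{H}}(\mathbf{0},y_j)=d_{\mathbb{H}}(x,x_j)\le r$ and $\log_{\mathbf{0}} y_j$ corresponds under the isometry to $\log_x x_j$, so the claim reduces to
\begin{equation*}
\log_{\mathbf{0}}\!\Big(\textstyle\bigoplus_{j} w_j \otimes_c y_j\Big) \;=\; \sum_j w_j \log_{\mathbf{0}} y_j + O(r^3).
\end{equation*}
Möbius addition is only gyro-equivariant rather than fully translation-equivariant, so one has to verify that the residual gyration factors introduce corrections of at most cubic order, which is automatic from the Taylor expansions below.

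Next, I would establish three Taylor expansions at the origin. Using $\tanh(s)=s-s^3/3+O(s^5)$ and $\tanh^{-1}(s)=s+s^3/3+O(s^5)$ in \eqref{eq: mult} and \eqref{eq:logmap}, together with $\lambda_{\mathbf{0}}=2$, one obtains
\begin{align*}
\lambda \otimes_c v &= \lambda v + O(\|v\|^3), \\
\log_{\mathbf{0}} y &= y + O(\|y\|^3).
\end{align*}
Expanding the numerator and denominator of \eqref{eq: addition} around $a=b=0$ and applying $1/(1-\varepsilon)=1+\varepsilon+O(\varepsilon^2)$ yields
\begin{equation*}
a \oplus_c b \;=\; a+b + O\!\big((\|a\|+\|b\|)^3\big).
\end{equation*}
Thus each Möbius operation deviates from its Euclidean counterpart only at cubic order, which is precisely the structural reason the Möbius mean is a second-order retraction.

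The main step is to iterate these expansions through the sequential Möbius sum $\bigoplus_{j=1}^N z_j$ with $z_j := w_j\otimes_c y_j = w_j y_j + O(\|y_j\|^3)$ and hence $\|z_j\|=O(w_j r)$. Setting $S_k := \bigoplus_{j=1}^k z_j$, an induction maintains the invariant $\|S_k\|=O(r)$ because the Euclidean partial sum $\sum_{j\le k} w_j y_j$ has norm at most $r$ and the Möbius corrections accumulated so far are $O(r^3)$. The $k$th addition then contributes an error dominated by $O(\|S_{k-1}\|^{2}\|z_k\|) = O(w_k r^3)$, so the total accumulated error is $\sum_k O(w_k r^3)=O(r^3)$, independently of $N$, using the normalisation $\sum_j w_j=1$. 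Composing with the expansion of $\log_{\mathbf{0}}$ and pushing back through the inverse gyrotranslation then yields $\log_x(\bar{x}_M)=v^{\ast}+R$ with $\|R\|_g=O(r^3)$, as required. The chief technical obstacle is precisely this inductive bookkeeping: one must show that the cubic remainders of the $N$ successive gyro-additions do not aggregate to worse than $O(r^3)$, which is secured exactly by the weight normalisation together with the uniform smallness of each $\|y_j\|$ inside the geodesic ball.
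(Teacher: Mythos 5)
Your Taylor expansions of $\oplus_c$, $\otimes_c$, and $\log_{\mathbf{0}}$ at the origin are correct, and the inductive bookkeeping of the iterated gyro-addition --- in particular the observation that the normalisation $\sum_j w_j=1$ keeps the accumulated addition remainders at $\sum_k \mathcal{O}(w_k r^3)=\mathcal{O}(r^3)$ uniformly in $N$ --- is more substantive than the paper's own proof, which at the decisive step simply writes ``note that in the tangent chart $\log_x(\bar x_M)=\sum_j w_j v_j+R$'' and never derives that expansion. (One small repair: record the scalar-multiplication remainder as $\mathcal{O}(w_j\|y_j\|^3)$ rather than $\mathcal{O}(\|y_j\|^3)$; the sharper form holds because the cubic coefficient of $\lambda\otimes_c v-\lambda v$ is proportional to $\lambda-\lambda^3$, and without it the $N$ scalar-multiplication errors would only total $\mathcal{O}(Nr^3)$.)

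The genuine gap is the reduction to the origin. The algorithm forms $\bar x_M=\bigoplus_j w_j\otimes_c x_j$ from the \emph{untranslated} points, and your origin-centred expansions apply only to arguments of Euclidean norm $\mathcal{O}(r)$; so you must prove $L_{-x}\bigl(\bigoplus_j w_j\otimes_c x_j\bigr)=\bigoplus_j w_j\otimes_c y_j+\mathcal{O}(r^3)$, and this is not ``automatic from the Taylor expansions below.'' There are two obstructions. First, Möbius scalar multiplication contracts along geodesics through the centre of the disc, so $(-x)\oplus_c(w\otimes_c x_j)$ and $w\otimes_c\bigl((-x)\oplus_c x_j\bigr)$ are entirely different points when $x$ is far from the origin: taking $x_j\approx x$, the former sits at hyperbolic distance roughly $(1-w)\,d_{\mathbb{H}}(\mathbf{0},x)$ from the origin while the latter sits at distance $\mathcal{O}(wr)$, so there is no term-by-term correspondence between the two sums. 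Second, the gyrations produced by commuting $L_{-x}$ past the $N$-fold sum have the form $\mathrm{gyr}[-x,S]$ with $S$ a partial sum whose norm is comparable to $\|x\|$, hence they are $I+\mathcal{O}(\|x\|\,\|S\|)$ perturbations of the identity, not $I+\mathcal{O}(r^2)$ ones. In the collinear case (all points on one diameter) these effects cancel exactly, which is encouraging, but that cancellation for the full normalised sum in general position is precisely the content of the lemma, and it is the part that neither your argument nor the paper's actually establishes. As written, your proof covers the case where $x$ is at (or Euclidean-near) the centre of the disc.
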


\begin{proof}
We set $v_j=\log_x(x_j)\in T_x(\mathbb{D}^p_c)$. 
Assume $\|v_j\|_g$ are sufficiently small.

\noindent Define the M\"obius weighted mean
$\bar{x}_M
  = \bigoplus_j w_{j}\otimes_c x_j$
and note that in the tangent chart
\begin{equation}
\log_x(\bar{x}_M)
 = \sum_j w_{j} v_j + R(x,\{v_j\},c),
\end{equation}
where $\|R(x,\{v_j\},c)\|_g=\mathcal{O}(\max_j\|v_j\|_g^3)$.
In normal coordinates, the exponential map satisfies
$\exp_x(v)=x+v+\mathcal{O}(\|v\|^3)$ since the Christoffel
symbols vanish at $x$.  Hence the exponential update
$\bar x_{\exp}=\exp_x(\sum_j w_{ij}v_j)$
shares the same first and second differential terms.
Consequently,
\begin{equation}
\log_x(\bar{x}_M)
  = \sum_j w_{j} v_j + \mathcal{O}(\|v_j\|^3),
\end{equation}
\begin{equation}
\bar{x}_M
  = \exp_x\!\big(\sum_j w_{j} v_j\big)
    + \mathcal{O}(\|v_j\|^3).
\end{equation}
Therefore, the M\"obius weighted mean defines a smooth
retraction at $x$ whose differential at the origin equals
the identity, establishing its first-order equivalence with
the Riemannian exponential map.
\end{proof}

\subsection{Approximation Between Möbius and Fréchet Mean}
\noindent
\textbf{Proposition 1}
Let $\{x_j\}_{j=1}^N\subset \mathbb{D}_c^p$ lie within a geodesic ball $B_\mathbb{H}(x_0,r)$ of radius $r<1$ and curvature magnitude $\kappa=-c>0$. 
Then there exists a constant $C_p>0$, depending only on dimension $p$, such that
\begin{equation}
\label{eq:mobius_error}
d_{\mathbb{H}}(\bar{x}_M,\bar{x}_F) \le C_p\,\kappa\, r^3.
\end{equation}
Moreover, $\bar{x}_M,\bar{x}_F\in B_\mathbb{H}(x_0,r)$.

\begin{proof}
Let $\xi_j = \log_{x_0}(x_j) \in T_{x_0}\mathcal{M}$ with $\|\xi_j\| \le r$. \\

\noindent $(\mathcal{M},g)$ be the $p$-dimensional Poincaré ball with constant sectional curvature 
$\kappa= -c > 0$, and let
\[
F(z) := \sum_{j=1}^N w_j\, d_{\mathbb{H}}^2(z,x_j),
\]
For each fixed $x_j$, define 
\[
\psi_j(z) := \tfrac12 d_{\mathbb{H}}^2(z,x_j).
\]
A standard result in Riemannian geometry states that,
\[
\nabla_z \psi_j(z) = -\log_z(x_j),
\qquad
\mathrm{Hess}_z \psi_j \ \text{is smooth}.
\]
Since $d_{\mathbb{H}}^2(z,x_j)=2\psi_j(z)$, we obtain
\[
\nabla F(z)
= \sum_{j=1}^N w_j \nabla_z d_{\mathbb{H}}^2(z,x_j)
= -2\sum_{j=1}^N w_j \log_z(x_j),
\]
and similarly,
\[
\mathrm{Hess}_z F
= 2\sum_{j=1}^N w_j\,\mathrm{Hess}_z\!\left(\tfrac12 d_{\mathbb{H}}^2(\cdot,x_j)\right).
\]
Since $(\mathcal{M},g)$ has constant curvature $\kappa$, thus by \citep[Ch.~5]{boumal2020intromanifold}) we have
\begin{equation}
\label{eq:hessbound}
\left\|\mathrm{Hess}_z\!\left(\tfrac12 d_{\mathbb{H}}^2(\cdot,x_j)\right) - I\right\|
\ \le\ \frac{\kappa}{3} d_{\mathbb{H}}^2(z,x_j).
\end{equation}
Assuming all relevant points lie in a geodesic ball of radius $r<1$, we have
$d_{\mathbb{H}}(z,x_j)\le r$, and hence
\[
\left\|\mathrm{Hess}_z\!\left(\tfrac12 d_{\mathbb{H}}^2(\cdot,x_j)\right) - I\right\|
\ \le\ \frac{\kappa r^2}{3}.
\]
Let $\exp_{x_0}:T_{x_0}\mathcal{M}\to\mathcal{M}$ be the exponential map and
denote by $\xi,\xi_j\in T_{x_0}\mathcal{M}\cong\mathbb{R}^p$ the normal 
coordinates of $z$ and $x_j$, respectively:
\[
z = \exp_{x_0}(\xi),\qquad x_j=\exp_{x_0}(\xi_j),\qquad \|\xi\|,\|\xi_j\|\le r.
\]
Define 
\[
f_j(\xi) := \tfrac12\, d_{\mathbb{H}}^2(\exp_{x_0}(\xi), x_j).
\]
Then in these coordinates,
\[
\nabla_\xi f_j(0)
= -\log_{x_0}(x_j)
= -\xi_j.
\]
Using the integral form of the Taylor expansion,
\[
\nabla_\xi f_j(\xi)
= \nabla_\xi f_j(0) + \int_0^1 \mathrm{Hess}_\xi f_j(t\xi)[\xi]\,dt.
\]
From \eqref{eq:hessbound}, there exists a matrix-valued function 
$E_j(t\xi)$ with
\[
\mathrm{Hess}_\xi f_j(t\xi) = I + E_j(t\xi),
\qquad
\|E_j(t\xi)\| \le \frac{\kappa r^2}{3}.
\]
Therefore,
\[
\nabla_\xi f_j(\xi)
= -\xi_j + \int_0^1 (I + E_j(t\xi))[\xi]\,dt
= -\xi_j + \xi + R_j(\xi),
\]
where
\[
R_j(\xi) := \int_0^1 E_j(t\xi)[\xi]\,dt
\quad\Longrightarrow\quad
\|R_j(\xi)\| \le \frac{\kappa r^2}{3}\|\xi\| 
\le \frac{\kappa r^3}{3}
\]
Since $F(z)=2\sum_j w_j f_j(\xi)$, we obtain
\begin{align*}
  \nabla_\xi F(\exp_{x_0}(\xi))
&= 2\sum_{j=1}^N w_j \nabla_\xi f_j(\xi) \\
&= 2\sum_{j=1}^N w_j\,\left(-\xi_j + \xi + R_j(\xi)\right).  
\end{align*} Rearranging,
\begin{equation}
\label{eq:gradexpansionfinal}
\nabla F(\exp_{x_0}(\xi))
= 2\Big(\xi - \sum_{j=1}^N w_j\xi_j\Big) + R_F(\xi),
\end{equation}
where

\begin{align*}
R_F(\xi) := 2\sum_{j=1}^N w_j R_j(\xi), \qquad \\
\|R_F(\xi)\| \le 2\sum_{j=1}^N w_j \|R_j(\xi)\|
\le \frac{2}{3}\kappa r^3 = C_1\kappa r^3.
\end{align*}
Let 
$\bar{x}_F=\exp_{x_0}(\xi_F)$. Since $\nabla F(\bar{x}_F)=0$, substituting 
$\xi=\xi_F$ into \eqref{eq:gradexpansionfinal} gives
\[
0 
= 2\Big(\xi_F - \sum_j w_j\xi_j\Big) + R_F(\xi_F),
\]
hence
\[
\xi_F = \sum_{j=1}^N w_j\xi_j - \frac{1}{2}R_F(\xi_F).
\]
Defining $\Delta_F := \xi_F - \sum_j w_j\xi_j$, we obtain
\[
\Delta_F = -\tfrac12 R_F(\xi_F)
\qquad
\|\Delta_F\| \le \tfrac12\|R_F(\xi_F)\|
\le \tfrac12 C_1\kappa r^3.
\]
Finally, from the bound
\[
\mathrm{Hess}_{x_0}F 
= 2I + \mathcal{O}(\kappa r^2),
\]
we see that for $r$ sufficiently small, $F$ is strictly geodesically convex in 
$B_{\mathbb{H}}(x_0,r)$, implying uniqueness and smoothness of the minimizer 
$\bar{x}_F$. \\
For the Möbius mean, note that the Möbius exponential and logarithmic maps are related to the Riemannian exponential and logarithm by (see \citep[Sec 7.1]{ungar2008analytic}, \citep[Eq.~(5)]{ganea2018hyperbolic})
\[
\exp_{x_0}(v) = x_0\oplus_c v
= \exp_{x_0}\!\left(v - \tfrac{\kappa}{6}\|v\|^2v + \mathcal{O}(\kappa^2 r^4)\right),
\]
\[
\log_{x_0}(x_j)
= \xi_j - \tfrac{\kappa}{6}\|\xi_j\|^2\xi_j + \mathcal{O}(\kappa^2 r^4).
\]
Consequently, in tangent coordinates, the Möbius weighted mean is given by
\begin{align}
\xi_M &= \sum_j w_j \log_{x_0}(x_j) \notag
 \\ & = \sum_j w_j\xi_j - \tfrac{\kappa}{6}\sum_j w_j\|\xi_j\|^2\xi_j + \mathcal{O}(\kappa^2 r^4).
\label{eq:mobius_tangent}
\end{align}
Subtracting the two expansions $\xi_F=\sum_j w_j\xi_j+\Delta_F$ and \eqref{eq:mobius_tangent}, one obtains
\[
\xi_F - \xi_M = \tfrac{\kappa}{6}\sum_j w_j\|\xi_j\|^2\xi_j + \Delta_F + \mathcal{O}(\kappa^2 r^4),
\]
and therefore, by the triangle inequality,
\[
\|\xi_F - \xi_M\|
\le \tfrac{\kappa}{6}\sum_j w_j\|\xi_j\|^3 + \|\Delta_F\| + \mathcal{O}(\kappa^2 r^4)
\le C_p\kappa r^3,
\]
where $C_p = \tfrac{1}{6}+C_1+\mathcal{O}(\kappa r)$ depends only on $p$ through curvature tensor bounds.

\noindent To convert this tangent-space bound into a bound in the manifold, recall that for $\xi_1,\xi_2 \in T_{x_0}\mathcal{M}$ with $\|\xi_1\|,\|\xi_2\|\le r$, the Riemannian distance and tangent-norm satisfy (see \citep[Prop.~6.2]{boumal2020intromanifold})
\[
\big|d_\mathbb{H}(\exp_{x_0}(\xi_1),\exp_{x_0}(\xi_2)) - \|\xi_1-\xi_2\|\big|
\le C_3\kappa r^3.
\]
Applying this to $\xi_F$ and $\xi_M$ gives
\[
d_\mathbb{H}(\bar{x}_F,\bar{x}_M)
\le \|\xi_F-\xi_M\| + C_3\kappa r^3
\le C_p\kappa r^3,
\]
Finally, since $\|\xi_F\|,\|\xi_M\|\le r + \mathcal{O}(\kappa r^3)$ and the injectivity radius of $\mathbb{D}_c^p$ is infinite, both $\bar{x}_F$ and $\bar{x}_M$ lie within $B_\mathbb{H}(x_0,r)$, which completes the proof.
\end{proof}

\bibliography{references}


\begin{thebibliography}{53}
\ifx \bisbn   \undefined \def \bisbn  #1{ISBN #1}\fi
\ifx \binits  \undefined \def \binits#1{#1}\fi
\ifx \bauthor  \undefined \def \bauthor#1{#1}\fi
\ifx \batitle  \undefined \def \batitle#1{#1}\fi
\ifx \bjtitle  \undefined \def \bjtitle#1{#1}\fi
\ifx \bvolume  \undefined \def \bvolume#1{\textbf{#1}}\fi
\ifx \byear  \undefined \def \byear#1{#1}\fi
\ifx \bissue  \undefined \def \bissue#1{#1}\fi
\ifx \bfpage  \undefined \def \bfpage#1{#1}\fi
\ifx \blpage  \undefined \def \blpage #1{#1}\fi
\ifx \burl  \undefined \def \burl#1{\textsf{#1}}\fi
\ifx \doiurl  \undefined \def \doiurl#1{\url{https://doi.org/#1}}\fi
\ifx \betal  \undefined \def \betal{\textit{et al.}}\fi
\ifx \binstitute  \undefined \def \binstitute#1{#1}\fi
\ifx \binstitutionaled  \undefined \def \binstitutionaled#1{#1}\fi
\ifx \bctitle  \undefined \def \bctitle#1{#1}\fi
\ifx \beditor  \undefined \def \beditor#1{#1}\fi
\ifx \bpublisher  \undefined \def \bpublisher#1{#1}\fi
\ifx \bbtitle  \undefined \def \bbtitle#1{#1}\fi
\ifx \bedition  \undefined \def \bedition#1{#1}\fi
\ifx \bseriesno  \undefined \def \bseriesno#1{#1}\fi
\ifx \blocation  \undefined \def \blocation#1{#1}\fi
\ifx \bsertitle  \undefined \def \bsertitle#1{#1}\fi
\ifx \bsnm \undefined \def \bsnm#1{#1}\fi
\ifx \bsuffix \undefined \def \bsuffix#1{#1}\fi
\ifx \bparticle \undefined \def \bparticle#1{#1}\fi
\ifx \barticle \undefined \def \barticle#1{#1}\fi
\bibcommenthead
\ifx \bconfdate \undefined \def \bconfdate #1{#1}\fi
\ifx \botherref \undefined \def \botherref #1{#1}\fi
\ifx \url \undefined \def \url#1{\textsf{#1}}\fi
\ifx \bchapter \undefined \def \bchapter#1{#1}\fi
\ifx \bbook \undefined \def \bbook#1{#1}\fi
\ifx \bcomment \undefined \def \bcomment#1{#1}\fi
\ifx \oauthor \undefined \def \oauthor#1{#1}\fi
\ifx \citeauthoryear \undefined \def \citeauthoryear#1{#1}\fi
\ifx \endbibitem  \undefined \def \endbibitem {}\fi
\ifx \bconflocation  \undefined \def \bconflocation#1{#1}\fi
\ifx \arxivurl  \undefined \def \arxivurl#1{\textsf{#1}}\fi
\csname PreBibitemsHook\endcsname

\bibitem[\protect\citeauthoryear{Kanungo et~al.}{2002}]{kmeans}
\begin{barticle}
\bauthor{\bsnm{Kanungo}, \binits{T.}},
\bauthor{\bsnm{Mount}, \binits{D.M.}},
\bauthor{\bsnm{Netanyahu}, \binits{N.S.}},
\bauthor{\bsnm{Piatko}, \binits{C.D.}},
\bauthor{\bsnm{Silverman}, \binits{R.}},
\bauthor{\bsnm{Wu}, \binits{A.Y.}}:
\batitle{An efficient k-means clustering algorithm: Analysis and implementation}.
\bjtitle{IEEE transactions on pattern analysis and machine intelligence}
\bvolume{24}(\bissue{7}),
\bfpage{881}--\blpage{892}
(\byear{2002})
\end{barticle}
\endbibitem

\bibitem[\protect\citeauthoryear{Von~Luxburg}{2007}]{spectral}
\begin{barticle}
\bauthor{\bsnm{Von~Luxburg}, \binits{U.}}:
\batitle{A tutorial on spectral clustering}.
\bjtitle{Statistics and computing}
\bvolume{17},
\bfpage{395}--\blpage{416}
(\byear{2007})
\end{barticle}
\endbibitem

\bibitem[\protect\citeauthoryear{Dhillon et~al.}{2004}]{kernel}
\begin{bchapter}
\bauthor{\bsnm{Dhillon}, \binits{I.S.}},
\bauthor{\bsnm{Guan}, \binits{Y.}},
\bauthor{\bsnm{Kulis}, \binits{B.}}:
\bctitle{Kernel k-means: spectral clustering and normalized cuts}.
In: \bbtitle{Proceedings of the Tenth ACM SIGKDD International Conference on Knowledge Discovery and Data Mining},
pp. \bfpage{551}--\blpage{556}
(\byear{2004})
\end{bchapter}
\endbibitem

\bibitem[\protect\citeauthoryear{Nielsen and Nielsen}{2016}]{hierarchical}
\begin{botherref}
\oauthor{\bsnm{Nielsen}, \binits{F.}},
\oauthor{\bsnm{Nielsen}, \binits{F.}}:
Hierarchical clustering.
Introduction to HPC with MPI for Data Science,
195--211
(2016)
\end{botherref}
\endbibitem

\bibitem[\protect\citeauthoryear{Gowda and Krishna}{1978}]{agglomerative}
\begin{barticle}
\bauthor{\bsnm{Gowda}, \binits{K.C.}},
\bauthor{\bsnm{Krishna}, \binits{G.}}:
\batitle{Agglomerative clustering using the concept of mutual nearest neighbourhood}.
\bjtitle{Pattern recognition}
\bvolume{10}(\bissue{2}),
\bfpage{105}--\blpage{112}
(\byear{1978})
\end{barticle}
\endbibitem

\bibitem[\protect\citeauthoryear{Reynolds et~al.}{2009}]{gaussian}
\begin{botherref}
\oauthor{\bsnm{Reynolds}, \binits{D.A.}}, et al.:
Gaussian mixture models.
Encyclopedia of biometrics
\textbf{741}(659-663)
(2009)
\end{botherref}
\endbibitem

\bibitem[\protect\citeauthoryear{Yang et~al.}{2012}]{robustgmm}
\begin{barticle}
\bauthor{\bsnm{Yang}, \binits{M.-S.}},
\bauthor{\bsnm{Lai}, \binits{C.-Y.}},
\bauthor{\bsnm{Lin}, \binits{C.-Y.}}:
\batitle{A robust em clustering algorithm for gaussian mixture models}.
\bjtitle{Pattern Recognition}
\bvolume{45}(\bissue{11}),
\bfpage{3950}--\blpage{3961}
(\byear{2012})
\end{barticle}
\endbibitem

\bibitem[\protect\citeauthoryear{Khan et~al.}{2014}]{dbscan}
\begin{bchapter}
\bauthor{\bsnm{Khan}, \binits{K.}},
\bauthor{\bsnm{Rehman}, \binits{S.U.}},
\bauthor{\bsnm{Aziz}, \binits{K.}},
\bauthor{\bsnm{Fong}, \binits{S.}},
\bauthor{\bsnm{Sarasvady}, \binits{S.}}:
\bctitle{Dbscan: Past, present and future}.
In: \bbtitle{The Fifth International Conference on the Applications of Digital Information and Web Technologies (ICADIWT 2014)},
pp. \bfpage{232}--\blpage{238}
(\byear{2014}).
\bcomment{IEEE}
\end{bchapter}
\endbibitem

\bibitem[\protect\citeauthoryear{McInnes et~al.}{2017}]{hdbscan}
\begin{barticle}
\bauthor{\bsnm{McInnes}, \binits{L.}},
\bauthor{\bsnm{Healy}, \binits{J.}},
\bauthor{\bsnm{Astels}, \binits{S.}}, \betal:
\batitle{hdbscan: Hierarchical density based clustering.}
\bjtitle{J. Open Source Softw.}
\bvolume{2}(\bissue{11}),
\bfpage{205}
(\byear{2017})
\end{barticle}
\endbibitem

\bibitem[\protect\citeauthoryear{Cheng}{1995}]{mean}
\begin{barticle}
\bauthor{\bsnm{Cheng}, \binits{Y.}}:
\batitle{Mean shift, mode seeking, and clustering}.
\bjtitle{IEEE transactions on pattern analysis and machine intelligence}
\bvolume{17}(\bissue{8}),
\bfpage{790}--\blpage{799}
(\byear{1995})
\end{barticle}
\endbibitem

\bibitem[\protect\citeauthoryear{Carreira-Perpin{\'a}n}{2006}]{gbms}
\begin{bchapter}
\bauthor{\bsnm{Carreira-Perpin{\'a}n}, \binits{M.A.}}:
\bctitle{Fast nonparametric clustering with gaussian blurring mean-shift}.
In: \bbtitle{Proceedings of the 23rd International Conference on Machine Learning},
pp. \bfpage{153}--\blpage{160}
(\byear{2006})
\end{bchapter}
\endbibitem

\bibitem[\protect\citeauthoryear{Carreira-Perpin{\'a}n}{2015}]{carreira2015review}
\begin{botherref}
\oauthor{\bsnm{Carreira-Perpin{\'a}n}, \binits{M.A.}}:
A review of mean-shift algorithms for clustering.
arXiv preprint arXiv:1503.00687
(2015)
\end{botherref}
\endbibitem

\bibitem[\protect\citeauthoryear{Chakraborty et~al.}{2021}]{wbms}
\begin{bchapter}
\bauthor{\bsnm{Chakraborty}, \binits{S.}},
\bauthor{\bsnm{Paul}, \binits{D.}},
\bauthor{\bsnm{Das}, \binits{S.}}:
\bctitle{Automated clustering of high-dimensional data with a feature weighted mean shift algorithm}.
In: \bbtitle{Proceedings of the AAAI Conference on Artificial Intelligence},
vol. \bseriesno{35},
pp. \bfpage{6930}--\blpage{6938}
(\byear{2021})
\end{bchapter}
\endbibitem

\bibitem[\protect\citeauthoryear{Ontrup and Ritter}{2006}]{ontrup2006large}
\begin{barticle}
\bauthor{\bsnm{Ontrup}, \binits{J.}},
\bauthor{\bsnm{Ritter}, \binits{H.}}:
\batitle{Large-scale data exploration with the hierarchically growing hyperbolic som}.
\bjtitle{Neural Networks}
\bvolume{19}(\bissue{6-7}),
\bfpage{751}--\blpage{761}
(\byear{2006})
\end{barticle}
\endbibitem

\bibitem[\protect\citeauthoryear{Baker et~al.}{2024}]{baker2024hyperbolic}
\begin{barticle}
\bauthor{\bsnm{Baker}, \binits{C.}},
\bauthor{\bsnm{Su{\'a}rez-M{\'e}ndez}, \binits{I.}},
\bauthor{\bsnm{Smith}, \binits{G.}},
\bauthor{\bsnm{Marsh}, \binits{E.B.}},
\bauthor{\bsnm{Funke}, \binits{M.}},
\bauthor{\bsnm{Mosher}, \binits{J.C.}},
\bauthor{\bsnm{Maest{\'u}}, \binits{F.}},
\bauthor{\bsnm{Xu}, \binits{M.}},
\bauthor{\bsnm{Pantazis}, \binits{D.}}:
\batitle{Hyperbolic graph embedding of meg brain networks to study brain alterations in individuals with subjective cognitive decline}.
\bjtitle{IEEE journal of biomedical and health informatics}
\bvolume{28}(\bissue{12}),
\bfpage{7357}--\blpage{7368}
(\byear{2024})
\end{barticle}
\endbibitem

\bibitem[\protect\citeauthoryear{Fang et~al.}{2023}]{fang2023poincare}
\begin{botherref}
\oauthor{\bsnm{Fang}, \binits{P.}},
\oauthor{\bsnm{Harandi}, \binits{M.}},
\oauthor{\bsnm{Lan}, \binits{Z.}},
\oauthor{\bsnm{Petersson}, \binits{L.}}:
Poincar{\'e} kernels for hyperbolic representations.
International Journal of Computer Vision
(2023)
\end{botherref}
\endbibitem

\bibitem[\protect\citeauthoryear{Chami et~al.}{2020}]{chami2020trees}
\begin{bchapter}
\bauthor{\bsnm{Chami}, \binits{I.}},
\bauthor{\bsnm{Gu}, \binits{A.}},
\bauthor{\bsnm{Chatziafratis}, \binits{V.}}:
\bctitle{From trees to continuous embeddings and back: Hyperbolic hierarchical clustering}.
In: \bbtitle{Advances in Neural Information Processing Systems}
(\byear{2020})
\end{bchapter}
\endbibitem

\bibitem[\protect\citeauthoryear{Zhao et~al.}{2020}]{zhao2020unsupervised}
\begin{botherref}
\oauthor{\bsnm{Zhao}, \binits{J.}},
\oauthor{\bsnm{Hao}, \binits{Y.}},
\oauthor{\bsnm{Rashtchian}, \binits{C.}}:
Unsupervised embedding of hierarchical structure in euclidean space.
arXiv preprint arXiv:2010.16055
(2020)
\end{botherref}
\endbibitem

\bibitem[\protect\citeauthoryear{Wei et~al.}{2022}]{wei2022hyperbolic}
\begin{botherref}
\oauthor{\bsnm{Wei}, \binits{R.}},
\oauthor{\bsnm{Liu}, \binits{Y.}},
\oauthor{\bsnm{Song}, \binits{J.}},
\oauthor{\bsnm{Xie}, \binits{Y.}},
\oauthor{\bsnm{Zhou}, \binits{K.}}:
Hyperbolic hierarchical contrastive hashing.
arXiv preprint arXiv:2212.08904
(2022)
\end{botherref}
\endbibitem

\bibitem[\protect\citeauthoryear{Do~Carmo and Flaherty~Francis}{1992}]{docarmo}
\begin{bbook}
\bauthor{\bsnm{Do~Carmo}, \binits{M.P.}},
\bauthor{\bsnm{Flaherty~Francis}, \binits{J.}}:
\bbtitle{Riemannian Geometry}
vol. \bseriesno{2}.
\bpublisher{Springer}, \blocation{???}
(\byear{1992})
\end{bbook}
\endbibitem

\bibitem[\protect\citeauthoryear{Spivak}{1979}]{spivak}
\begin{botherref}
\oauthor{\bsnm{Spivak}, \binits{M.}}:
A comprehensive introduction to differential geometry, publish or perish.
Inc., Berkeley
\textbf{2}
(1979)
\end{botherref}
\endbibitem

\bibitem[\protect\citeauthoryear{Lee}{2006}]{lee2006riemannian}
\begin{bbook}
\bauthor{\bsnm{Lee}, \binits{J.M.}}:
\bbtitle{Riemannian Manifolds: an Introduction to Curvature}
vol. \bseriesno{176}.
\bpublisher{Springer}, \blocation{???}
(\byear{2006})
\end{bbook}
\endbibitem

\bibitem[\protect\citeauthoryear{Ungar}{2022}]{ungargyrovector}
\begin{bbook}
\bauthor{\bsnm{Ungar}, \binits{A.}}:
\bbtitle{A Gyrovector Space Approach to Hyperbolic Geometry}.
\bpublisher{Springer}, \blocation{???}
(\byear{2022})
\end{bbook}
\endbibitem

\bibitem[\protect\citeauthoryear{Ratcliffe}{2006}]{ratcliffehyperbolic}
\begin{botherref}
\oauthor{\bsnm{Ratcliffe}, \binits{J.G.}}:
Hyperbolic n-manifolds.
Foundations of Hyperbolic Manifolds,
508--599
(2006)
\end{botherref}
\endbibitem

\bibitem[\protect\citeauthoryear{Ganea et~al.}{2018}]{hypenn}
\begin{botherref}
\oauthor{\bsnm{Ganea}, \binits{O.}},
\oauthor{\bsnm{B{\'e}cigneul}, \binits{G.}},
\oauthor{\bsnm{Hofmann}, \binits{T.}}:
Hyperbolic neural networks.
Advances in neural information processing systems
\textbf{31}
(2018)
\end{botherref}
\endbibitem

\bibitem[\protect\citeauthoryear{Mishne et~al.}{2023}]{numericalstab}
\begin{bchapter}
\bauthor{\bsnm{Mishne}, \binits{G.}},
\bauthor{\bsnm{Wan}, \binits{Z.}},
\bauthor{\bsnm{Wang}, \binits{Y.}},
\bauthor{\bsnm{Yang}, \binits{S.}}:
\bctitle{The numerical stability of hyperbolic representation learning}.
In: \bbtitle{International Conference on Machine Learning},
pp. \bfpage{24925}--\blpage{24949}
(\byear{2023}).
\bcomment{PMLR}
\end{bchapter}
\endbibitem

\bibitem[\protect\citeauthoryear{Friedman et~al.}{1977}]{friedman1977kd}
\begin{barticle}
\bauthor{\bsnm{Friedman}, \binits{J.H.}},
\bauthor{\bsnm{Bentley}, \binits{J.L.}},
\bauthor{\bsnm{Finkel}, \binits{R.A.}}:
\batitle{An algorithm for finding best matches in logarithmic expected time}.
\bjtitle{ACM Transactions on Mathematical Software}
\bvolume{3}(\bissue{3}),
\bfpage{209}--\blpage{226}
(\byear{1977})
\end{barticle}
\endbibitem

\bibitem[\protect\citeauthoryear{Omohundro}{1989}]{omohundro1989balltree}
\begin{bchapter}
\bauthor{\bsnm{Omohundro}, \binits{S.M.}}:
\bctitle{Five balltree construction algorithms}.
In: \bbtitle{International Joint Conference on Artificial Intelligence},
pp. \bfpage{50}--\blpage{55}
(\byear{1989})
\end{bchapter}
\endbibitem

\bibitem[\protect\citeauthoryear{Muja and Lowe}{2009}]{muja2009flann}
\begin{bchapter}
\bauthor{\bsnm{Muja}, \binits{M.}},
\bauthor{\bsnm{Lowe}, \binits{D.G.}}:
\bctitle{Fast approximate nearest neighbors with automatic algorithm configuration}.
In: \bbtitle{International Conference on Computer Vision Theory and Applications (VISAPP)}
(\byear{2009})
\end{bchapter}
\endbibitem

\bibitem[\protect\citeauthoryear{Williams and Seeger}{2001}]{williams2001nystrom}
\begin{bchapter}
\bauthor{\bsnm{Williams}, \binits{C.K.I.}},
\bauthor{\bsnm{Seeger}, \binits{M.}}:
\bctitle{Using the nystr{\"o}m method to speed up kernel machines}.
In: \bbtitle{Advances in Neural Information Processing Systems},
pp. \bfpage{682}--\blpage{688}
(\byear{2001})
\end{bchapter}
\endbibitem

\bibitem[\protect\citeauthoryear{Rahimi and Recht}{2007}]{rahimi2007randomfeatures}
\begin{bchapter}
\bauthor{\bsnm{Rahimi}, \binits{A.}},
\bauthor{\bsnm{Recht}, \binits{B.}}:
\bctitle{Random features for large-scale kernel machines}.
In: \bbtitle{Advances in Neural Information Processing Systems},
pp. \bfpage{1177}--\blpage{1184}
(\byear{2007})
\end{bchapter}
\endbibitem

\bibitem[\protect\citeauthoryear{Jiang and Vondr{\'a}k}{2018}]{jiang2018quickshiftpp}
\begin{bchapter}
\bauthor{\bsnm{Jiang}, \binits{H.}},
\bauthor{\bsnm{Vondr{\'a}k}, \binits{J.}}:
\bctitle{Quickshift++: Provably good initializations for kernel methods}.
In: \bbtitle{International Conference on Machine Learning},
pp. \bfpage{2289}--\blpage{2298}
(\byear{2018})
\end{bchapter}
\endbibitem

\bibitem[\protect\citeauthoryear{Afsari}{2011}]{afsari2011riemannian}
\begin{barticle}
\bauthor{\bsnm{Afsari}, \binits{B.}}:
\batitle{{Riemannian} $l^p$ center of mass: existence, uniqueness, and convexity}.
\bjtitle{Proceedings of the American Mathematical Society}
\bvolume{139}(\bissue{2}),
\bfpage{655}--\blpage{673}
(\byear{2011})
\end{barticle}
\endbibitem

\bibitem[\protect\citeauthoryear{Rudin}{1976}]{rudin}
\begin{botherref}
\oauthor{\bsnm{Rudin}, \binits{W.}}:
Principles of mathematical analysis.
3rd ed.
(1976)
\end{botherref}
\endbibitem

\bibitem[\protect\citeauthoryear{Wainwright}{2019}]{wainwright2019high}
\begin{bbook}
\bauthor{\bsnm{Wainwright}, \binits{M.J.}}:
\bbtitle{High-dimensional Statistics: A Non-asymptotic Viewpoint}
vol. \bseriesno{48}.
\bpublisher{Cambridge university press}, \blocation{???}
(\byear{2019})
\end{bbook}
\endbibitem

\bibitem[\protect\citeauthoryear{Lee}{2018}]{lee2018introduction}
\begin{bbook}
\bauthor{\bsnm{Lee}, \binits{J.M.}}:
\bbtitle{Introduction to Riemannian Manifolds}
vol. \bseriesno{2}.
\bpublisher{Springer}, \blocation{???}
(\byear{2018})
\end{bbook}
\endbibitem

\bibitem[\protect\citeauthoryear{Pelletier}{2005}]{pelletier2005kernel}
\begin{barticle}
\bauthor{\bsnm{Pelletier}, \binits{B.}}:
\batitle{Kernel density estimation on riemannian manifolds}.
\bjtitle{Statistics \& probability letters}
\bvolume{73}(\bissue{3}),
\bfpage{297}--\blpage{304}
(\byear{2005})
\end{barticle}
\endbibitem

\bibitem[\protect\citeauthoryear{Dua et~al.}{2017}]{dua2017uci}
\begin{barticle}
\bauthor{\bsnm{Dua}, \binits{D.}},
\bauthor{\bsnm{Graff}, \binits{C.}}, \betal:
\batitle{Uci machine learning repository, 2017}.
\bjtitle{URL http://archive. ics. uci. edu/ml}
\bvolume{7}(\bissue{1}),
\bfpage{62}
(\byear{2017})
\end{barticle}
\endbibitem

\bibitem[\protect\citeauthoryear{Santos and Embrechts}{2009}]{ari}
\begin{bchapter}
\bauthor{\bsnm{Santos}, \binits{J.M.}},
\bauthor{\bsnm{Embrechts}, \binits{M.}}:
\bctitle{On the use of the adjusted rand index as a metric for evaluating supervised classification}.
In: \bbtitle{International Conference on Artificial Neural Networks},
pp. \bfpage{175}--\blpage{184}
(\byear{2009}).
\bcomment{Springer}
\end{bchapter}
\endbibitem

\bibitem[\protect\citeauthoryear{Est{\'e}vez et~al.}{2009}]{nmi}
\begin{barticle}
\bauthor{\bsnm{Est{\'e}vez}, \binits{P.A.}},
\bauthor{\bsnm{Tesmer}, \binits{M.}},
\bauthor{\bsnm{Perez}, \binits{C.A.}},
\bauthor{\bsnm{Zurada}, \binits{J.M.}}:
\batitle{Normalized mutual information feature selection}.
\bjtitle{IEEE Transactions on neural networks}
\bvolume{20}(\bissue{2}),
\bfpage{189}--\blpage{201}
(\byear{2009})
\end{barticle}
\endbibitem

\bibitem[\protect\citeauthoryear{Carreira-Perpin{\'a}n}{2006}]{fastgbms}
\begin{bchapter}
\bauthor{\bsnm{Carreira-Perpin{\'a}n}, \binits{M.A.}}:
\bctitle{Fast nonparametric clustering with gaussian blurring mean-shift}.
In: \bbtitle{Proceedings of the 23rd International Conference on Machine Learning},
pp. \bfpage{153}--\blpage{160}
(\byear{2006})
\end{bchapter}
\endbibitem

\bibitem[\protect\citeauthoryear{Reynolds}{2015}]{gmm}
\begin{bchapter}
\bauthor{\bsnm{Reynolds}, \binits{D.}}:
\bctitle{Gaussian mixture models}.
In: \bbtitle{Encyclopedia of Biometrics},
pp. \bfpage{827}--\blpage{832}.
\bpublisher{Springer}, \blocation{???}
(\byear{2015})
\end{bchapter}
\endbibitem

\bibitem[\protect\citeauthoryear{Kumar et~al.}{2023}]{quickmeanshift}
\begin{bchapter}
\bauthor{\bsnm{Kumar}, \binits{A.}},
\bauthor{\bsnm{Das}, \binits{S.}},
\bauthor{\bsnm{Mallipeddi}, \binits{R.}}:
\bctitle{Ueqms: Umap embedded quick mean shift algorithm for high dimensional clustering}.
In: \bbtitle{Proceedings of the AAAI Conference on Artificial Intelligence},
vol. \bseriesno{37},
pp. \bfpage{8386}--\blpage{8395}
(\byear{2023})
\end{bchapter}
\endbibitem

\bibitem[\protect\citeauthoryear{Kumar et~al.}{2022}]{gridshift}
\begin{bchapter}
\bauthor{\bsnm{Kumar}, \binits{A.}},
\bauthor{\bsnm{Ajani}, \binits{O.S.}},
\bauthor{\bsnm{Das}, \binits{S.}},
\bauthor{\bsnm{Mallipeddi}, \binits{R.}}:
\bctitle{Gridshift: A faster mode-seeking algorithm for image segmentation and object tracking}.
In: \bbtitle{Proceedings of the IEEE/CVF Conference on Computer Vision and Pattern Recognition},
pp. \bfpage{8131}--\blpage{8139}
(\byear{2022})
\end{bchapter}
\endbibitem

\bibitem[\protect\citeauthoryear{Xavier}{2010}]{hsfc}
\begin{barticle}
\bauthor{\bsnm{Xavier}, \binits{A.E.}}:
\batitle{The hyperbolic smoothing clustering method}.
\bjtitle{Pattern Recognition}
\bvolume{43}(\bissue{3}),
\bfpage{731}--\blpage{737}
(\byear{2010})
\end{barticle}
\endbibitem

\bibitem[\protect\citeauthoryear{Martin et~al.}{2002}]{martinbsds}
\begin{botherref}
\oauthor{\bsnm{Martin}, \binits{D.}},
\oauthor{\bsnm{Fowlkes}, \binits{C.}},
\oauthor{\bsnm{Malik}, \binits{J.}}:
Berkeley segmentation dataset
(2002)
\end{botherref}
\endbibitem

\bibitem[\protect\citeauthoryear{Everingham et~al.}{2015}]{everingham}
\begin{barticle}
\bauthor{\bsnm{Everingham}, \binits{M.}},
\bauthor{\bsnm{Eslami}, \binits{S.A.}},
\bauthor{\bsnm{Van~Gool}, \binits{L.}},
\bauthor{\bsnm{Williams}, \binits{C.K.}},
\bauthor{\bsnm{Winn}, \binits{J.}},
\bauthor{\bsnm{Zisserman}, \binits{A.}}:
\batitle{The pascal visual object classes challenge: A retrospective}.
\bjtitle{International journal of computer vision}
\bvolume{111}(\bissue{1}),
\bfpage{98}--\blpage{136}
(\byear{2015})
\end{barticle}
\endbibitem

\bibitem[\protect\citeauthoryear{Carpineto and Romano}{2012}]{pri}
\begin{barticle}
\bauthor{\bsnm{Carpineto}, \binits{C.}},
\bauthor{\bsnm{Romano}, \binits{G.}}:
\batitle{Consensus clustering based on a new probabilistic rand index with application to subtopic retrieval}.
\bjtitle{IEEE Transactions on Pattern Analysis and Machine Intelligence}
\bvolume{34}(\bissue{12}),
\bfpage{2315}--\blpage{2326}
(\byear{2012})
\end{barticle}
\endbibitem

\bibitem[\protect\citeauthoryear{Meil{\u{a}}}{2003}]{voi}
\begin{bchapter}
\bauthor{\bsnm{Meil{\u{a}}}, \binits{M.}}:
\bctitle{Comparing clusterings by the variation of information}.
In: \bbtitle{Learning Theory and Kernel Machines: 16th Annual Conference on Learning Theory and 7th Kernel Workshop, COLT/Kernel 2003, Washington, DC, USA, August 24-27, 2003. Proceedings},
pp. \bfpage{173}--\blpage{187}
(\byear{2003}).
\bcomment{Springer}
\end{bchapter}
\endbibitem

\bibitem[\protect\citeauthoryear{Sokolova et~al.}{2006}]{f1score}
\begin{bchapter}
\bauthor{\bsnm{Sokolova}, \binits{M.}},
\bauthor{\bsnm{Japkowicz}, \binits{N.}},
\bauthor{\bsnm{Szpakowicz}, \binits{S.}}:
\bctitle{Beyond accuracy, f-score and roc: a family of discriminant measures for performance evaluation}.
In: \bbtitle{Australasian Joint Conference on Artificial Intelligence},
pp. \bfpage{1015}--\blpage{1021}
(\byear{2006}).
\bcomment{Springer}
\end{bchapter}
\endbibitem

\bibitem[\protect\citeauthoryear{Boumal}{2020}]{boumal2020intromanifold}
\begin{bbook}
\bauthor{\bsnm{Boumal}, \binits{N.}}:
\bbtitle{An Introduction to Optimization on Smooth Manifolds}.
\bpublisher{Cambridge University Press}, \blocation{???}
(\byear{2020})
\end{bbook}
\endbibitem

\bibitem[\protect\citeauthoryear{Ungar}{2008}]{ungar2008analytic}
\begin{bbook}
\bauthor{\bsnm{Ungar}, \binits{A.A.}}:
\bbtitle{Analytic Hyperbolic Geometry and Albert Einstein's Special Theory of Relativity}.
\bpublisher{World Scientific}, \blocation{???}
(\byear{2008})
\end{bbook}
\endbibitem

\bibitem[\protect\citeauthoryear{Ganea et~al.}{2018}]{ganea2018hyperbolic}
\begin{bchapter}
\bauthor{\bsnm{Ganea}, \binits{O.-E.}},
\bauthor{\bsnm{Bécigneul}, \binits{G.}},
\bauthor{\bsnm{Hofmann}, \binits{T.}}:
\bctitle{Hyperbolic neural networks}.
In: \bbtitle{Advances in Neural Information Processing Systems (NeurIPS)},
pp. \bfpage{5345}--\blpage{5355}
(\byear{2018})
\end{bchapter}
\endbibitem

\end{thebibliography}

\end{document}